\PassOptionsToPackage{numbers, compress}{natbib}
\PassOptionsToPackage{table}{xcolor}
\documentclass{article}

\usepackage[final,main]{neurips_2026}
\makeatletter
\renewcommand{\@noticestring}{}
\makeatother

\usepackage[utf8]{inputenc}
\usepackage[T1]{fontenc}
\usepackage{hyperref}
\usepackage{url}
\usepackage{booktabs}
\usepackage{amsfonts}
\usepackage{nicefrac}
\usepackage{microtype}
\usepackage{xcolor}
\usepackage{pifont}
\usepackage{enumitem}

\usepackage{amsmath, amssymb, amsfonts, amsthm}
\usepackage{graphicx}
\usepackage{threeparttable}
\usepackage[ruled,vlined,linesnumbered]{algorithm2e}
\SetKwComment{Comment}{/* }{ */}
\usepackage{caption}
\usepackage{subcaption}
\usepackage{multirow}
\usepackage{makecell}
\usepackage{float}

\newtheorem{theorem}{Theorem}[section]

\newtheorem{proposition}[theorem]{Proposition}

\newtheorem{definition}[theorem]{Definition}

\newtheorem{remark}[theorem]{Remark}

\definecolor{DDSRad}{HTML}{D62728}
\definecolor{dtanh}{HTML}{FF7F0E}
\definecolor{BoxPre}{HTML}{1F77B4}
\definecolor{QP}{HTML}{A52A2A}
\definecolor{SRadQP}{HTML}{9467BD}
\definecolor{SRadStrict}{HTML}{2CA02C}

\definecolor{1e-4}{HTML}{D62728}
\definecolor{1e-3}{HTML}{E6550D}
\definecolor{1e-2}{HTML}{FDAE6B}
\definecolor{1e-1}{HTML}{31A354}
\definecolor{1e0}{HTML}{3182BD}

\title{\textbf{Constraint-Enhanced Reinforcement Learning Based on Dynamic Decoupled Spherical Radial Squashing}}

\author{
	Qijun Liao$^{1}$ \quad Zhaoxin Yu$^{2}$ \quad Jue Yang$^{1}$\thanks{Corresponding author: \href{mailto:yangjue@ustb.edu.cn}{yangjue@ustb.edu.cn}} \\[4pt]
	\small $^{1}$School of Mechanical Engineering, University of Science and Technology Beijing \\
	\small $^{2}$Institute of Automation, Chinese Academy of Sciences
}

\begin{document}
	
	\maketitle
	
	\begin{abstract}
		When deploying reinforcement learning policies to physical robots, actuator rate constraints---hard limits on how fast each joint can move per control step---are unavoidable. These limits vary substantially across joints due to differences in motor inertia, power bandwidth, and transmission stiffness, creating pronounced heterogeneity that existing methods fail to handle geometrically: the per-joint feasible region forms a high-dimensional box in action-increment space, yet QP projection and spherical parameterization methods impose isotropic ball-shaped constraints, exponentially under-covering the true feasible set as heterogeneity grows. This paper proposes Dynamic Decoupled Spherical Radial Squashing (DD-SRad), which resolves this mismatch by computing a position-adaptive radius independently for each actuator, achieving tight alignment with the true per-joint feasible region. DD-SRad satisfies per-step hard constraints with probability~1, preserves well-conditioned gradients throughout training, and admits exact policy gradient backpropagation with zero runtime solver overhead. MuJoCo benchmark experiments demonstrate the highest task return at zero constraint violation---matching the unconstrained upper bound---with 30\%--50\% improvement in constraint-space coverage over spherical baselines. High-fidelity IsaacLab simulations with Unitree H1 and G1 humanoid robots confirm end-to-end optimality parameterized directly from official joint specifications, validating a systematic pathway from hardware datasheets to safe deployment. Code is publicly available at \url{https://anonymous.4open.science/r/DD-SRad}.
	\end{abstract}
	
	\section{Introduction}
	\label{sec:intro}
	Reinforcement learning has become a powerful tool for training robotic control policies, enabling legged robots and manipulators to acquire agile behaviors through large-scale simulation. However, deploying these policies on physical hardware requires more than high task rewards: the action sequence produced by the policy must be executable by real actuators at every control step. A fundamental requirement is satisfying actuator rate constraints, namely hard limits on how much each joint's position command can change per control step, imposed by motor inertia, power amplifier bandwidth, and transmission stiffness. Violating these constraints can cause commands to be silently discarded, trigger overcurrent protection, or even lead to permanent joint damage. Critically, these limits differ substantially across joints: for the Unitree H1 humanoid, hip joints allow 2.2$\times$ the rate of ankle joints, while industrial six-axis manipulators can exceed a 5:1 proximal-to-distal ratio~\cite{wensing2017proprioceptive}. This heterogeneity across actuators is not a design defect, but a structural reality that any RL algorithm targeting real-world deployment must address.
	
	Heterogeneous per-step rate constraints impose a structure that existing approaches fail to handle simultaneously across three dimensions. The constraint $|a_t^i - a_{t-1}^i| \leq \delta^i$---where $a_t^i$ denotes the $i$-th joint's position command at step $t$ and $\delta^i > 0$ its per-joint rate limit---defines a feasible set $\mathcal{F}_t = \prod_i [a_{t-1}^i - \delta^i,\, a_{t-1}^i + \delta^i]$ in action-increment space: an $\ell_\infty$ hyperrectangle whose side lengths $2\delta^i$ vary across dimensions and drift with each control step. Yet each major solution family sacrifices a different desideratum: hierarchical MPC/QP methods incur runtime solver overhead and training-deployment inconsistency, precluding end-to-end optimization~\cite{grandia2023perceptive,bledt2018cheetah,kim2019highly,pandala2022robust,williams2017information,nagabandi2018neural}; CMDP-based safe RL methods offer only expected-form guarantees that permit transient per-step violations~\cite{achiam2017constrained,tessler2018reward,zhang2020first,yang2022constrained,zhang2022P3O,altman1999constrained}; and action parameterization methods either impose isotropic $\ell_2$-ball constraints that compress the reachable set under heterogeneous $\delta^i$~\cite{kasaura2023benchmarking}, or assume a static feasible set incompatible with the per-step drift of $\mathcal{F}_t$~\cite{brahmanage2023flowpg,hung2025aram,stolz2025truncated}. Under heterogeneous constraints, the reachable-set volume of $\ell_2$ spherical parameterization is only $(\min_i\delta^i)^d / \prod_i\delta^i$ of the $\ell_\infty$ feasible set---a ratio that degrades exponentially with dimension and heterogeneity. No existing method simultaneously achieves hard per-step constraint satisfaction, exact $\ell_\infty$ coverage of $\mathcal{F}_t$, and end-to-end training without runtime solver overhead. Table~\ref{tab:method_comparison} summarizes this comparison systematically.
	
	\begin{table}[!t]
		\caption{Comparison of Rate-Constraint Handling Methods.}
		\label{tab:method_comparison}
		\centering
		\scriptsize
		\setlength{\tabcolsep}{2.0pt}
		\renewcommand{\arraystretch}{1.08}
		\resizebox{\columnwidth}{!}{%
			\begin{tabular}{lcccccc}
				\toprule
				\textbf{Method}
				& \begin{tabular}{c}\textbf{Hard}\\\textbf{Per-Step}\\\textbf{Guarantee}\end{tabular}
				& \begin{tabular}{c}\textbf{Heterogeneous}\\\textbf{Rate-Limit}\\\textbf{Support}\end{tabular}
				& \begin{tabular}{c}\textbf{Exact}\\\textbf{Box-Shaped}\\\textbf{Geometry}\end{tabular}
				& \begin{tabular}{c}\textbf{Time-Varying}\\\textbf{Feasible Set}\end{tabular}
				& \begin{tabular}{c}\textbf{No Runtime}\\\textbf{Solver}\end{tabular}
				& \begin{tabular}{c}\textbf{Off-Policy}\\\textbf{End-to-End}\end{tabular} \\
				\midrule
				RL+MPC/QP~\cite{grandia2023perceptive,bledt2018cheetah}
				& {\color{green}$\checkmark$} & Partial & {\color{red}\ding{55}} & {\color{green}$\checkmark$} & {\color{red}\ding{55}} & {\color{red}\ding{55}} \\
				
				CMDP (CPO~\cite{achiam2017constrained}, FOCOPS~\cite{zhang2020first})
				& {\color{red}\ding{55}} & {\color{red}\ding{55}} & -- & -- & {\color{green}$\checkmark$} & {\color{green}$\checkmark$} \\
				
				TN-SAC~\cite{stolz2025truncated}
				& {\color{green}$\checkmark$} & {\color{green}$\checkmark$} & {\color{green}$\checkmark$} & {\color{red}\ding{55}} & {\color{green}$\checkmark$} & {\color{green}$\checkmark$} \\
				
				SRad~\cite{kasaura2023benchmarking}
				& {\color{green}$\checkmark$} & {\color{red}\ding{55}} & {\color{red}\ding{55}} & {\color{red}\ding{55}} & {\color{green}$\checkmark$} & {\color{green}$\checkmark$} \\
				
				BoxPre+~\cite{kasaura2023benchmarking}
				& {\color{green}$\checkmark$} & {\color{green}$\checkmark$} & {\color{red}\ding{55}} & {\color{green}$\checkmark$} & {\color{red}\ding{55}} & {\color{green}$\checkmark$} \\
				
				FlowPG~\cite{brahmanage2023flowpg} / ARAM~\cite{hung2025aram}
				& {\color{green}$\checkmark$} & {\color{green}$\checkmark$} & {\color{green}$\checkmark$} & {\color{red}\ding{55}} & {\color{green}$\checkmark$} & {\color{red}\ding{55}} \\
				
				\textbf{DD-SRad (Ours)}
				& {\color{green}$\checkmark$} & {\color{green}$\checkmark$} & {\color{green}$\checkmark$} & {\color{green}$\checkmark$} & {\color{green}$\checkmark$} & {\color{green}$\checkmark$} \\
				\bottomrule
		\end{tabular}}
		\vspace{0.5mm}
		
		\begin{minipage}{\columnwidth}
			\scriptsize
			\textit{Hard Per-Step Guarantee}: every control step satisfies the actuator rate limits.
			\textit{Heterogeneous Rate-Limit Support}: different joints may have different rate limits.
			\textit{Time-Varying Feasible Set}: the valid action region changes with the previous action.
		\end{minipage}
	\end{table}
	
	To fill this gap, this research proposes \textbf{Dynamic Decoupled Spherical Radial Squashing (DD-SRad)}, a smooth analytic action parameterization that simultaneously achieves hard per-step constraint satisfaction, exact $\ell_\infty$ coverage of the feasible set, and end-to-end gradient backpropagation with zero runtime solver overhead. Instead of applying a single global radius, DD-SRad computes a \emph{position-adaptive effective radius} $R_{\text{eff}}^i$ independently for each action dimension, so that the reachable set tightly covers the full $\ell_\infty$ feasible box $\mathcal{F}_t$ rather than an inscribed sphere. This per-dimension decoupling eliminates the dimensional coupling inherent to $\ell_2$ squashing and provably satisfies per-step hard constraints with probability~1. Because the parameterization is fully smooth and analytic, it admits exact policy gradient backpropagation via the chain rule with zero solver overhead, and plugs directly into off-policy backbones such as SAC and TD3 without architectural changes.
	
	The contributions of this paper are as follows:
	\begin{itemize}[left=10pt]
		\item \textbf{Geometric alignment}: DD-SRad achieves exact $\ell_\infty$ coverage of the feasible set via per-dimension adaptive radius, recovering volume systematically lost by $\ell_2$ baselines.
		\item \textbf{Theoretical guarantees}: This research proves per-step hard constraint satisfaction with probability~1 and establishes Jacobian condition number bounds governed by the per-dimension limits.
		\item \textbf{End-to-end compatibility}: The smooth analytic form supports exact backpropagation with no runtime solver.
		\item \textbf{Empirical validation}: DD-SRad achieves the highest return at zero violation on MuJoCo benchmarks and transfers directly to Unitree H1/G1 locomotion in IsaacLab using official rate specifications.
	\end{itemize}
	
	\section{Methodology}
	\label{sec:methodology}
	
	\subsection{Preliminaries}
	
	\paragraph{Spherical Radial Squashing (SRad).}
	Kasaura et al.~\cite{kasaura2023benchmarking} define the mapping:
	\begin{equation}
		a = c_s + R \cdot \frac{u}{\sqrt{1 + \|u\|^2}}
		\label{eq:srad}
	\end{equation}
	where $u \in \mathbb{R}^d$ is the latent action (network output), $c_s$ is the ball center, and $R > 0$ is the global radius. This mapping strictly guarantees $\|a - c_s\| < R$, with $\nabla_u a$ nonzero everywhere. However, the global radius $R$ induces dimensional coupling: if any dimension approaches the boundary, $R$ must be reduced globally, restricting all other dimensions' exploration. In the differential action space, SRad uses $R = \min_i \delta^i$, giving an $\ell_2$ ball that mismatches the $\ell_\infty$ feasible set geometry; no globally valid and tight $R$ exists under heterogeneous $\boldsymbol{\delta}$. This geometric mismatch motivates DD-SRad's design: computing $R_{\text{eff}}^i$ independently per dimension so that the reachable set aligns exactly with $\mathcal{F}_t$ (Theorem~\ref{thm:exploration}), while the smooth per-dimension Jacobian preserves full gradient information under heterogeneous $\boldsymbol{\delta}$ (Proposition~\ref{prop:gradient}).
	
	This work addresses reinforcement learning with rate constraints, where the state space is $\mathcal{S} \subseteq \mathbb{R}^n$, the action space is $\mathcal{A} = \prod_{i=1}^d [a_{\min}^i, a_{\max}^i]$, the reward function is $r: \mathcal{S} \times \mathcal{A} \to \mathbb{R}$, and the transition dynamics are $P(s_{t+1} | s_t, a_t)$. The rate constraints require:
	\begin{equation}
		|a_t^i - a_{t-1}^i| \leq \delta^i, \quad \forall i \in \{1, \ldots, d\},\ \forall t \geq 1
		\label{eq:rate_constraint}
	\end{equation}
	where $\boldsymbol{\delta} = [\delta^1, \ldots, \delta^d] \in \mathbb{R}_{++}^d$, allowing significant heterogeneity ($\delta^i \neq \delta^j$). State augmentation $\tilde{s}_t=(s_t,a_{t-1})$ resolves the induced non-Markovianity (Appendix~\ref{app:proof:augmented_mdp}). Standard conditions assumed: bounded $r$, compact $\mathcal{A}$, Lipschitz $P(\cdot\mid s,a)$.
	
	\subsection{Dynamic Decoupled Spherical Radial Squashing}
	\label{sec:decoupled}
	
	\subsubsection{The DD-SRad Mapping}
	
	\begin{definition}[Per-Dimension Effective Radius]
		\label{def:reff}
		Given per-dimension rate limits $\boldsymbol{\delta} \in \mathbb{R}_{++}^d$, previous action $a_{\text{prev}} \in \mathcal{A}$, and position bounds $[a_{\min}^i, a_{\max}^i]$, define the effective radius for dimension $i$ as:
		\begin{equation}
			R_{\text{eff}}^i(u^i, a_{\text{prev}}^i) = \begin{cases}
				\min\!\left(\delta^i,\ a_{\max}^i - a_{\text{prev}}^i\right), & \text{if } u^i > 0 \\
				\min\!\left(\delta^i,\ a_{\text{prev}}^i - a_{\min}^i\right), & \text{if } u^i < 0 \\
				\delta^i,                                                        & \text{if } u^i = 0
			\end{cases}
			\label{eq:reff}
		\end{equation}
	\end{definition}
	
	\begin{definition}[DD-SRad Mapping]
		Given latent action $u \in \mathbb{R}^d$ and previous action $a_{\text{prev}} \in \mathcal{A}$, the DD-SRad mapping is defined as an independent per-dimension transformation:
		\begin{equation}
			a^i = a_{\text{prev}}^i + R_{\text{eff}}^i(u^i, a_{\text{prev}}^i) \cdot \frac{u^i}{\sqrt{1 + (u^i)^2}}, \quad \forall i \in \{1, \ldots, d\}
			\label{eq:ddsrad}
		\end{equation}
		In vector form:
		\begin{equation}
			\mathbf{a} = \mathbf{a}_{\text{prev}} + \mathbf{R}_{\text{eff}} \odot \mathbf{f}(\mathbf{u})
			\label{eq:ddsrad_vec}
		\end{equation}
		where $\odot$ denotes element-wise product, $\mathbf{f}(\mathbf{u})^i = u^i / \sqrt{1 + (u^i)^2}$ is the per-dimension spherical squashing function, and $\mathbf{R}_{\text{eff}} = [R_{\text{eff}}^1, \ldots, R_{\text{eff}}^d]^\top$.
	\end{definition}
	
	\begin{remark}[Geometric Comparison with SRad and $\ell_\infty$ Projection Methods]
		SRad constructs an $\ell_2$ ball with global radius $R=\min_i\delta^i$; DD-SRad's reachable set is the $\ell_\infty$ hyperrectangle $\prod_i[a_{\text{prev}}^i-R_{\text{eff}}^i,\,a_{\text{prev}}^i+R_{\text{eff}}^i]$ (volume ratio in Theorem~\ref{thm:exploration}). Compared to $\ell_\infty$ clip, DD-SRad has the same reachable set but maintains a smooth Jacobian near the boundary; clip truncates gradients at constraint boundaries, losing directional information during policy updates.
	\end{remark}
	
	As a numerical safety redundancy, optional global $\ell_2$ clipping scales the action increment $\Delta\mathbf{a} = \mathbf{a} - \mathbf{a}_{\text{prev}}$ after per-dimension squashing:
	\begin{equation}
		\widetilde{\Delta \mathbf{a}} = \Delta \mathbf{a} \cdot \min\!\left(1,\ \frac{\|\boldsymbol{\delta}\|_2}{\|\Delta \mathbf{a}\|_2}\right)
		\label{eq:clip_l2}
	\end{equation}
	The trigger rate is below $0.1\%$; per-dimension squashing provides the primary guarantee.
	
	\subsection{Theoretical Guarantees}
	\label{sec:theory}
	
	Proofs are deferred to Appendix~\ref{app:proofs}.
	
	\subsubsection{Hard Constraint Satisfaction}
	
	\begin{theorem}[Heterogeneous Rate Constraint Satisfaction]
		\label{thm:constraint}
		For any $a_{\text{prev}} \in \mathcal{A}$, any latent action $u \in \mathbb{R}^d$, and any per-dimension rate limits $\boldsymbol{\delta} \in \mathbb{R}_{++}^d$, the DD-SRad mapping~(\ref{eq:ddsrad}) guarantees:
		\begin{equation}
			|a^i - a_{\text{prev}}^i| \leq \delta^i \quad \text{and} \quad a^i \in [a_{\min}^i, a_{\max}^i], \quad \forall i,\ \text{with probability } 1
		\end{equation}
		Furthermore, when $a_{\text{prev}}^i \in (a_{\min}^i, a_{\max}^i)$, both constraints hold with strict inequalities.
	\end{theorem}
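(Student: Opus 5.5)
The argument is purely deterministic and runs coordinatewise, since the DD-SRad mapping~(\ref{eq:ddsrad}) acts independently on each dimension. The phrase ``with probability~1'' then follows immediately: whatever distribution the policy induces on the latent action $u$, the bounds will hold pointwise for every realization. Throughout I fix $i$ and write $g(u^i)=u^i/\sqrt{1+(u^i)^2}$.

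The first step records the elementary facts about $g$. We have $|g(x)|<1$ for all $x\in\mathbb{R}$, and $\operatorname{sign} g(x)=\operatorname{sign} x$. Next, since $a_{\text{prev}}\in\mathcal{A}$, both $a_{\max}^i-a_{\text{prev}}^i$ and $a_{\text{prev}}^i-a_{\min}^i$ are nonnegative. It follows from Definition~\ref{def:reff} that $0\le R_{\text{eff}}^i\le\delta^i$ in every branch.

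The second step splits into three cases on the sign of $u^i$.

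\emph{Case $u^i=0$.} Then $a^i=a_{\text{prev}}^i$, and both constraints hold trivially, strictly so when $a_{\text{prev}}^i$ is interior, since $\delta^i>0$.

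\emph{Case $u^i>0$.} The increment is $0<R_{\text{eff}}^i g(u^i)<R_{\text{eff}}^i$, or it equals $0$ if $R_{\text{eff}}^i=0$. The first claim follows from
\[
|a^i-a_{\text{prev}}^i|\le R_{\text{eff}}^i\le\delta^i.
\]
For the second, $a^i\ge a_{\text{prev}}^i\ge a_{\min}^i$, and
\[
a^i\le a_{\text{prev}}^i+R_{\text{eff}}^i\le a_{\text{prev}}^i+(a_{\max}^i-a_{\text{prev}}^i)=a_{\max}^i .
\]

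\emph{Case $u^i<0$.} This is symmetric, using the $a_{\text{prev}}^i-a_{\min}^i$ branch.

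The third step proves the strict version. If $a_{\text{prev}}^i\in(a_{\min}^i,a_{\max}^i)$, then the relevant $R_{\text{eff}}^i$ is strictly positive, because it is a minimum of two positive numbers. Combined with $|g(u^i)|<1$, this gives $|a^i-a_{\text{prev}}^i|<R_{\text{eff}}^i\le\delta^i$, and the position bound becomes $a^i<a_{\max}^i$, or respectively $a^i>a_{\min}^i$. The bound on the side opposite the direction of motion is strict because $a^i$ moves away from it: for example, when $u^i>0$ we have $a^i\ge a_{\text{prev}}^i>a_{\min}^i$.

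The only delicate point is the boundary case $a_{\text{prev}}^i\in\{a_{\min}^i,a_{\max}^i\}$ with $u^i$ pushing outward. There $R_{\text{eff}}^i=0$, so $a^i=a_{\text{prev}}^i$ and only the non-strict statement holds. This explains why strictness is claimed only for interior $a_{\text{prev}}^i$. Finally, I would remark that the optional $\ell_2$ clip~(\ref{eq:clip_l2}) only rescales $\Delta\mathbf{a}$ by a factor in $[0,1]$ toward $a_{\text{prev}}$. Each coordinate therefore stays in the interval between $a_{\text{prev}}^i$ and $a^i$, and both bounds are preserved.
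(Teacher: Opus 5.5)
Your proof is correct and follows the paper's argument essentially step for step. Both split each coordinate on the sign of $u^i$, use $|u^i/\sqrt{1+(u^i)^2}|<1$ together with $R_{\text{eff}}^i \le \min(\delta^i,\text{distance to the bound in the direction of motion})$, and obtain strictness from $R_{\text{eff}}^i>0$ when $a_{\text{prev}}^i$ is interior; your refinements are also correct, namely noting that the bounds hold pointwise for every $u$ (so the paper's appeal to $u^i=0$ being a null event is unnecessary), justifying strictness on the side opposite the motion explicitly, and checking that the optional $\ell_2$ clip preserves both bounds.
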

	\begin{proof}
		See Appendix~\ref{app:proof:constraint}.
	\end{proof}
	
	\subsubsection{Gradient Non-degeneracy}
	
	The spherical squashing function saturates like $\tanh$: as $|u^i|\to\infty$, the Jacobian decreases toward $0^+$ at rate $O(|u^i|^{-3})$; DD-SRad confines $\|u\|$ within the effective gradient region via $\lambda_{\text{base}}\|u\|^2$ (§\ref{sec:regularization}).
	
	\begin{proposition}[Jacobian Structure and Gradient Norm Bound]
		\label{prop:gradient}
		Let $\kappa \triangleq \max_i \delta^i / \min_i \delta^i \geq 1$. For $u^i \neq 0$ and $a_{\text{prev}}^i \in (a_{\min}^i, a_{\max}^i)$, the Jacobian $J(u) = \partial a/\partial u$ of the DD-SRad mapping satisfies:
		\begin{enumerate}
			\item[(i)] $J(u)$ is a diagonal matrix with $i$-th diagonal entry
			\begin{equation}
				\frac{\partial a^i}{\partial u^i} = \frac{R_{\text{eff}}^i(u^i, a_{\text{prev}}^i)}{\left(1 + (u^i)^2\right)^{3/2}} > 0;
				\label{eq:jacobian}
			\end{equation}
			\item[(ii)] The spectral norm $\|J(u)\|_2 \leq \max_i \delta^i$ holds for all $u \in \mathbb{R}^d$; away from position boundaries ($R_{\text{eff}}^i = \delta^i$) at $u = 0$, the condition number reaches $\kappa$, which is the worst-case upper bound on the condition number;
			\item[(iii)] For the TD3 backbone (without entropy term), the gradient norm of the actor loss satisfies
			\begin{equation}
				\|\nabla_u \mathcal{L}_{\text{actor}}\|_2 \leq \max_i \delta^i \cdot \|\nabla_a Q(\tilde{s}, a)\|_2 + 2\lambda_{\text{base}} \|u\|_2;
				\label{eq:grad_bound}
			\end{equation}
			for the SAC backbone, the entropy regularization term $\alpha\|\nabla_u \log\pi(u|\tilde{s})\|_2$ is additionally included.
		\end{enumerate}
	\end{proposition}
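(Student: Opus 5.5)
Since the DD-SRad mapping~(\ref{eq:ddsrad}) acts coordinatewise, I would begin with part (i). Fix $a_{\text{prev}}$ with $a_{\text{prev}}^i\in(a_{\min}^i,a_{\max}^i)$ and $u^i\neq 0$. On each of the open half-lines $u^i>0$ and $u^i<0$ the effective radius $R_{\text{eff}}^i$ in Definition~\ref{def:reff} does not depend on $u^i$, and $a^i$ does not depend on $u^j$ for $j\neq i$. Hence $\partial a^i/\partial u^j=0$ off the diagonal, and $J(u)$ is diagonal. The diagonal entry is $R_{\text{eff}}^i f'(u^i)$ with $f(x)=x(1+x^2)^{-1/2}$. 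A one-line quotient-rule computation gives $f'(x)=(1+x^2)^{-3/2}$, which yields (\ref{eq:jacobian}). Positivity follows because the interior assumption gives $a_{\max}^i-a_{\text{prev}}^i>0$ and $a_{\text{prev}}^i-a_{\min}^i>0$, so $R_{\text{eff}}^i>0$. I would remark that at $u^i=0$ the one-sided derivatives agree only when both one-sided radii equal $\delta^i$. For this reason (i) is stated for $u^i\neq0$, and the value at $u=0$ in (ii) is to be read as the limit with $R_{\text{eff}}^i=\delta^i$.

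For part (ii), I would use the fact that the spectral norm of a diagonal matrix is its largest absolute diagonal entry. The bounds $R_{\text{eff}}^i\le\delta^i$ and $(1+(u^i)^2)^{-3/2}\le1$ then give $\|J\|_2\le\max_i\delta^i$. Next comes the condition number $\max_i J_{ii}/\min_i J_{ii}$. Away from the position boundaries, at $u=0$, every $J_{ii}=\delta^i$, so the ratio equals $\kappa$ exactly.

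The claim that $\kappa$ is the worst case is the step I expect to be delicate. For general $u$ the ratio $\delta^i(1+(u^j)^2)^{3/2}/(\delta^j(1+(u^i)^2)^{3/2})$ is unbounded as $|u^j|\to\infty$, and position clipping can make $R_{\text{eff}}^i$ arbitrarily small. I would therefore state precisely the regime in which $\kappa$ is the bound: all $R_{\text{eff}}^i=\delta^i$ and the squashing factors equal, e.g.\ $|u^i|$ common across dimensions, as in the regularized region enforced by $\lambda_{\text{base}}\|u\|^2$. In that regime the common factor cancels and the ratio is $\max_i\delta^i/\min_i\delta^i=\kappa$. I would present the claim as the bound attained by the rate-limit heterogeneity alone, noting that the saturation factor is controlled separately by the regularizer.

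For part (iii), I would write the TD3 actor loss in latent form as $\mathcal{L}_{\text{actor}}=-Q(\tilde s,a(u))+\lambda_{\text{base}}\|u\|_2^2$. The chain rule gives $\nabla_u\mathcal{L}=-J(u)^\top\nabla_aQ+2\lambda_{\text{base}}u$. The triangle inequality, submultiplicativity, and $\|J^\top\|_2=\|J\|_2\le\max_i\delta^i$ from (ii) then yield (\ref{eq:grad_bound}). If the network parameters rather than $u$ are the variable, the same bound holds before multiplication by $\partial u/\partial\theta$. For SAC, the loss adds $\alpha\log\pi(u\mid\tilde s)$, which contributes an extra term to the gradient. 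A further application of the triangle inequality adds $\alpha\|\nabla_u\log\pi(u|\tilde s)\|_2$ to the bound, completing the proof.
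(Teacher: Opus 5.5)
Your proofs of (i), the spectral-norm bound in (ii), and (iii) follow the paper's proof essentially line for line. Both use the fact that $R_{\text{eff}}^i$ is constant on each open half-line, then the bounds $R_{\text{eff}}^i\le\delta^i$ and $(1+(u^i)^2)^{-3/2}\le 1$, then the chain rule and triangle inequality, with the SAC entropy gradient added linearly.

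You depart from the paper only on the claim that $\kappa$ is the worst-case condition number, and your caution is warranted. The paper justifies this claim only by asserting that the factors $(1+(u^i)^2)^{3/2}$ never push the ratio of diagonal entries above its value at $u=\mathbf{0}$. That assertion is false even away from the position bounds. Take $d=2$ and $\delta^1=\delta^2=1$ (so $\kappa=1$), with each $a_{\text{prev}}^i$ at distance at least $1$ from both position bounds. At $u=(0.1,10)$ one gets $J_{11}/J_{22}=(101/1.01)^{3/2}=1000$. And, as you note, $R^{\pm,i}$ can be arbitrarily small near the position bounds.

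So the literal worst-case assertion cannot be proved, and a restricted version like yours is the right fix. The regularizer $\lambda_{\text{base}}\|u\|^2$ keeps $\|u\|$ small; it does not make the $|u^i|$ equal. A cleaner replacement is therefore this: away from the position bounds, the condition number of $J(u)$ is at most $\kappa\,(1+\|u\|_\infty^2)^{3/2}$, with the value $\kappa$ attained at $u=\mathbf{0}$.

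One small correction to your remark at $u^i=0$. The one-sided derivatives there are $R^{-,i}$ and $R^{+,i}$, so the map is differentiable at $u^i=0$ exactly when $R^{+,i}=R^{-,i}$. This can also happen with both strictly below $\delta^i$. In the regime of (ii) both equal $\delta^i$, so $J(\mathbf{0})=\mathrm{diag}(\delta^1,\ldots,\delta^d)$ is a genuine derivative, not merely a limit.
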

	\begin{proof}
		See Appendix~\ref{app:proof:gradient}.
	\end{proof}
	
	\begin{remark}
		At $u^i = 0$, the mapping is continuous but non-differentiable, with left/right derivatives $R^{-,i}=\min(\delta^i,a_{\text{prev}}^i-a_{\min}^i)$ and $R^{+,i}=\min(\delta^i,a_{\max}^i-a_{\text{prev}}^i)$; automatic differentiation returns one as a subgradient (both strictly positive). This measure-zero event causes no empirical convergence issues under continuous policy distributions (§\ref{sec:sensitivity}).
	\end{remark}
	
	The mapping is $\|\boldsymbol{\delta}\|_2$-Lipschitz continuous in $u$ (Appendix Proposition~\ref{prop:lipschitz}), ensuring small policy perturbations cause bounded action changes---a standard regularity condition for gradient-based optimization.
	
	\subsubsection{Tight $\ell_\infty$-Coverage of the Reachable Set}
	
	\begin{theorem}[Tight Coverage of the Reachable Set with the $\ell_\infty$ Feasible Set]
		\label{thm:exploration}
		Fix $a_{\text{prev}} \in \mathcal{A}$, and let the joint rate-position feasible set be $\mathcal{F} = \{a \in \mathbb{R}^d : |a^i - a_{\text{prev}}^i| \leq \delta^i\} \cap \mathcal{A}$. The closure of the reachable set of the DD-SRad mapping satisfies
		\begin{equation}
			\overline{\mathcal{R}_{\text{DD}}} = \prod_{i=1}^d \left[a_{\text{prev}}^i - R^{-,i},\, a_{\text{prev}}^i + R^{+,i}\right] = \mathcal{F},
			\label{eq:reach_tight}
		\end{equation}
		with Lebesgue measure $V_{\text{DD}} = \prod_{i=1}^d (R^{-,i}+R^{+,i})$. Relative to $\ell_2$ spherical parameterization (SRad) with global radius $R = \min_i \delta^i$, the volume ratio away from position boundaries ($R_{\text{eff}}^i = \delta^i$) is
		\begin{equation}
			\frac{V_{\text{DD}}}{V_{\text{SRad}}} = \frac{2^d \Gamma(d/2+1)}{\pi^{d/2}} \cdot \frac{\prod_i \delta^i}{(\min_i \delta^i)^d},
			\label{eq:volume_ratio}
		\end{equation}
		where $V_{\text{SRad}} = \frac{\pi^{d/2}}{\Gamma(d/2+1)}(\min_i \delta^i)^d$ is the $d$-dimensional $\ell_2$ ball volume and $\Gamma(\cdot)$ is the Gamma function.
	\end{theorem}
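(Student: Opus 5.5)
Since the DD-SRad mapping~(\ref{eq:ddsrad}) acts independently on each coordinate, the reachable set factorizes: $\mathcal{R}_{\text{DD}} = \prod_{i=1}^d \mathcal{R}^i$, where $\mathcal{R}^i = \{a_{\text{prev}}^i + R_{\text{eff}}^i(u^i,a_{\text{prev}}^i)\, f(u^i) : u^i \in \mathbb{R}\}$ and $f(t)=t/\sqrt{1+t^2}$. The closure of a finite product is the product of the closures, so everything reduces to computing the one-dimensional image $\mathcal{R}^i$ and its closure. Then I identify that product with $\mathcal{F}$, and finally compute the volumes.

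\textbf{One-dimensional image.} I split the domain into $u^i>0$, $u^i<0$ and $u^i=0$. On $u^i>0$ the radius is the constant $R^{+,i}=\min(\delta^i, a_{\max}^i-a_{\text{prev}}^i)\ge 0$. The function $f$ is continuous and strictly increasing from $(0,\infty)$ onto $(0,1)$. Hence this branch maps onto $(a_{\text{prev}}^i, a_{\text{prev}}^i+R^{+,i})$ when $R^{+,i}>0$, and onto the single point $a_{\text{prev}}^i$ when $R^{+,i}=0$. Symmetrically, $u^i<0$ gives $(a_{\text{prev}}^i-R^{-,i}, a_{\text{prev}}^i)$, and $u^i=0$ gives $a_{\text{prev}}^i$. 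The union is an interval with endpoints $a_{\text{prev}}^i\mp R^{-,i}$ and $a_{\text{prev}}^i+R^{+,i}$, open at nondegenerate ends. Its closure is $[a_{\text{prev}}^i-R^{-,i}, a_{\text{prev}}^i+R^{+,i}]$, which establishes the first equality in~(\ref{eq:reach_tight}).

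\textbf{Identification with $\mathcal{F}$.} Here $\mathcal{F}$ is the product of $[a_{\text{prev}}^i-\delta^i, a_{\text{prev}}^i+\delta^i]\cap[a_{\min}^i,a_{\max}^i]$. Since $a_{\text{prev}}\in\mathcal{A}$, each factor equals $[\max(a_{\text{prev}}^i-\delta^i,a_{\min}^i), \min(a_{\text{prev}}^i+\delta^i,a_{\max}^i)]$. Rewriting, this is $[a_{\text{prev}}^i-\min(\delta^i,a_{\text{prev}}^i-a_{\min}^i), a_{\text{prev}}^i+\min(\delta^i,a_{\max}^i-a_{\text{prev}}^i)]$, which is exactly $[a_{\text{prev}}^i-R^{-,i}, a_{\text{prev}}^i+R^{+,i}]$. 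The measure $V_{\text{DD}}=\prod_i(R^{-,i}+R^{+,i})$ is the box volume. Because $\mathcal{R}_{\text{DD}}$ differs from its closure only on the boundary, a null set, the same measure holds for $\mathcal{R}_{\text{DD}}$ itself.

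\textbf{Volume ratio.} Away from position boundaries we have $R^{\pm,i}=\delta^i$, so $V_{\text{DD}}=2^d\prod_i\delta^i$. The SRad image is the open $\ell_2$ ball of radius $\min_i\delta^i$; its surjectivity follows from $u\mapsto u/\sqrt{1+\|u\|^2}$ being a bijection onto the open unit ball. Its volume is therefore the standard $\pi^{d/2}(\min_i\delta^i)^d/\Gamma(d/2+1)$. Dividing gives~(\ref{eq:volume_ratio}).

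\textbf{Main obstacle.} The only delicate point is the degenerate case, where $a_{\text{prev}}^i$ sits on a position bound so that one of $R^{\pm,i}$ is zero. There the branch image collapses to a point and one must check that the closure is still the correct, possibly one-sided, interval. I will handle this explicitly within the case split above. Everything else is routine monotonicity of $f$ and product-set topology.
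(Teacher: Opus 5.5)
Your proposal is correct and follows essentially the same route as the paper. Both arguments compute the image one coordinate at a time, identify the closed box with $\mathcal{F}$ via the rewriting $a_{\text{prev}}^i + R^{+,i} = \min(a_{\text{prev}}^i+\delta^i,\, a_{\max}^i)$ (and its mirror), and then divide the volumes directly. You get surjectivity from continuity and strict monotonicity of $f$ on each half-line, where the paper writes the explicit preimage $u^{*i}=\Delta^{*i}/\sqrt{(R^{+,i})^2-(\Delta^{*i})^2}$. Your explicit treatment of $R^{+,i}=0$ or $R^{-,i}=0$ is also slightly more careful than the paper's Step 1, which claims the image is the open box even though at a position bound that factor is half-open; the closure statement is unaffected.
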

	\begin{proof}
		See Appendix~\ref{app:proof:exploration}.
	\end{proof}
	
	\subsection{Policy Optimization with DD-SRad Parameterization}
	\label{sec:regularization}
	
	Upon embedding the DD-SRad mapping into the off-policy actor network, the actor loss for SAC/TD3 backbones takes the unified form
	\begin{equation}
		\mathcal{L}_{\text{actor}} = \mathbb{E}_{(s,a_{\text{prev}}) \sim \mathcal{D}}\!\left[\alpha \log \pi_\theta(a|\tilde{s}) - Q_\phi(\tilde{s}, a)\right] + \lambda_{\text{base}} \, \mathbb{E}[\|u\|^2],
		\label{eq:actor_loss}
	\end{equation}
	where $\tilde{s} = (s, a_{\text{prev}})$ is the augmented state, $\mathcal{D}$ is the replay buffer, $\alpha \geq 0$ is the SAC entropy temperature, $Q_\phi$ is the Q-network with parameters $\phi$, and $\lambda_{\text{base}} > 0$ is the latent action regularization coefficient; for the TD3 backbone, the entropy term $\alpha = 0$ and Eq.~(\ref{eq:actor_loss}) reduces to the deterministic policy gradient form. For the SAC backbone, since the action $a = \phi_{\tilde{s}}(u)$ is obtained by mapping the latent action $u \sim \mathcal{N}(\mu_\theta(s), \sigma_\theta^2(s))$ through DD-SRad, the policy density is given via the change-of-variables formula under standard reparameterization:
	\begin{equation}
		\log \pi_\theta(a|\tilde{s}) = \log \mathcal{N}(u; \mu_\theta(s), \sigma_\theta^2(s)) - \sum_{i=1}^d \log \frac{\partial a^i}{\partial u^i},
		\label{eq:sac_logprob}
	\end{equation}
	where the per-dimension Jacobian $\partial a^i/\partial u^i = R_{\text{eff}}^i/(1+(u^i)^2)^{3/2}$ is given by Eq.~(\ref{eq:jacobian}), so the log-determinant term admits the closed-form expression
	\begin{equation}
		\sum_{i=1}^d \log \frac{\partial a^i}{\partial u^i} = \sum_{i=1}^d \left[\log R_{\text{eff}}^i(u^i, a_{\text{prev}}^i) - \frac{3}{2}\log\!\left(1 + (u^i)^2\right)\right].
		\label{eq:logdet}
	\end{equation}
	This expression is smooth in $\theta$ (at $u\neq0$) with no additional overhead. The regularization $\lambda_{\text{base}}\|u\|^2$ confines $\|u\|$ within the effective gradient regime---when $\|u\|\gg1$, Jacobian entries $\partial a^i/\partial u^i\to0^+$, degrading gradient transmission---and mitigates cross-dimensional gradient imbalance under heterogeneous constraints (Eq.~\ref{eq:grad_bound}). Robustness of $\lambda_{\text{base}}$ is verified in §\ref{sec:sensitivity}.
	
	The complete procedure is in Algorithm~\ref{alg:ddsrad} (Appendix); DD-SRad, as an independent parameterization layer, is backbone-agnostic: constraint satisfaction is guaranteed by Theorem~\ref{thm:constraint} and gradient backpropagation by Proposition~\ref{prop:gradient}.
	
	\subsection{Gradient Propagation Through Off-Policy Backbones}
	\label{sec:offpolicy}
	
	The DD-SRad parameterization is naturally compatible with off-policy algorithms based on an action-conditioned $Q$ function (SAC~\cite{haarnoja2018soft}, TD3~\cite{fujimoto2018td3}). Taking SAC as an example, the actor gradient is
	\begin{equation}
		\nabla_\theta \mathcal{L}_{\text{actor}} = \mathbb{E}_{\tilde{s}}\!\left[\nabla_a Q(\tilde{s}, a)\big|_{a=\phi_{\tilde{s}}(u_\theta)} \cdot \frac{\partial \phi_{\tilde{s}}}{\partial u} \cdot \frac{\partial u}{\partial \theta}\right],
		\label{eq:offpolicy_grad}
	\end{equation}
	where $\phi_{\tilde{s}}: \mathbb{R}^d \to \mathcal{A}$ denotes the DD-SRad mapping (Eq.~\ref{eq:ddsrad}) conditioned on $a_{\text{prev}}$ in $\tilde{s}$, and $\partial \phi_{\tilde{s}} / \partial u$ is diagonally positive definite (Proposition~\ref{prop:gradient}(i)), propagating all directional information in $\nabla_a Q$ losslessly to $\theta$; TD3 is identical without the entropy term. DD-SRad is thus a plug-and-play parameterization layer for both off-policy families.
	
	For on-policy algorithms (e.g., PPO~\cite{schulman2017proximal}), constraint satisfaction is still guaranteed by the mapping structure and $a_{t-1}$ can be incorporated into the augmented observation; however, the scalar advantage $A_t$ has substantially higher estimation variance under high-$\kappa$ constraints than the per-dimension $\nabla_a Q$ signal, limiting fine-grained per-dimension exploration (see §\ref{sec:limitation} for extension directions).
	
	\section{Experiments}
	\label{sec:experiments}
	
	Three experiment groups verify the quantitative predictions of §\ref{sec:theory}: \S\ref{sec:benchmark} confirms hard constraint satisfaction (Proposition~\ref{prop:gradient}, Theorem~\ref{thm:constraint}) and the performance advantage of smooth Jacobian structure over gradient-truncating baselines; \S\ref{sec:hetero_test} verifies the $\ell_\infty$-coverage prediction of Theorem~\ref{thm:exploration} via per-dimension utilization; \S\ref{sec:sensitivity} validates the $\lambda_{\text{base}}$ sensitivity bound (Proposition~\ref{prop:gradient}(iii)). All runs use 5 seeds; metrics are Return$\uparrow$, Utilization ($\mathbb{E}[\|\Delta a\|_1]/\|\boldsymbol{\delta}\|_1$), and CV$\downarrow$; CVR is reported in the notes when nonzero. Environment: MuJoCo v2.3.7 / Gymnasium v0.29; hyperparameters in Appendix~\ref{app:hyperparams}.
	
	DD-SRad is evaluated on SAC~\cite{haarnoja2018soft} and TD3~\cite{fujimoto2018td3}. Baselines: \textbf{D-Tanh} uses $\tanh$ instead of spherical squashing with the same $R_{\text{eff}}^i$ (isolates Jacobian decay effect); \textbf{BoxPre+} uses critic-aware $\ell_\infty$ clip projection~\cite{kasaura2023benchmarking}; \textbf{Post(QP)} uses constraint-unaware clip post-processing; \textbf{SRad-Strict} sets $R=\min_i\delta^i$ (quantifies $\ell_2$ mismatch); \textbf{SRad-QP} appends QP after SRad.
	
	\subsection{Benchmark Performance}
	\label{sec:benchmark}
	
	Experiments cover four environments---Ant-v5 (8D), Humanoid-v5 (17D), HalfCheetah-v5 (6D), Hopper-v5 (3D)---each tested under wide homogeneous constraints and tight heterogeneous constraints (see Appendix Table~\ref{tab:constraint_params}), with 1.5M--2M training steps.
	
	\renewcommand{\arraystretch}{1}
	\begin{table*}[t]
		\scriptsize
		\centering
		\captionsetup{font=scriptsize, labelfont=scriptsize}
		\caption{Quantitative benchmark summary under tight heterogeneous constraints (mean$\pm$std over 5 seeds; best in \textbf{bold}; $\dagger$ see footnote)}
		\label{tab:benchmark_main}
		
		\textbf{(a) SAC Backbone}\\[0.5mm]
		\begin{tabular}{l ccc ccc}
			\toprule
			& \multicolumn{3}{c}{\textbf{Ant-v5}
				\small($\boldsymbol{\delta}=[0.2^4,\!0.5^4]$, $\kappa=2.5$)}
			& \multicolumn{3}{c}{\textbf{Humanoid-v5}
				\small($\boldsymbol{\delta}=[0.8^6,\!0.5^6,\!0.2^5]$, $\kappa=4.0$)} \\
			\cmidrule(lr){2-4}\cmidrule(lr){5-7}
			\textbf{Method} & Return $\uparrow$ & CV $\downarrow$ & Util
			& Return $\uparrow$ & CV $\downarrow$ & Util \\
			\midrule
			BoxPre+
			& $1998\pm125$           & $\mathbf{6.3}$ & $0.453\pm0.026$
			& $5204\pm176$           & $\mathbf{3.4}$ & $0.159\pm0.033$ \\
			Post(QP)$^\dagger$
			& $1517\pm137$           & $9.0$ & $0.651\pm0.015$
			& $5191\pm346$           & $6.7$ & $0.249\pm0.045$ \\
			SRad-Strict
			& $1448\pm996$           & $68.8$ & $0.141\pm0.029$
			& $2742\pm1350$          & $49.3$ & $0.024\pm0.007$ \\
			SRad-QP
			& $1703\pm232$           & $13.6$ & $0.583\pm0.035$
			& $4558\pm500$           & $11.0$ & $0.053\pm0.024$ \\
			D-Tanh
			& $2719\pm174$           & $6.4$ & $0.570\pm0.037$
			& $4868\pm267$           & $5.5$ & $0.662\pm0.069$ \\
			\rowcolor{blue!10}
			DD-SRad (Ours)
			& $\mathbf{3112\pm788}$  & $25.3$ & $0.492\pm0.051$
			& $\mathbf{5620\pm215}$  & $3.8$ & $0.489\pm0.035$ \\
			
			\midrule
			& \multicolumn{3}{c}{\textbf{HalfCheetah-v5}
				\small($\boldsymbol{\delta}=[0.2^3,\!0.5^3]$, $\kappa=2.5$)}
			& \multicolumn{3}{c}{\textbf{Hopper-v5}
				\small($\boldsymbol{\delta}=[0.2,\!0.5,\!0.5]$, $\kappa=2.5$)} \\
			\cmidrule(lr){2-4}\cmidrule(lr){5-7}
			\textbf{Method} & Return $\uparrow$ & CV $\downarrow$ & Util
			& Return $\uparrow$ & CV $\downarrow$ & Util \\
			\midrule
			BoxPre+
			& $3796\pm59$            & $\mathbf{1.6}$ & $0.600\pm0.016$
			& $2319\pm208$           & $9.0$ & $0.267\pm0.015$ \\
			Post(QP)$^\dagger$
			& $1798\pm246$           & $13.7$ & $0.767\pm0.022$
			& $1278\pm165$           & $12.9$ & $0.353\pm0.014$ \\
			SRad-Strict
			& $1432\pm147$           & $10.3$ & $0.213\pm0.014$
			& $2195\pm735$           & $33.5$ & $0.155\pm0.020$ \\
			SRad-QP
			& $2482\pm433$           & $17.4$ & $0.557\pm0.018$
			& $1624\pm332$           & $20.4$ & $0.370\pm0.009$ \\
			D-Tanh
			& $4205\pm231$           & $5.5$ & $0.662\pm0.069$
			& $2334\pm276$           & $11.8$ & $0.480\pm0.011$ \\
			\rowcolor{blue!10}
			DD-SRad (Ours)
			& $\mathbf{4290\pm682}$  & $15.9$ & $0.626\pm0.095$
			& $\mathbf{2610\pm183}$  & $\mathbf{7.0}$ & $0.461\pm0.007$ \\
			\bottomrule
		\end{tabular}%
		
		\vspace{1mm}
		\scriptsize
		\textbf{(b) TD3 Backbone}\\[0.5mm]
		\begin{tabular}{l ccc ccc}
			\toprule
			& \multicolumn{3}{c}{\textbf{Ant-v5}
				\small($\boldsymbol{\delta}=[0.2^4,\!0.5^4]$, $\kappa=2.5$)}
			& \multicolumn{3}{c}{\textbf{Humanoid-v5}
				\small($\boldsymbol{\delta}=[0.8^6,\!0.5^6,\!0.2^5]$, $\kappa=4.0$)} \\
			\cmidrule(lr){2-4}\cmidrule(lr){5-7}
			\textbf{Method} & Return $\uparrow$ & CV $\downarrow$ & Util
			& Return $\uparrow$ & CV $\downarrow$ & Util \\
			\midrule
			BoxPre+
			& $2917\pm289$           & $9.9$ & $0.518\pm0.018$
			& $5291\pm119$           & $\mathbf{2.3}$ & $0.179\pm0.030$ \\
			Post(QP)$^\dagger$
			& $1998\pm282$           & $14.1$ & $0.689\pm0.029$
			& $4420\pm117$           & $2.6$ & $0.465\pm0.036$ \\
			SRad-Strict
			& $1246\pm376$           & $30.2$ & $0.176\pm0.015$
			& $2270\pm1126$          & $49.6$ & $0.053\pm0.013$ \\
			SRad-QP
			& $2147\pm195$           & $9.1$ & $0.554\pm0.011$
			& $3810\pm332$           & $8.7$ & $0.015\pm0.003$ \\
			D-Tanh
			& $3484\pm781$           & $22.4$ & $0.565\pm0.017$
			& $5313\pm341$           & $6.4$ & $0.184\pm0.039$ \\
			\rowcolor{blue!10}
			DD-SRad (Ours)
			& $\mathbf{4260\pm176}$  & $\mathbf{4.1}$ & $0.548\pm0.021$
			& $\mathbf{5498\pm203}$  & $3.7$ & $0.203\pm0.024$ \\
			
			\midrule
			& \multicolumn{3}{c}{\textbf{HalfCheetah-v5}
				\small($\boldsymbol{\delta}=[0.2^3,\!0.5^3]$, $\kappa=2.5$)}
			& \multicolumn{3}{c}{\textbf{Hopper-v5}
				\small($\boldsymbol{\delta}=[0.2,\!0.5,\!0.5]$, $\kappa=2.5$)} \\
			\cmidrule(lr){2-4}\cmidrule(lr){5-7}
			\textbf{Method} & Return $\uparrow$ & CV $\downarrow$ & Util
			& Return $\uparrow$ & CV $\downarrow$ & Util \\
			\midrule
			BoxPre+
			& $3641\pm681$           & $18.7$ & $0.607\pm0.081$
			& $3256\pm196$           & $\mathbf{6.0}$ & $0.303\pm0.014$ \\
			Post(QP)$^\dagger$
			& $1484\pm1042$          & $70.2$ & $0.695\pm0.060$
			& $849\pm251$            & $29.6$ & $0.487\pm0.023$ \\
			SRad-Strict
			& $1518\pm185$           & $12.2$ & $0.211\pm0.011$
			& $2394\pm322$           & $13.4$ & $0.166\pm0.015$ \\
			SRad-QP
			& $3211\pm128$           & $\mathbf{4.0}$ & $0.582\pm0.027$
			& $1020\pm139$           & $13.6$ & $0.333\pm0.019$ \\
			D-Tanh
			& $4000\pm318$           & $7.9$ & $0.672\pm0.035$
			& $2712\pm228$           & $8.4$ & $0.496\pm0.003$ \\
			\rowcolor{blue!10}
			DD-SRad (Ours)
			& $\mathbf{4329\pm529}$  & $12.2$ & $0.668\pm0.028$
			& $\mathbf{3312\pm384}$  & $11.6$ & $0.462\pm0.023$ \\
			\bottomrule
		\end{tabular}%
		
		\vspace{1.5mm}
		\scriptsize
		\textbf{Notes:}
		CVR\,${=}$\,0 for all parameterization-based methods across all configurations, confirmed by Theorem~\ref{thm:constraint}.
		$^\dagger$\,SAC-Post(QP)/TD3-Post(QP): Critic does not perceive projection;
		its Util is inflated by boundary-dwelling behavior (clip),
		not indicative of superior geometric coverage---
		Util comparison is meaningful only among Critic-aware methods.
	\end{table*}
	\renewcommand{\arraystretch}{1}
	
	Table~\ref{tab:benchmark_main} provides the quantitative summary; wide-homogeneous results are in Appendix Table~\ref{tab:benchmark_wide}. All constrained methods achieve CVR$\,=0$ (Theorem~\ref{thm:constraint}); SAC-Post(QP) shows non-zero CVR under tight configurations.
	
	Under tight heterogeneous constraints, DD-SRad achieves the highest Return across all 8 environment--backbone configurations, matching or exceeding the unconstrained upper bound. D-Tanh shares the same $R_{\text{eff}}^i$ mechanism yet trails by 5\%--14\% under SAC despite higher Util: the $\tanh$ Jacobian decays exponentially in $|u^i|$ versus DD-SRad's polynomial decay (Proposition~\ref{prop:gradient}), compounding with $\kappa$---on Humanoid ($d=17$, $\kappa=4.0$), D-Tanh falls below BoxPre+, while DD-SRad leads.
	
	SRad-Strict exhibits structural collapse (Ant-SAC CV$=68.8\%$; Humanoid-SAC CV$=49.3\%$), confirming the $\ell_2$ isotropic geometry mismatch predicted by Theorem~\ref{thm:exploration}: SRad-Strict ankle utilization ($\approx0.081$) falls far below $\min_j\delta^j/\delta^\text{ankle}=0.40$. BoxPre+ achieves the same $\ell_\infty$ rectangular reachable set as DD-SRad (Theorem~\ref{thm:exploration}) yet trails DD-SRad substantially on Ant-SAC (1998 vs.\ 3112), confirming Proposition~\ref{prop:gradient}: $\ell_\infty$ clip discontinuously truncates $\nabla_a Q$ at constraint boundaries, while DD-SRad's diagonally positive definite Jacobian propagates full directional gradient information to $\theta$. Learning curves and radar charts in Appendix~\ref{app:bench_curves},~\ref{app:radar_benchmark}.
	
	\subsection{Analysis of Action Optimization Under Heterogeneous Constraints}
	\label{sec:hetero_test}
	
	This section verifies Theorem~\ref{thm:exploration} via training diagnostics; Utilization distinguishes policy-driven conservatism from parameterization-forced feasible-set truncation. Figure~\ref{fig:hetero_return_all} confirms no divergence across all 8 configurations under heterogeneous constraints; HalfCheetah degrades most ($\approx40\%$ under SAC) as its locomotion task directly requires large joint outputs.
	
	\begin{figure}[htbp]
		\centering
		\begin{subfigure}{0.24\linewidth}
			\includegraphics[width=\linewidth]{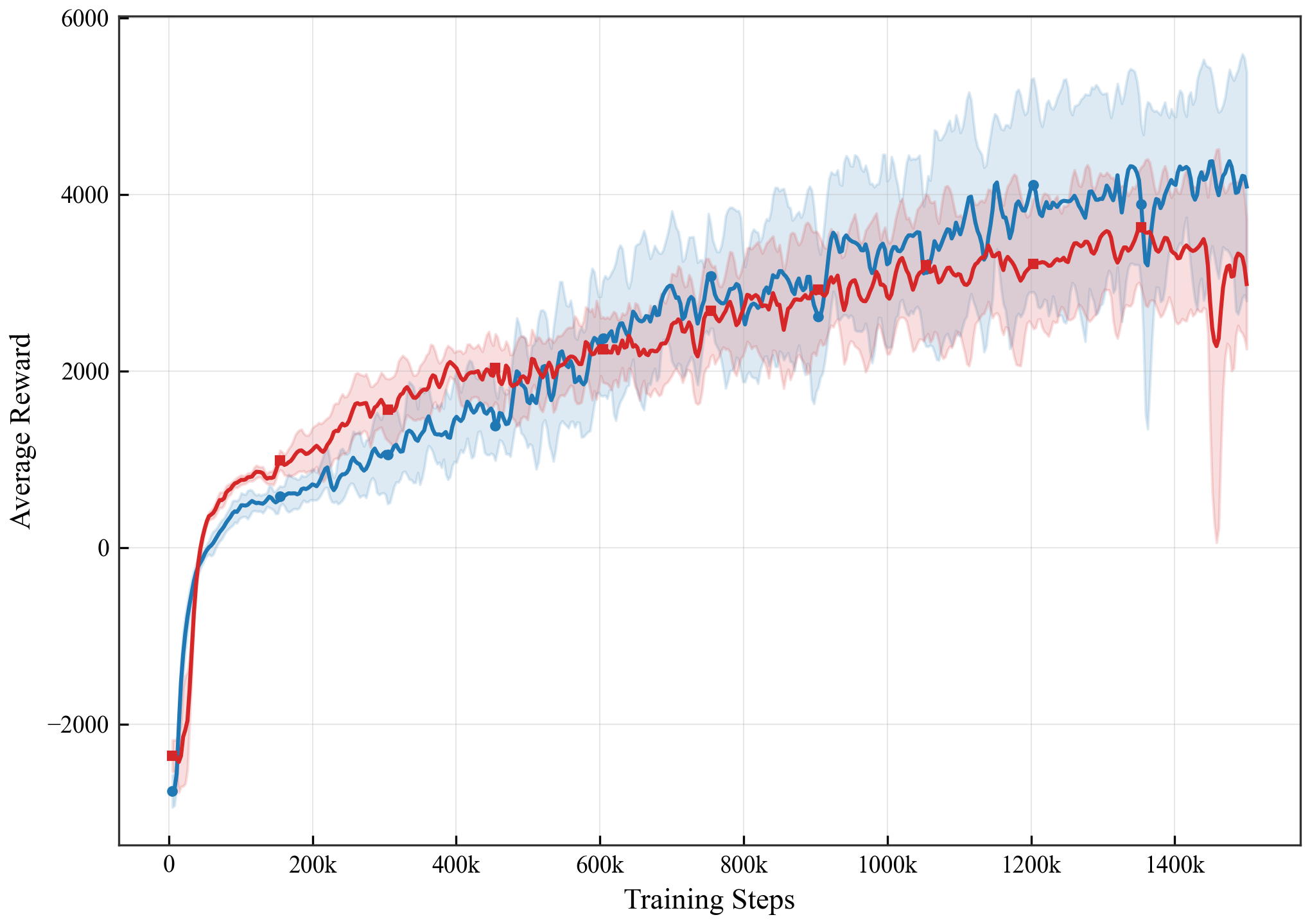}
			\caption{Ant-SAC}
		\end{subfigure}
		\hfill
		\begin{subfigure}{0.24\linewidth}
			\includegraphics[width=\linewidth]{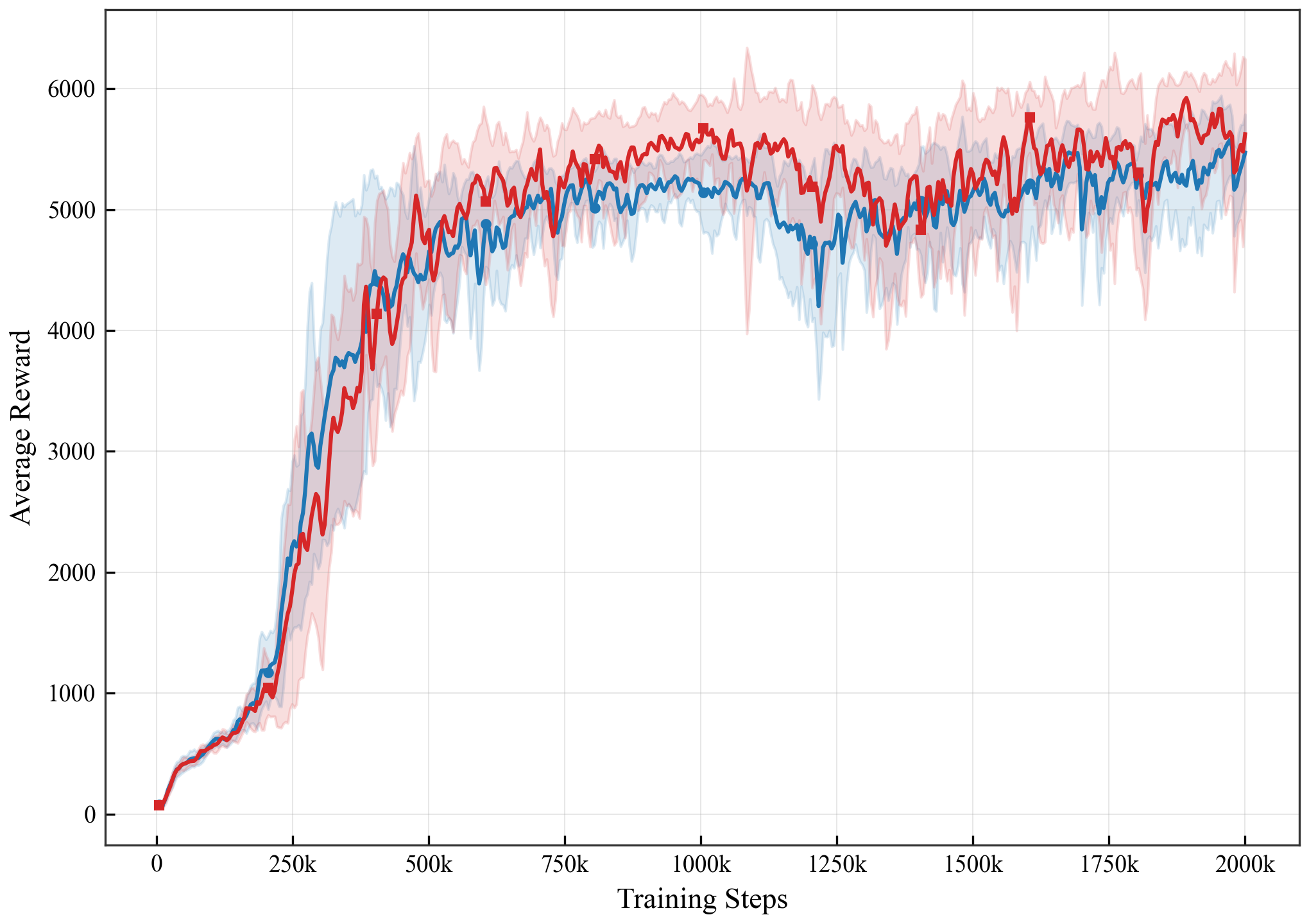}
			\caption{Humanoid-SAC}
		\end{subfigure}
		\vspace{0.5em}
		\begin{subfigure}{0.24\linewidth}
			\includegraphics[width=\linewidth]{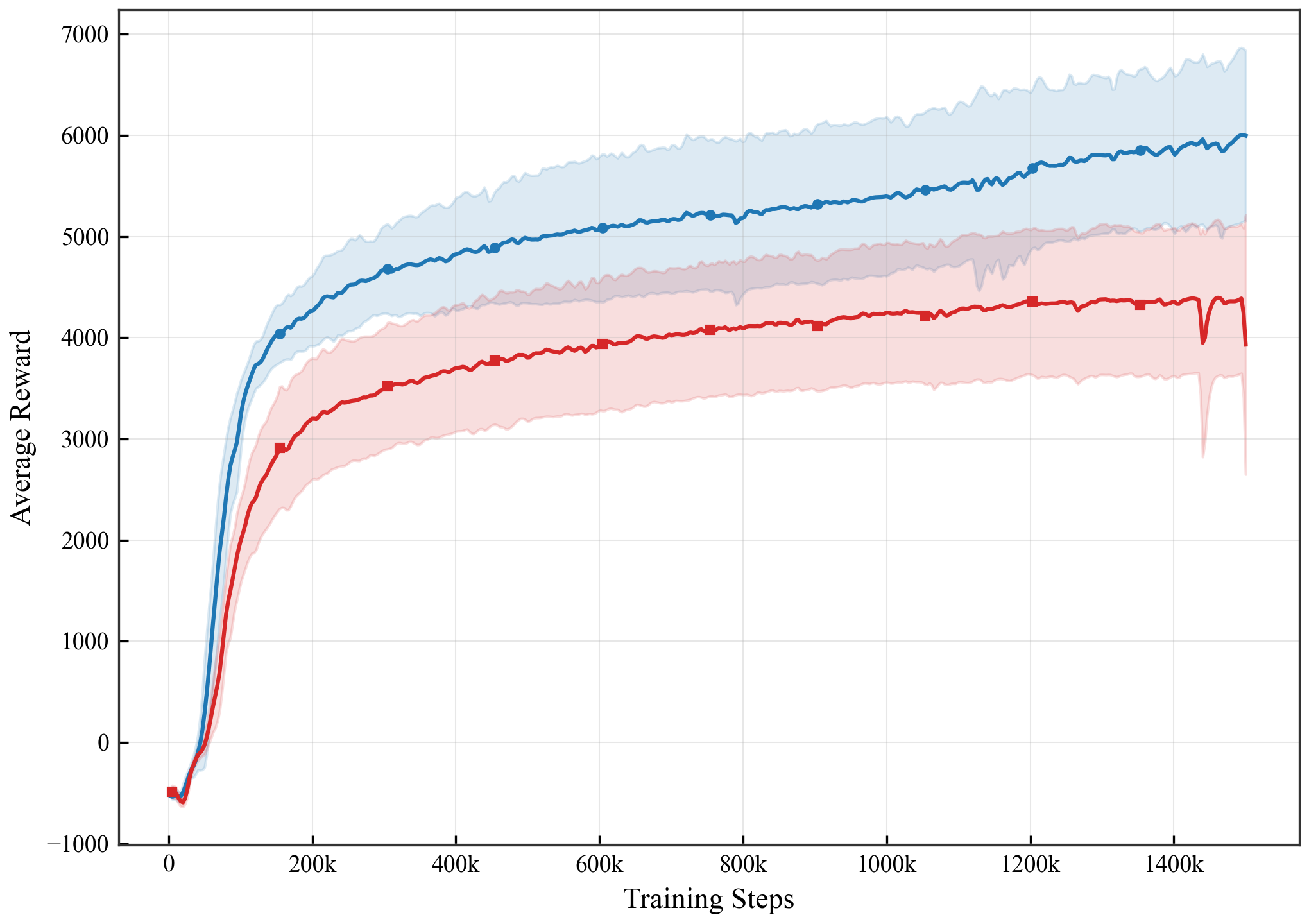}
			\caption{HalfCheetah-SAC}
		\end{subfigure}
		\hfill
		\begin{subfigure}{0.24\linewidth}
			\includegraphics[width=\linewidth]{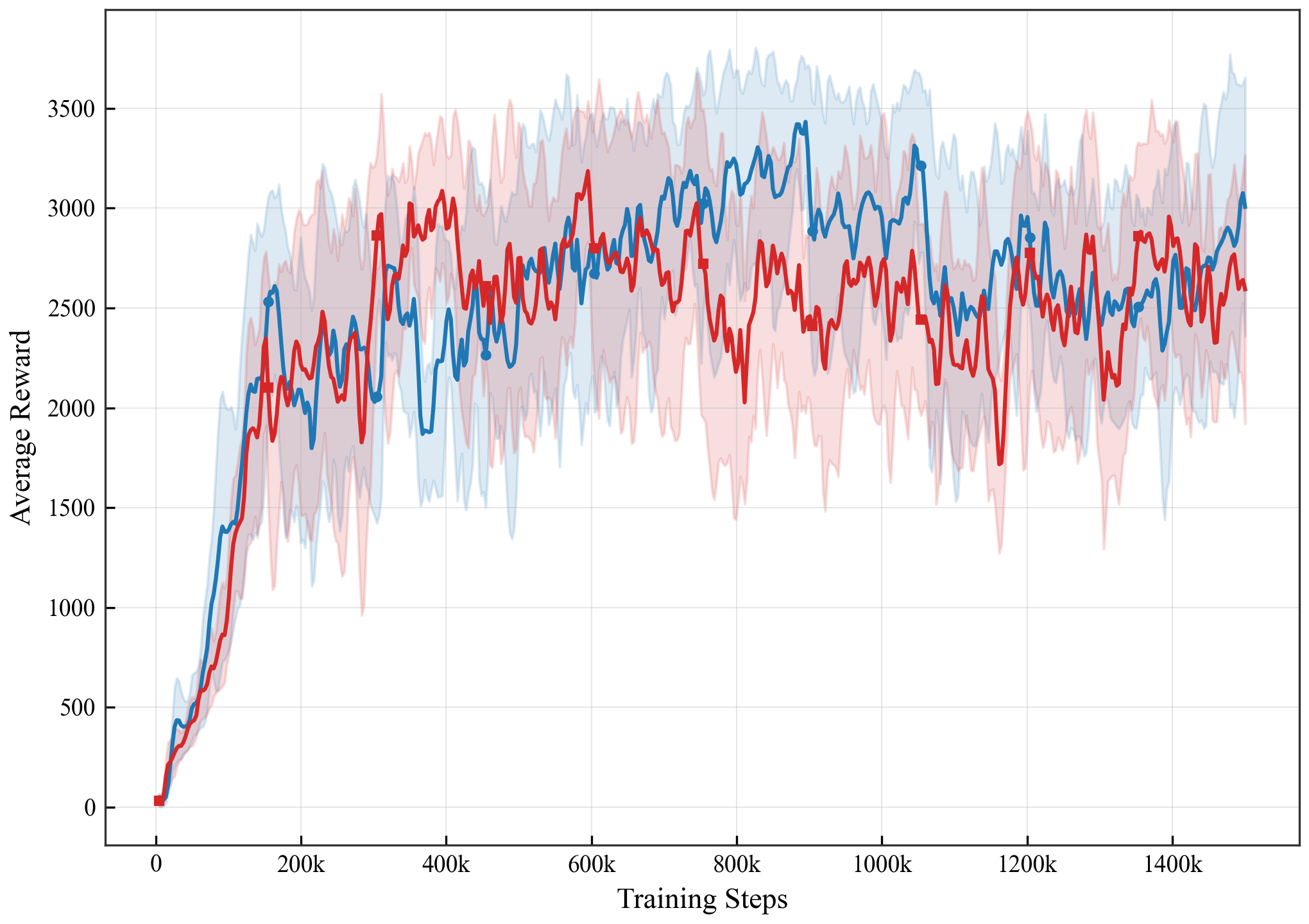}
			\caption{Hopper-SAC}
		\end{subfigure}
		
		\begin{subfigure}{0.24\linewidth}
			\includegraphics[width=\linewidth]{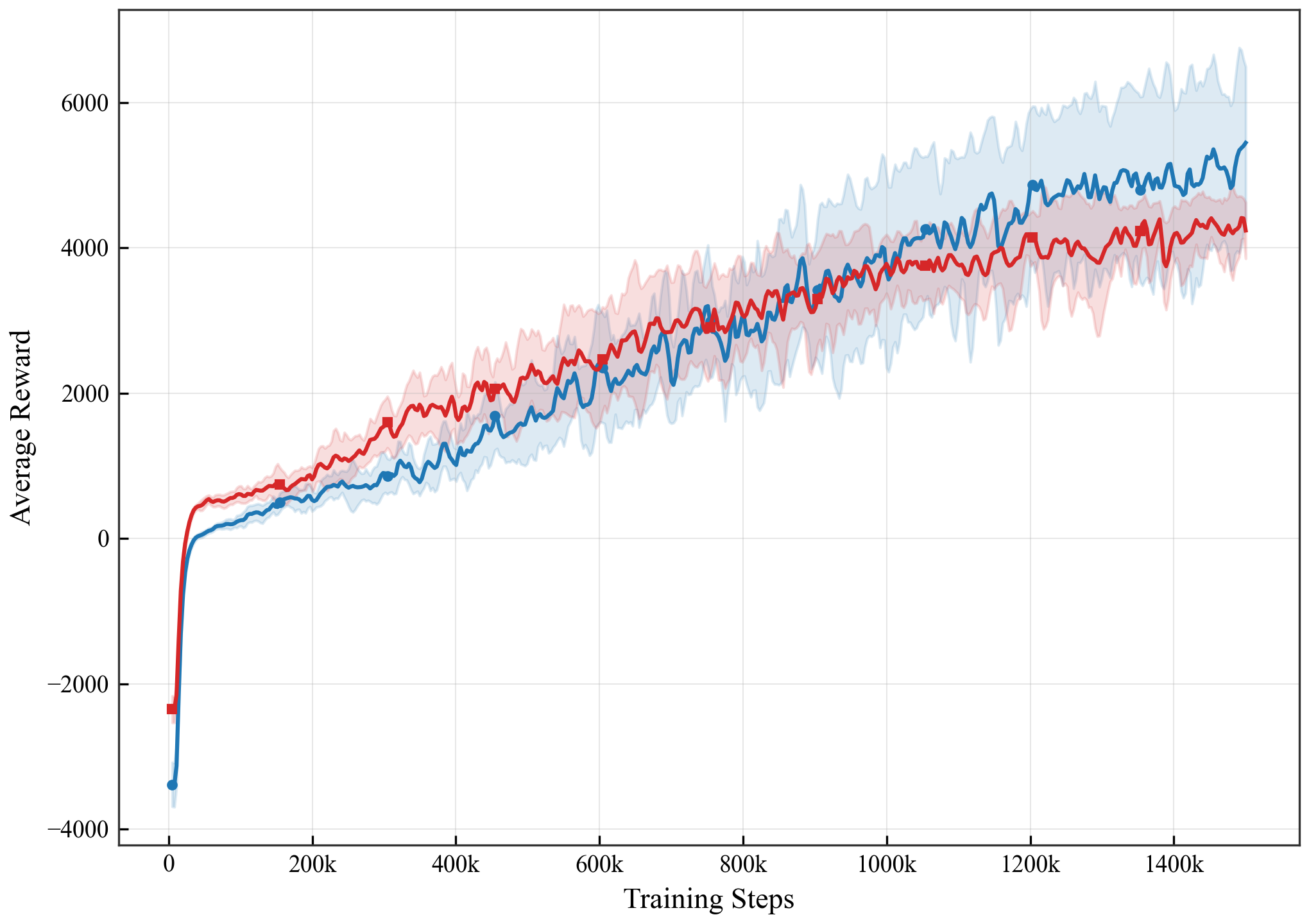}
			\caption{Ant-TD3}
		\end{subfigure}
		\hfill
		\begin{subfigure}{0.24\linewidth}
			\includegraphics[width=\linewidth]{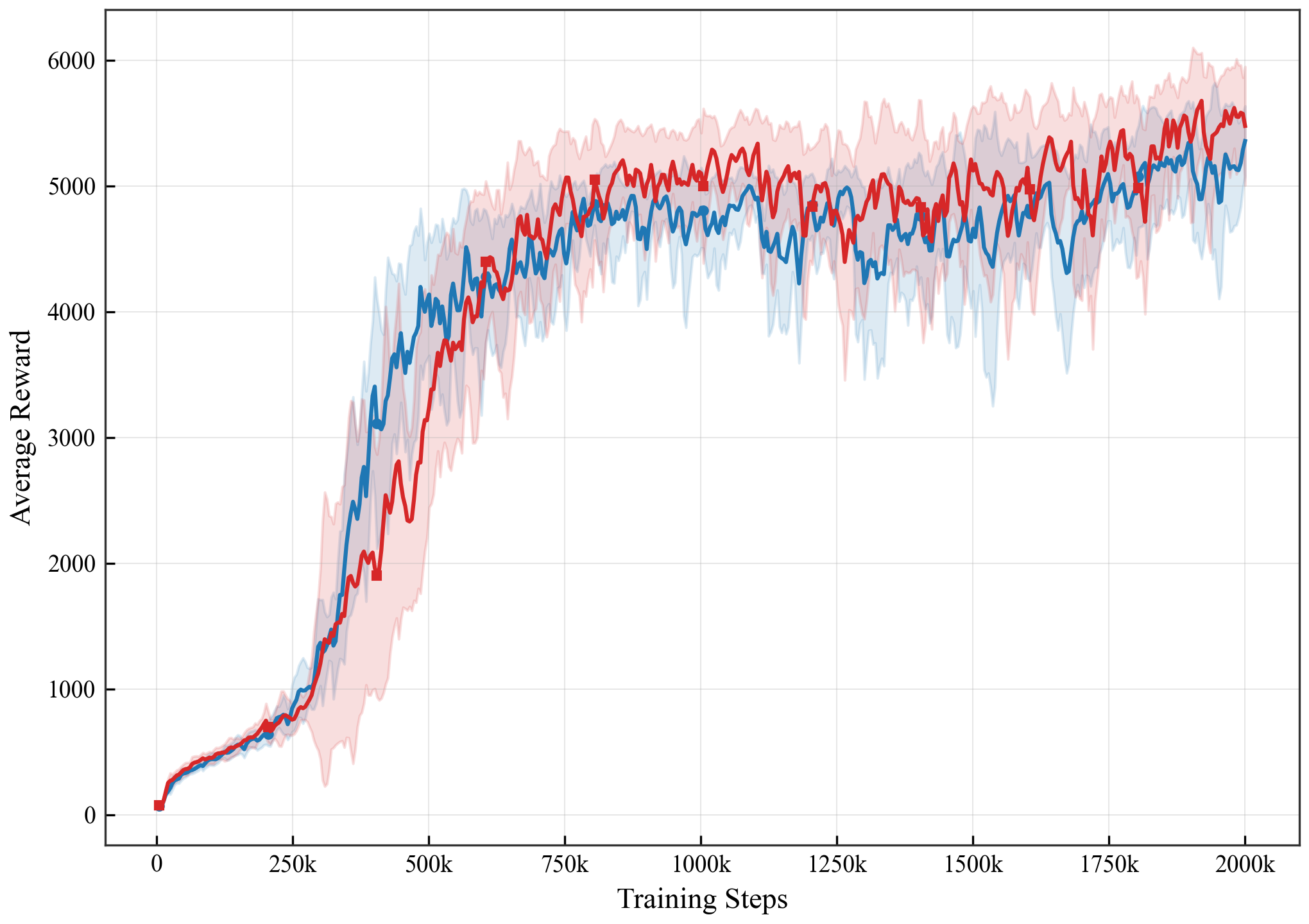}
			\caption{Humanoid-TD3}
		\end{subfigure}
		\vspace{0.5em}
		\begin{subfigure}{0.24\linewidth}
			\includegraphics[width=\linewidth]{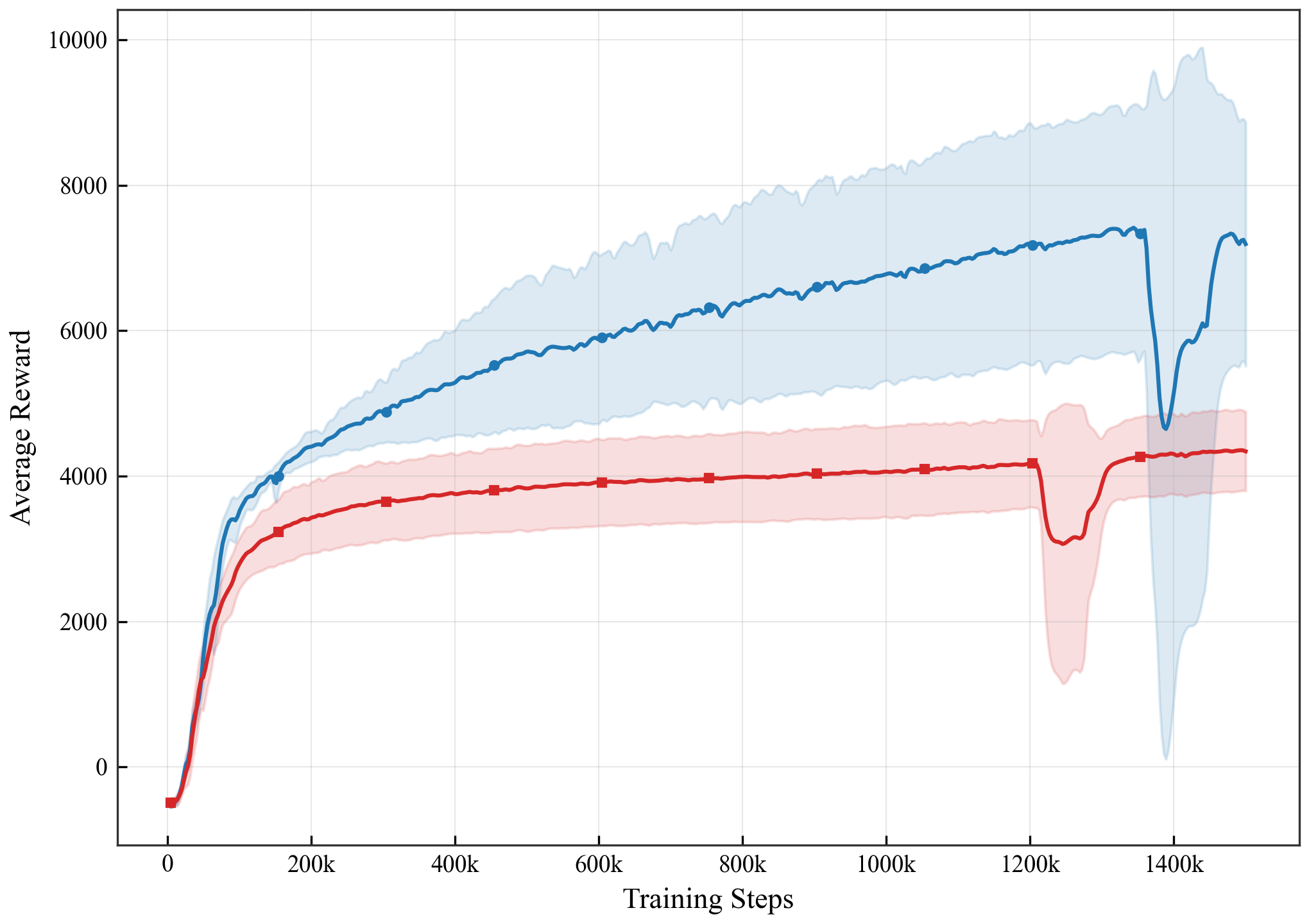}
			\caption{HalfCheetah-TD3}
		\end{subfigure}
		\hfill
		\begin{subfigure}{0.24\linewidth}
			\includegraphics[width=\linewidth]{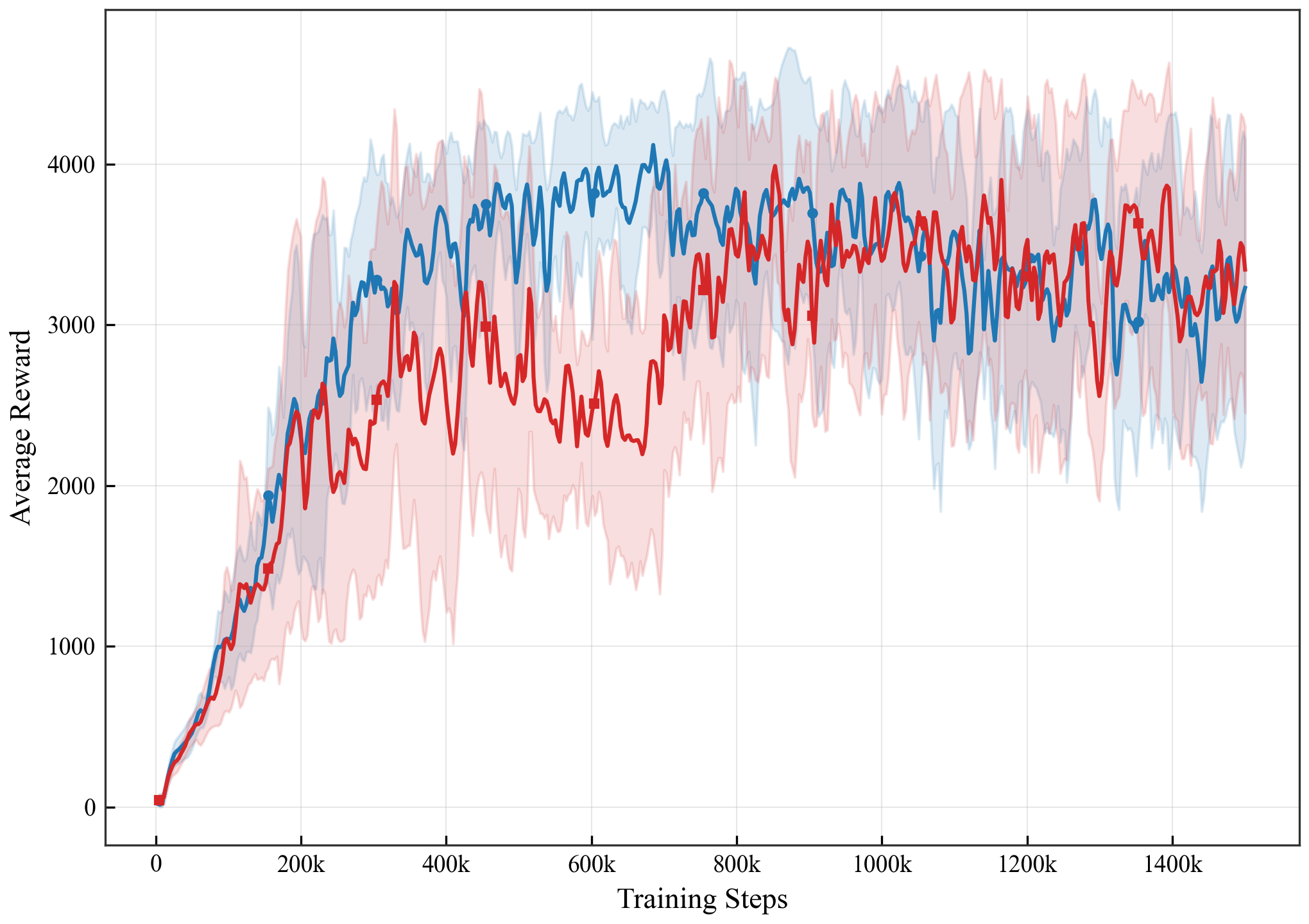}
			\caption{Hopper-TD3}
		\end{subfigure}
		
		\vspace{0.0cm}
		\centering
		\begin{tabular}{@{}l@{\hspace{1.5em}}l@{\hspace{1.5em}}l@{\hspace{1.5em}}l@{\hspace{1.5em}}l@{\hspace{1.5em}}l@{}}
			\colorbox{BoxPre}{\rule{0pt}{1pt}\rule{8pt}{0pt}} \raisebox{-2.0pt}{\scriptsize Wide Homogeneous} &
			\colorbox{DDSRad}{\rule{0pt}{1pt}\rule{8pt}{0pt}} \raisebox{-2.0pt}{\scriptsize Tight Heterogeneous}
		\end{tabular}
		
		\caption{Mean return curves of DD-SRad under wide homogeneous and tight heterogeneous constraints with SAC/TD3 backbones.}
		\label{fig:hetero_return_all}
	\end{figure}
	
	Per-dimension utilization exhibits a layered structure under tight constraints: Ant hip ($\delta=0.2$) and ankle ($\delta=0.5$) groups each independently approach their own $\delta^i$ boundary, rather than being truncated to $\min_i\delta^i$---direct evidence of $\ell_\infty$ parameterization. Radar charts across all environments in Appendix~\ref{app:hetero_radar}.
	
	\subsection{Sensitivity Analysis of $\lambda_{\text{base}}$}
	\label{sec:sensitivity}
	
	$\lambda_{\text{base}}$ is the sole algorithmic hyperparameter; its role is to confine $\|u\|$ within the effective gradient region (Proposition~\ref{prop:gradient}(iii)). This section tests $\lambda_{\text{base}} \in \{0.0001,\,0.001,\,0.01,\,0.1,\,1.0\}$ on Ant-v5 under wide homogeneous ($\boldsymbol{\delta}=[1.0]^8$) and tight heterogeneous ($\boldsymbol{\delta}=[0.2^4,0.5^4]$) constraints.
	
	Figure~\ref{fig:sens} shows that under wide homogeneous constraints, $\lambda_{\text{base}}\in\{0.001,\,0.01,\,0.1,\,1.0\}$ all converge comparably; $\lambda_{\text{base}}=0.0001$ shows $\approx20\%$ return degradation with irregular cross-dimensional utilization, indicating regularization failure.
	
	\begin{figure}[htbp]
		\centering
		\begin{subfigure}{0.28\linewidth}
			\includegraphics[width=\linewidth]{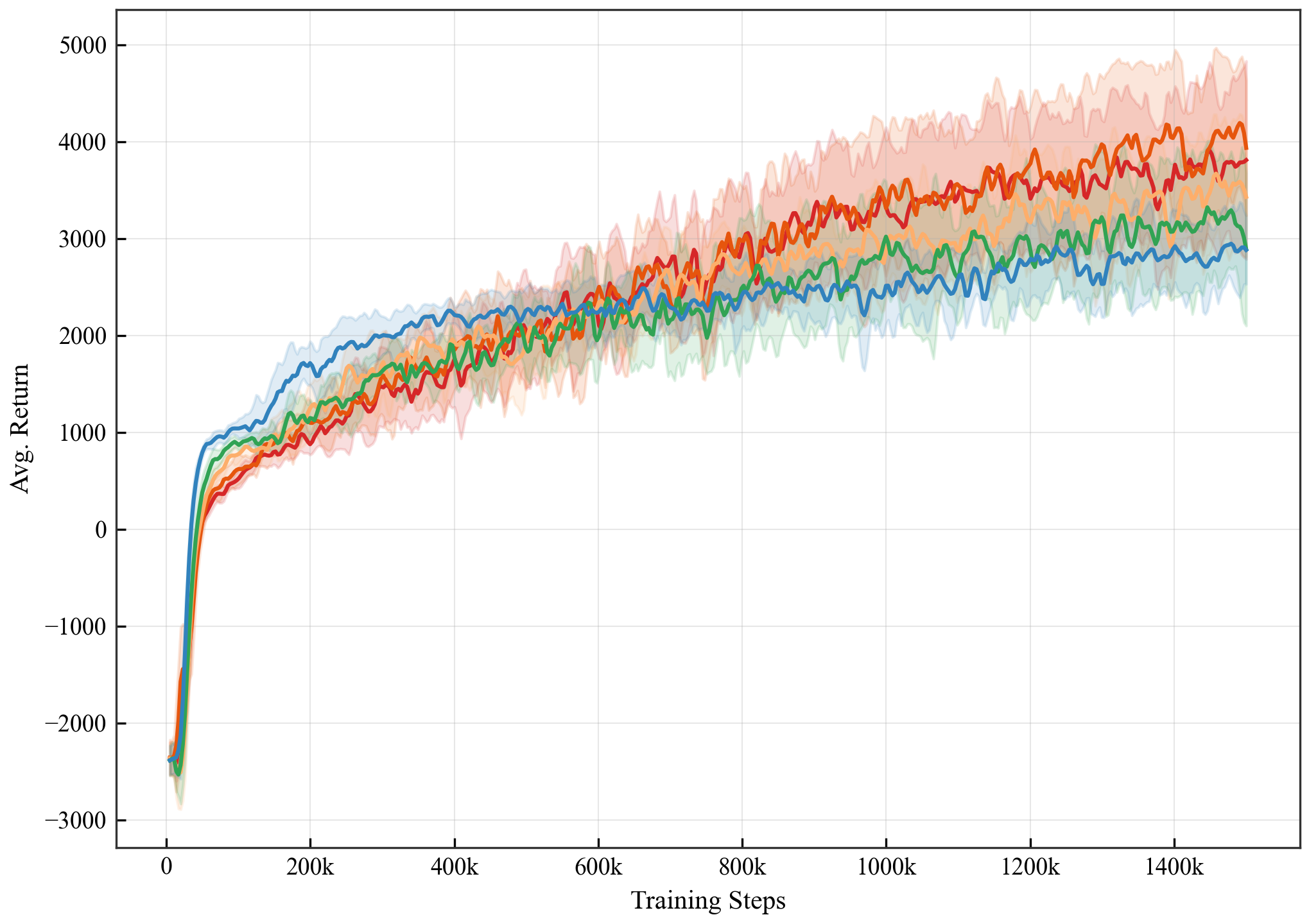}
			\caption{Return (Tight)}
		\end{subfigure}
		\vspace{0.5em}
		\begin{subfigure}{0.28\linewidth}
			\includegraphics[width=\linewidth]{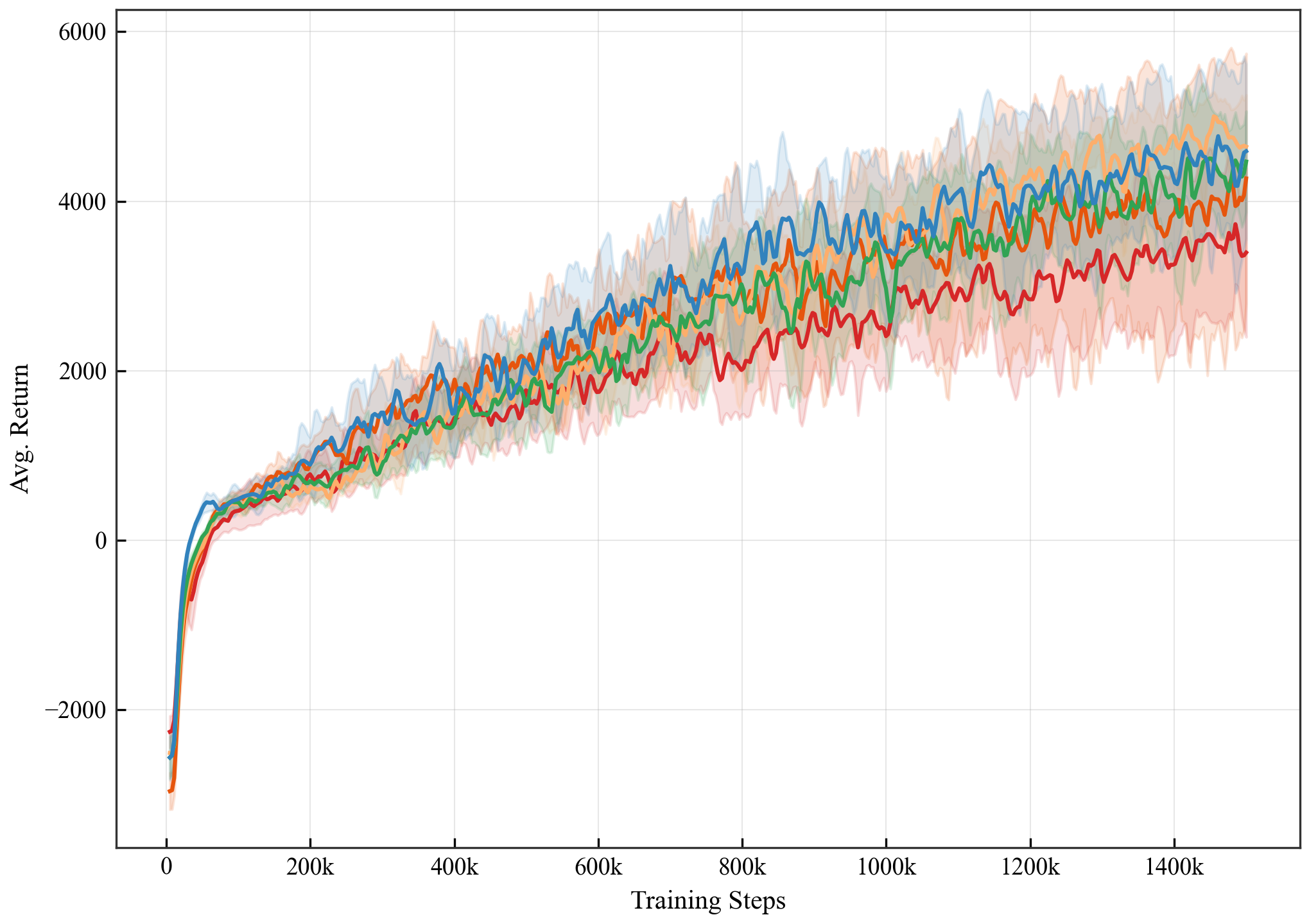}
			\caption{Return (Wide)}
		\end{subfigure}
		\vspace{0.5em}
		\begin{subfigure}{0.21\linewidth}
			\includegraphics[width=\linewidth]{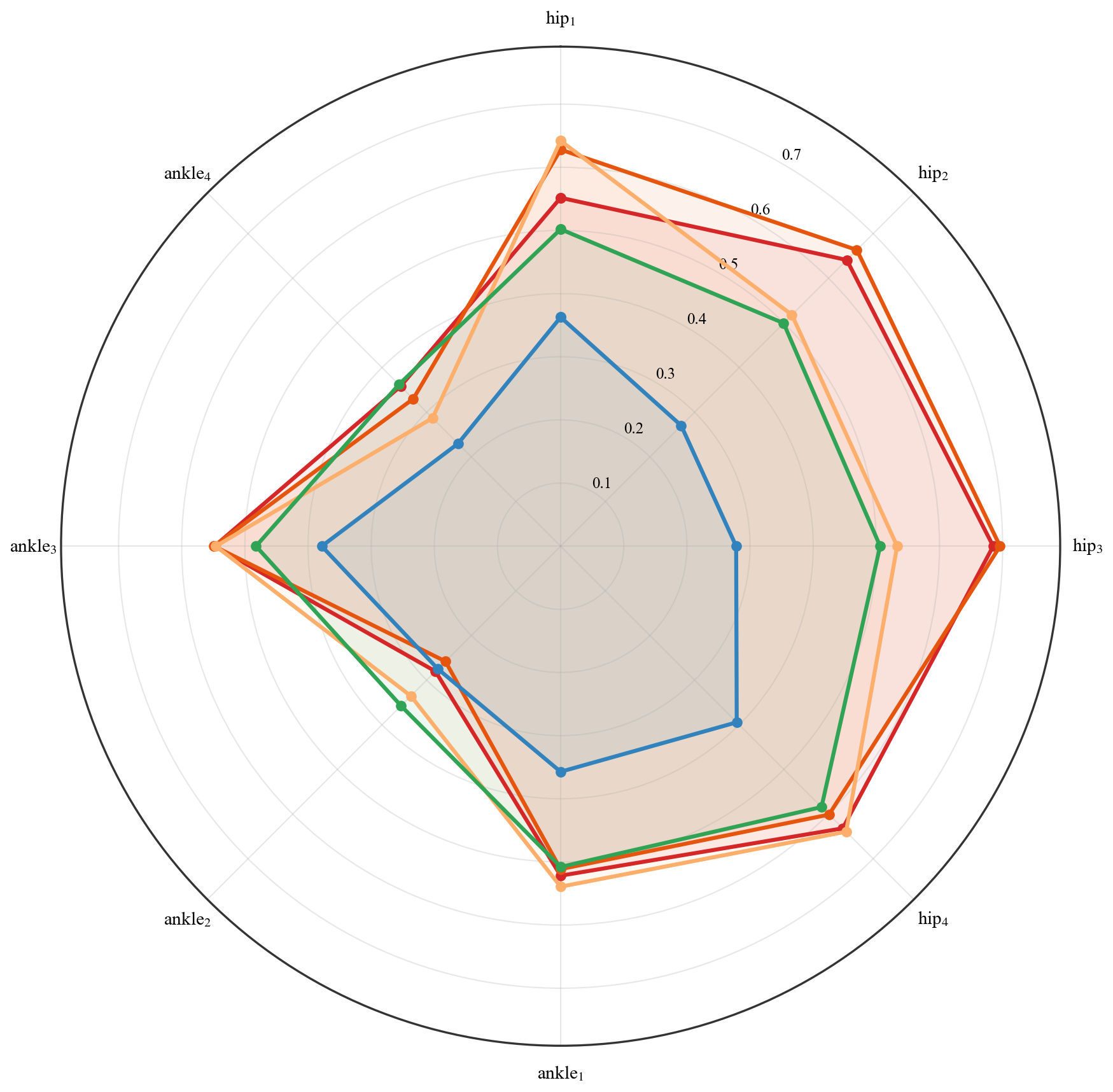}
			\caption{Utilization (Tight)}
		\end{subfigure}
		\vspace{0.5em}
		\begin{subfigure}{0.21\linewidth}
			\includegraphics[width=\linewidth]{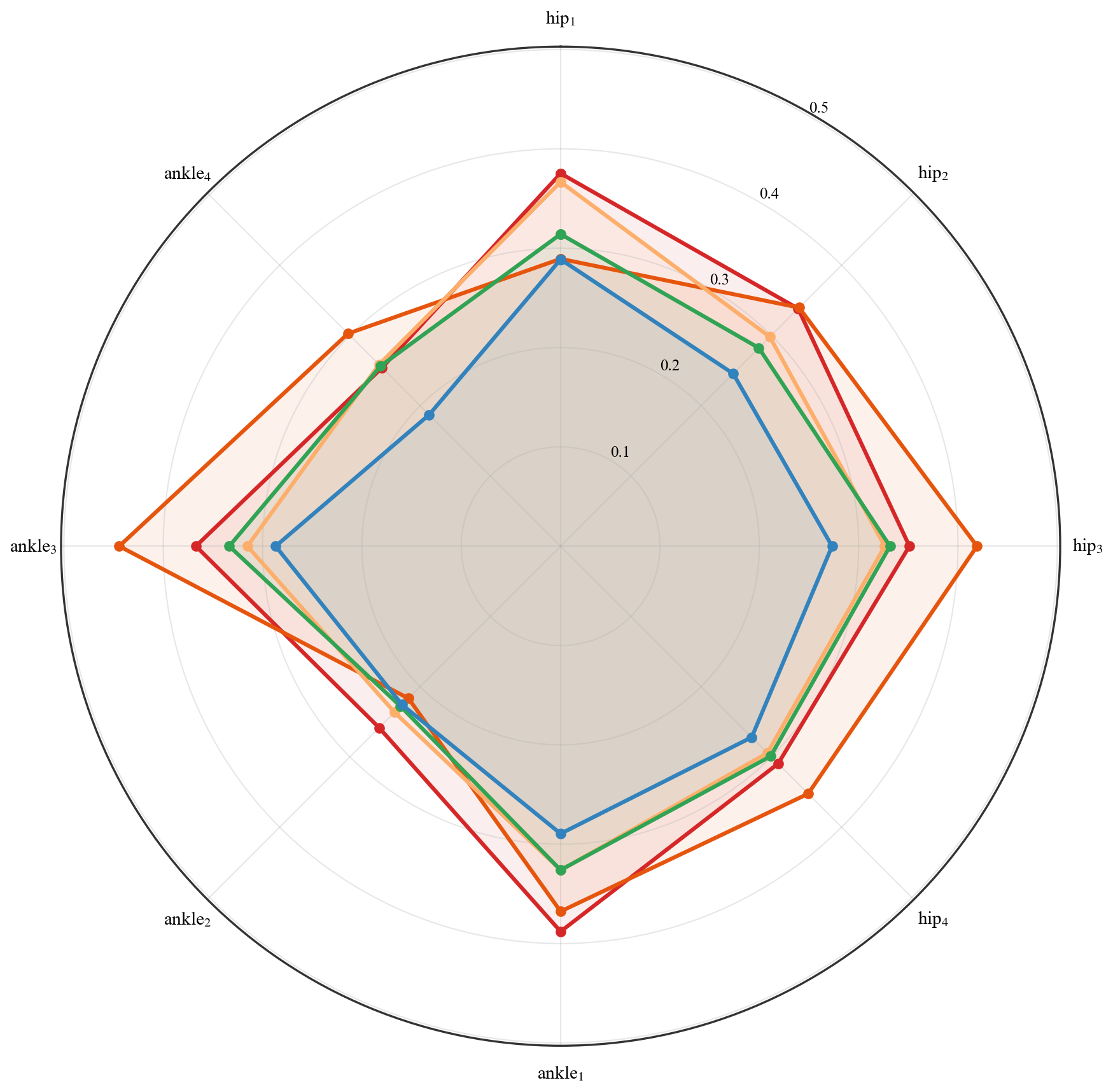}
			\caption{Utilization (Wide)}
		\end{subfigure}
		
		\vspace{-0.3cm}
		\centering
		\begin{tabular}{@{}l@{\hspace{1.5em}}l@{\hspace{1.5em}}l@{\hspace{1.5em}}l@{\hspace{1.5em}}l@{\hspace{1.5em}}l@{}}
			\colorbox{1e-4}{\rule{0pt}{1pt}\rule{8pt}{0pt}} \raisebox{-2.0pt}{\scriptsize 1e-4} &
			\colorbox{1e-3}{\rule{0pt}{1pt}\rule{8pt}{0pt}} \raisebox{-2.0pt}{\scriptsize 1e-3} &
			\colorbox{1e-2}{\rule{0pt}{1pt}\rule{8pt}{0pt}} \raisebox{-2.0pt}{\scriptsize 1e-2} &
			\colorbox{1e-1}{\rule{0pt}{1pt}\rule{8pt}{0pt}} \raisebox{-2.0pt}{\scriptsize 1e-1} &
			\colorbox{1e0}{\rule{0pt}{1pt}\rule{8pt}{0pt}} \raisebox{-2.0pt}{\scriptsize 1e0}
		\end{tabular}
		
		\caption{Mean return and per-dimension utilization of DD-SRad (SAC) on Ant-v5 under varying $\lambda_{\text{base}}$.}
		\label{fig:sens}
	\end{figure}
	
	Under tight heterogeneous constraints (Figure~\ref{fig:sens}(a)(c)), the influence pattern changes qualitatively and becomes non-monotone: $\lambda_{\text{base}}=0.01$ achieves the highest return, with both smaller and larger values degrading performance. Two competing failure modes drive this structure: at $\lambda_{\text{base}}=0.0001$, insufficient regularization causes runaway per-dimension imbalance---the radar chart shows ankle dimensions overexpanding above $0.65$ while hip dimensions collapse irregularly, yielding the worst return despite maximal exploration freedom; at $\lambda_{\text{base}}=1.0$, excessive regularization uniformly suppresses utilization to $\approx0.2$, eliminating the hip--ankle differential and destroying differentiated coverage. Combining both regimes: wide constraints favor $\lambda_{\text{base}}=0.001$ while tight constraints favor
	$\lambda_{\text{base}}=0.01$; $\lambda_{\text{base}}=0.005$ lies at the geometric midpoint and performs consistently well across both, requiring no per-environment tuning.
	
	\section{Simulations}
	\label{sec:simulations}
	
	IsaacLab high-fidelity simulation with per-joint rate limits from official specifications verifies cross-platform $\ell_\infty$ reachable-set consistency (Gap metric) under contact dynamics, sensor noise, and joint temporal coupling unavailable in MuJoCo.
	
	\subsection{Experimental Setup}
	\label{sec:sim_setup}
	
	Experiments select Unitree H1 (10 joints, rough terrain, $\kappa\approx2.2$) and G1 (13 joints with torso, flat terrain, $\kappa=4.0$), providing controlled comparison of $\kappa$ effects under different terrain conditions. Rate limits are read from official specifications and normalized; tightened constraints verify hard guarantee validity; details in Appendix Table~\ref{tab:sim_constraint_params}.
	
	Experiments uniformly use the FastSAC~\cite{seo2025fastsac} backbone---an SAC variant for large-scale parallel simulation ($N_{\text{env}}=4096$) with stable humanoid locomotion training; its policy gradient structure is mathematically identical to standard SAC, and DD-SRad compatibility is guaranteed by §\ref{sec:offpolicy}. Baselines: SAC-DD-SRad (Ours), SAC-D-Tanh, SAC-BoxPre+, SAC-Post(QP), SRad-QP, and SRad-Strict. Additional metrics: VTE$\downarrow$, Fall$\downarrow$, Steps$\uparrow$, Util-P/Util-D/Gap (Appendix Figure~\ref{fig:sim_distribution}).
	
	\subsection{Simulation Results}
	\label{sec:sim_results}
	
	\begin{figure}[htbp]
		\centering
		\begin{subfigure}{0.26\linewidth}
			\includegraphics[width=\linewidth]{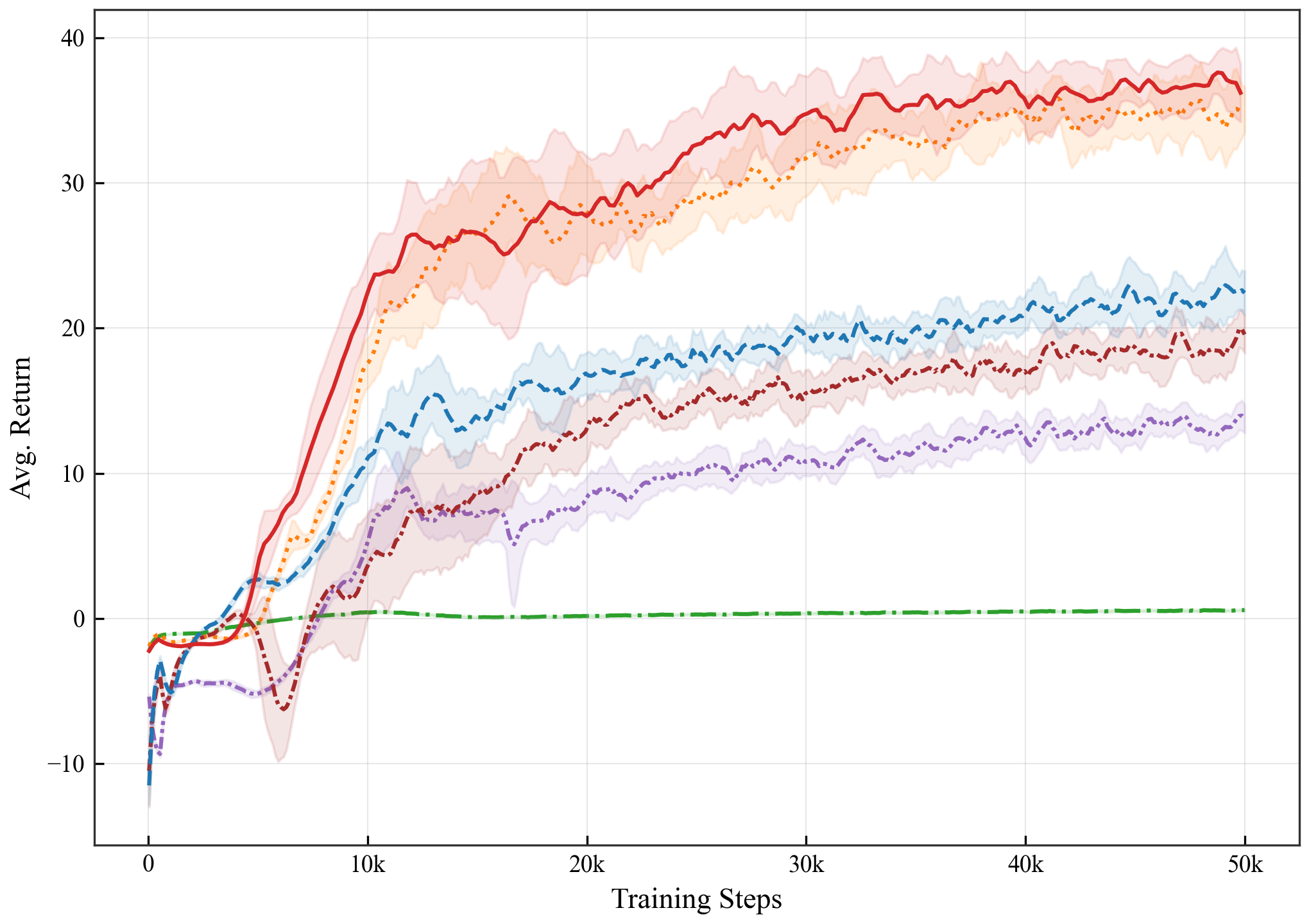}
			\caption{H1 Mean Return}
			\label{fig:sim_tar_h1}
		\end{subfigure}
		\hfill
		\begin{subfigure}{0.26\linewidth}
			\includegraphics[width=\linewidth]{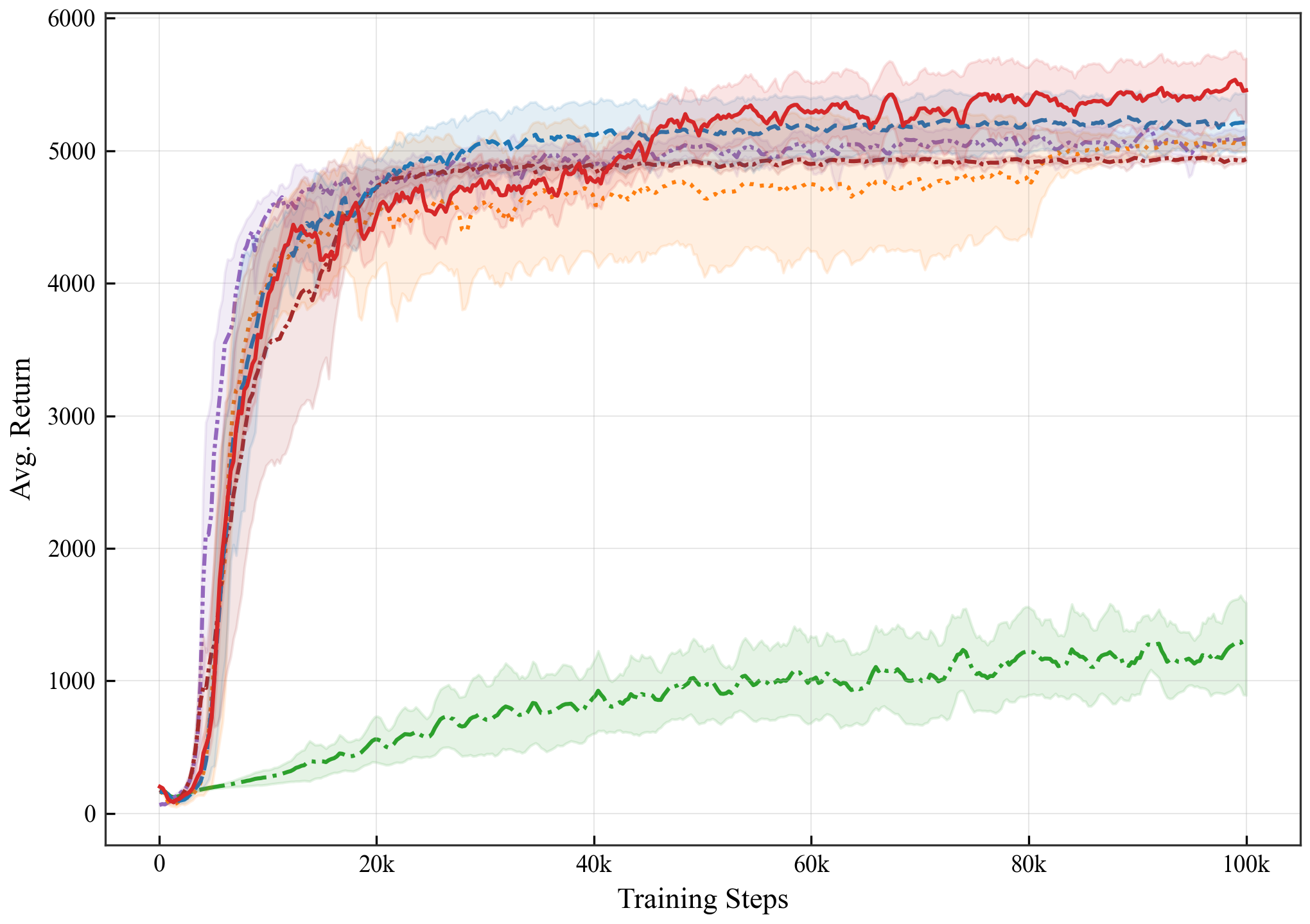}
			\caption{G1 Mean Return}
			\label{fig:sim_tar_g1}
		\end{subfigure}
		\hfill
		\begin{subfigure}{0.19\linewidth}
			\includegraphics[width=\linewidth]{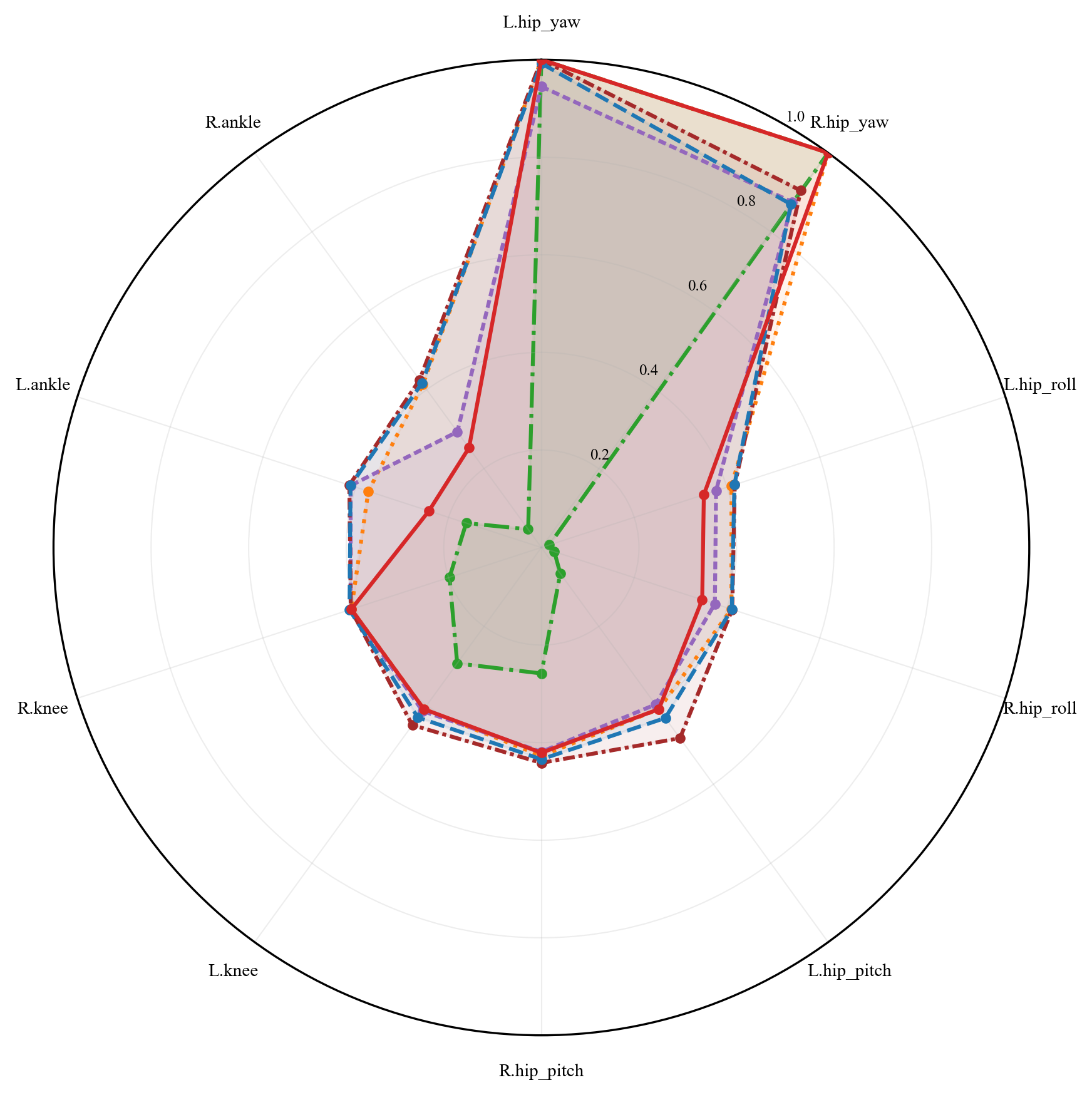}
			\caption{H1 Radar Chart}
			\label{fig:sim_radar_h1}
		\end{subfigure}
		\hfill
		\begin{subfigure}{0.20\linewidth}
			\includegraphics[width=\linewidth]{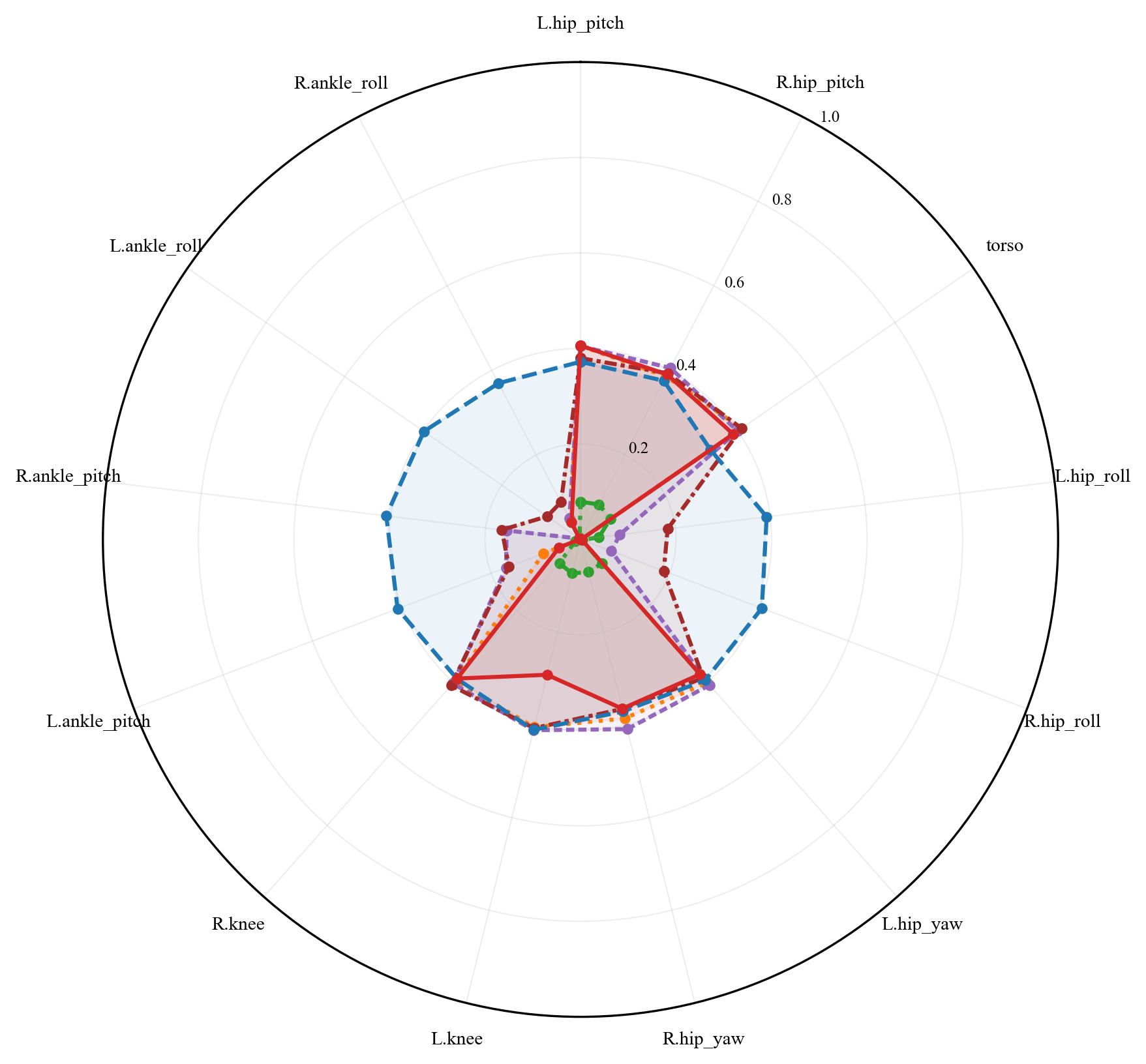}
			\caption{G1 Radar Chart}
			\label{fig:sim_radar_g1}
		\end{subfigure}
		
		\vspace{0cm}
		\centering
		\begin{tabular}{@{}l@{\hspace{1.5em}}l@{\hspace{1.5em}}l@{\hspace{1.5em}}l@{\hspace{1.5em}}l@{\hspace{1.5em}}l@{}}
			\colorbox{BoxPre}{\rule{0pt}{1pt}\rule{8pt}{0pt}} \raisebox{-2.0pt}{\scriptsize BoxPre+} &
			\colorbox{QP}{\rule{0pt}{1pt}\rule{8pt}{0pt}} \raisebox{-2.0pt}{\scriptsize QP} &
			\colorbox{SRadQP}{\rule{0pt}{1pt}\rule{8pt}{0pt}} \raisebox{-2.0pt}{\scriptsize SRad-QP} &
			\colorbox{SRadStrict}{\rule{0pt}{1pt}\rule{8pt}{0pt}} \raisebox{-2.0pt}{\scriptsize SRad-Strict} &
			\colorbox{dtanh}{\rule{0pt}{1pt}\rule{8pt}{0pt}} \raisebox{-2.0pt}{\scriptsize D-Tanh} &
			\colorbox{DDSRad}{\rule{0pt}{1pt}\rule{8pt}{0pt}} \raisebox{-2.0pt}{\scriptsize DD-SRad}
		\end{tabular}
		
		\caption{Learning curves and radar charts for H1 and G1 (mean $\pm$ std over 5 seeds; official constraints).}
		\label{fig:sim_return_radar}
	\end{figure}
	
	On H1 (Table~\ref{tab:sim_full}(a)), DD-SRad leads at $37.14{\pm}2.84$ (Fall$=48.7\%$, Steps$=748$); D-Tanh ($34.56$), BoxPre+ ($23.11$, Fall$=70.2\%$), SAC-Post(QP) ($19.53$), SRad-QP ($13.64$), and SRad-Strict ($0.83$, Fall$=100\%$) follow, with gaps amplified by rough terrain. On G1, DD-SRad achieves $5473{\pm}238$ (Fall$=0.3\%$, Steps$=998$); BoxPre+ ($5213$) ranks second in Return, slightly exceeding D-Tanh ($5145$), yet shows 62\% higher VTE ($0.224$ vs $0.138$), revealing gait instability from clip's uniform activation; SRad-Strict ($1312$, Fall$=91\%$) confirms high-$\kappa$ collapse.
	
	Table~\ref{tab:sim_full}(b) shows per-joint-group utilization. DD-SRad achieves task-adaptive allocation: balanced proximal--distal on H1 (Gap$=0.10$); hip-dominant on G1 (Util-P$=0.40$, Util-D$\approx0.02$, Gap$=0.38$). BoxPre+'s near-equal utilization (Gap$\approx0.01$) directly causes 62\% higher G1 VTE: consistent with Proposition~\ref{prop:gradient}, gradient truncation at clip boundaries prevents the critic signal from driving dimension-selective allocation. SRad-Strict Util-P$=0.08$ falls far below $\min_i\delta^i/\delta^\mathrm{P}{\approx}0.45$ (Theorem~\ref{thm:exploration}).
	
	\renewcommand{\arraystretch}{1.08}
	\begin{table*}[!t]
		\scriptsize
		\centering
		\captionsetup{font=scriptsize, labelfont=scriptsize}
		\caption{Comprehensive performance in IsaacLab high-fidelity simulation (FastSAC backbone~\cite{seo2025fastsac}; H1 Rough \& G1 Flat; mean$\pm$std over 5 seeds; \textbf{bold} is best among end-to-end methods; all constrained methods satisfy CVR\,${=}$\,0 (constraint layer enforced directly); $\kappa=\max_i\delta^i/\min_i\delta^i$: H1 $\kappa{\approx}2.2$, G1 $\kappa{=}4.0$)}
		\label{tab:sim_full}
		
		\textbf{(a) Task-Level Performance}
		(Return$\uparrow$: cumulative episode reward;
		VTE$\downarrow$: velocity tracking error (m/s);
		Fall$\downarrow$: base contact termination rate (\%);
		Steps$\uparrow$: mean episode length)\\[1mm]
		
		\begin{tabular}{l cccc cccc}
			\toprule
			& \multicolumn{4}{c}{\textbf{Unitree H1}(Humanoid,Rough,$\kappa{\approx}2.2$)}
			& \multicolumn{4}{c}{\textbf{Unitree G1}(Humanoid,Flat,$\kappa{=}4.0$)} \\
			\cmidrule(lr){2-5}\cmidrule(lr){6-9}
			\textbf{Method}
			& Return$\uparrow$ & VTE$\downarrow$ & Fall$\downarrow$ & Steps$\uparrow$
			& Return$\uparrow$ & VTE$\downarrow$ & Fall$\downarrow$ & Steps$\uparrow$ \\
			\midrule
			BoxPre+
			& $23.11{\pm}1.87$          & $\mathbf{0.385{\pm}0.026}$ & $70.2$ & $526$
			& $5213{\pm}241$            & $0.224{\pm}0.028$ & $5.7$           & $970$ \\
			Post(QP)
			& $19.53{\pm}2.43$          & $0.465{\pm}0.064$ & $88.0$          & $339$
			& $4954{\pm}42$             & $0.139{\pm}0.006$ & $\mathbf{0.0}$  & $\mathbf{1000}$ \\
			SRad-QP
			& $13.64{\pm}1.33$          & $0.390{\pm}0.044$ & $76.5$          & $431$
			& $5166{\pm}98$             & $0.187{\pm}0.011$ & $0.3$           & $998$ \\
			SRad-Strict
			& $0.83{\pm}0.11$           & $0.624{\pm}0.008$ & $100.0$         & $139$
			& $1312{\pm}344$            & $0.360{\pm}0.036$ & $91.0$          & $403$ \\
			D-Tanh
			& $34.56{\pm}3.04$          & $0.462{\pm}0.041$ & $58.7$ & $679$
			& $5145{\pm}91$             & $0.207{\pm}0.016$ & $2.2$           & $989$ \\
			\rowcolor{blue!10}
			DD-SRad (Ours)
			& $\mathbf{37.14{\pm}2.84}$ & $0.459{\pm}0.023$ & $\mathbf{48.7}$ & $\mathbf{748}$
			& $\mathbf{5473{\pm}238}$   & $\mathbf{0.138{\pm}0.009}$ & $0.3$  & $998$ \\
			\bottomrule
		\end{tabular}
		
		\vspace{1mm}
		\scriptsize
		\textbf{(b) Constraint Utilization}
		(Util-P: proximal joint group (H1: $\delta^\mathrm{P}{=}0.40$, G1: $\delta^\mathrm{P}{=}0.40$);
		Util-D: distal joint group (H1: $\delta^\mathrm{D}{=}0.18$, G1: $\delta^\mathrm{D}{=}0.18$);
		Gap\,${=}|$Util-P$-$Util-D$|$)\\[1mm]
		
		{\setlength{\tabcolsep}{9.5pt}
			\begin{tabular}{l ccc ccc}
				\toprule
				& \multicolumn{3}{c}{\textbf{Unitree H1}(Humanoid,Rough,$\kappa{\approx}2.2$)}
				& \multicolumn{3}{c}{\textbf{Unitree G1}(Humanoid,Flat,$\kappa{=}4.0$)} \\
				\cmidrule(lr){2-4}\cmidrule(lr){5-7}
				\textbf{Method} & Util-P & Util-D & Average Gap
				& Util-P & Util-D & Average Gap \\
				\midrule
				BoxPre+
				& $0.42\pm0.00$ & $0.44\pm0.01$ & $0.02$
				& $0.37\pm0.02$ & $0.38\pm0.02$ & $0.01$ \\
				Post(QP)
				& $0.45\pm0.01$ & $0.48\pm0.01$ & $0.03$
				& $0.39\pm0.01$ & $0.09\pm0.01$ & $0.30$ \\
				SRad-QP
				& $0.37\pm0.01$ & $0.42\pm0.01$ & $0.06$
				& $0.41\pm0.01$ & $0.03\pm0.04$ & $0.38$ \\
				SRad-Strict
				& $0.08\pm0.00$ & $0.50\pm0.01$ & $0.42$
				& $0.08\pm0.01$ & $0.01\pm0.00$ & $0.08$ \\
				D-Tanh
				& $0.41\pm0.00$ & $0.62\pm0.05$ & $0.21$
				& $0.40\pm0.01$ & $0.02\pm0.05$ & $0.37$ \\
				\rowcolor{blue!10}
				DD-SRad (Ours)
				& $0.38\pm0.01$ & $0.48\pm0.03$ & $0.10$
				& $0.40\pm0.01$ & $0.02\pm0.04$ & $0.38$ \\
				\bottomrule
		\end{tabular}}
		
		\vspace{1.5mm}
		\scriptsize
		\textbf{Notes:}
		All values are measured on 5 evaluation seeds with 1000 test episodes per seed after convergence; learning curves (Fig.~\ref{fig:sim_tar_h1}, Fig.~\ref{fig:sim_tar_g1}) and radar charts (Fig.~\ref{fig:sim_radar_h1}, Fig.~\ref{fig:sim_radar_g1}) are plotted from the same evaluation data.
		H1 proximal group: hip joints ($\delta{=}0.40$); distal group: ankle joints ($\delta{=}0.18$).
		G1 proximal group: hip\_pitch joints ($\delta{=}0.40$); distal group: ankle joints ($\delta{=}0.18$--$0.25$).
		H1 fall rates reflect terrain difficulty.
	\end{table*}
	\renewcommand{\arraystretch}{1}
	
	\section{Related Work}
	\label{sec:related}
	
	\textbf{Hierarchical and model-based methods.} MPC/QP approaches achieve hard per-step satisfaction but incur solver overhead and training-deployment inconsistency~\cite{williams2017information,nagabandi2018neural,grandia2023perceptive,kim2019highly,pandala2022robust,bledt2018cheetah}; hierarchical RL decouples constraint handling at the cost of objective inconsistency and degraded sim-to-real transfer~\cite{johannink2019residual,lee2020learning,li2020safe,rajeswaran2018learning,gros2020data,vecerik2017leveraging,chebotar2019closing}.
	
	\textbf{Constrained Markov decision processes.} CMDP methods bound expected cumulative costs~\cite{altman1999constrained,achiam2017constrained,tessler2018reward,zhang2020first,yang2022constrained,zhang2022P3O,ray2019benchmarking,wachi2024survey,gu2024review} but permit transient per-step violations, falling short of $|\Delta a_t^i|\leq\delta^i,\forall t$~\cite{bertsekas1996stochastic}.
	
	\textbf{Action parameterization and safety filters.} $\tanh$/squashed-Gaussian outputs enforce static intervals but not inter-step differential constraints~\cite{lillicrap2016continuous,haarnoja2018soft,schulman2017proximal,fujimoto2018td3}; SRad~\cite{kasaura2023benchmarking} imposes a global $\ell_2$ radius that compresses the reachable set under heterogeneous $\delta^i$; CBF-QP/OptLayer filters achieve hard satisfaction but incur QP latency and gradient discontinuities~\cite{ames2017control,pham2018optlayer,dalal2018safe,singletary2021safety}.
	
	\textbf{Generative and flow-based methods.} FlowPG~\cite{brahmanage2023flowpg}, CV-Flows~\cite{cvflows2025}, ARAM~\cite{hung2025aram}, and TN-SAC~\cite{stolz2025truncated} achieve strong performance on static feasible sets but cannot accommodate the per-step $\mathcal{F}_t$ drift induced by $a_{t-1}$.
	
	A detailed per-method analysis is provided in Appendix~\ref{app:related_extended}.
	
	\section{Conclusion}
	\label{sec:conclusion}
	
	This paper identifies the geometric mismatch between the $\ell_\infty$ rate-constraint feasible set and $\ell_2$ spherical parameterization, quantifying the volume ratio as $\prod_i\delta^i/(\min_i\delta^i)^d$. DD-SRad resolves this via position-adaptive $R_{\text{eff}}^i$ computed independently per dimension, with three theoretical guarantees: probability-1 constraint satisfaction (Theorem~\ref{thm:constraint}), diagonally positive definite Jacobian with spectral structure determined by $\{\delta^i\}$ (Proposition~\ref{prop:gradient}), and $\ell_\infty$-tight coverage with volume ratio $\prod_i\delta^i/(\min_i\delta^i)^d$ relative to $\ell_2$ (Theorem~\ref{thm:exploration}). MuJoCo experiments achieve the highest return at zero violation across all 8 environment--backbone configurations, with 30\%--50\% coverage improvement over $\ell_2$ baselines and $\lambda_{\text{base}}$ robust over two orders of magnitude. IsaacLab H1/G1 simulations confirm end-to-end optimality parameterized directly from official joint rate specifications, validating the systematic pathway from hardware specifications to safe deployment without reward engineering.
	

	\appendix
	
	\section{Extended Related Work}
	\label{app:related_extended}
	
	The most mature engineering solution for handling actuator constraints is the hierarchical control architecture, which decouples the high-level RL policy from the low-level constraint handling. Williams et al.~\cite{williams2017information} proposed information-theoretic MPPI, solving constrained optimization within the control period via large-scale parallel sampling and demonstrating strong online planning capability on racing trajectory tracking tasks; however, the computational cost of MPPI grows steeply with state dimension, making it difficult to maintain the required control frequency on high-DOF humanoid robots. Nagabandi et al.~\cite{nagabandi2018neural} learned system dynamics with a neural network and ran MPC on top, achieving online adaptation to changes in environment parameters; however, the accuracy of the learned dynamics model degrades away from the training distribution, causing constraint satisfaction to deteriorate---particularly evident in contact-rich locomotion tasks. Grandia et al.~\cite{grandia2023perceptive} embedded perception into nonlinear MPC to achieve dynamic locomotion on rough terrain, demonstrating MPC's constraint-handling capability in complex environments; however, the per-step solve time of approximately 15\,ms, together with perception processing overhead, makes generalization to higher-frequency control scenarios difficult. Kim et al.~\cite{kim2019highly} proposed a combined whole-body impulse control and MPC framework achieving highly dynamic gaits on a quadruped; the method's reliance on precise dynamics models limits its robustness to parametric uncertainty and external disturbances. Pandala et al.~\cite{pandala2022robust} introduced robust predictive control to reduce the gap between simplified and full-order models; however, its conservative design further compresses the reachable action space under tight rate constraints, affecting motion performance.
	
	Johannink et al.~\cite{johannink2019residual} proposed residual reinforcement learning, stacking an RL policy on top of a classical controller to enable safe exploration with fewer samples in robotic manipulation; however, the performance ceiling of this framework is limited by the quality of the base controller, and the high-level--low-level objective inconsistency is especially pronounced near constraint boundaries. Lee et al.~\cite{lee2020learning} completed end-to-end locomotion learning on challenging terrain with a real quadruped, demonstrating the potential of pure RL under complex dynamics; however, that work does not explicitly handle joint rate limits, relying on reward engineering for soft constraints on abnormal actions without providing hard guarantees. Li et al.~\cite{li2020safe} designed an RL-based safe hierarchical planning framework for complex driving scenarios; the high-level policy cannot perceive the low-level constraint boundaries at inference time, leading to biased policy gradient estimates near constraint regions, and empirical studies show that actual execution performance degrades by approximately 18\% compared to end-to-end methods~\cite{rajeswaran2018learning}. Gros and Zanon~\cite{gros2020data} proposed a data-driven economic nonlinear MPC framework that theoretically unifies RL and MPC; however, their framework requires differentiable dynamics models and incurs high computational complexity, limiting real-time deployment feasibility.
	
	The above hierarchical approaches share three structural difficulties: objective inconsistency between the high-level RL and low-level constraint handler---the high-level optimizes long-term cumulative return while the low-level minimizes tracking error, and these two objectives conflict fundamentally when constraints are active; computational overhead of the low-level solver (MPC, QP) in high-dimensional systems significantly compresses control bandwidth, with the convex MPC solve on MIT Cheetah 3 reducing the control frequency from an ideal 500\,Hz to approximately 40\,Hz in practice~\cite{bledt2018cheetah}; and training-deployment inconsistency making sim-to-real transfer difficult~\cite{vecerik2017leveraging,chebotar2019closing}---the policy observes idealized execution during training, while low-level corrections change the action effect at deployment, with performance degradation of 25--40\%.
	
	From a constrained RL perspective, Altman~\cite{altman1999constrained} established the theoretical foundation of constrained Markov decision processes (CMDP), formalizing constraint satisfaction as expected cumulative constraints $\mathbb{E}_\pi[\sum_t c_t] \leq d$; the core limitation of this framework is that it permits transient violations as long as the long-term average is satisfied, whereas rate constraints require $|\Delta a_t^i| \leq \delta^i, \forall t$ to hold strictly at every time step---two constraint types with essentially different mathematical structures. Achiam et al.~\cite{achiam2017constrained} proposed Constrained Policy Optimization (CPO), maintaining constraint feasibility during policy updates via second-order trust-region methods and establishing an important reference point for constrained RL on the Safety Gym benchmark; however, CPO handles expected-form constraints, exhibits high gradient estimate variance in sparse violation scenarios, and requires solving a linear system at each step with considerable computational cost. Tessler et al.~\cite{tessler2018reward} proposed RCPO, converting constraints into penalty terms via dynamic Lagrangian multiplier adjustment; this method cannot eliminate single-step violations at test time even after convergence, lacking hard guarantees. Zhang et al.~\cite{zhang2020first} proposed FOCOPS, approximating CPO's constrained policy update with first-order optimization to substantially reduce computational cost; however, its theoretical guarantees are equally limited to expected constraints and are ineffective for per-step rate limits. Yang et al.~\cite{yang2022constrained} proposed CUP, decoupling policy improvement from constraint satisfaction via constrained update projection; however, this method's conservatism near constraint boundaries reduces exploration efficiency, leading to insufficient constraint utilization under high-$\kappa$ heterogeneous constraints. Zhang et al.~\cite{zhang2022P3O} proposed P3O, balancing task performance and constraint satisfaction with adaptive penalty coefficients; its online tuning mechanism is sensitive to hyperparameters, with questionable stability under rapidly varying rate constraints. Ray et al.~\cite{ray2019benchmarking} systematically evaluated the performance of multiple safe RL algorithms, revealing an inherent trade-off between constraint satisfaction rate and task performance in existing CMDP methods, but providing no specialized solutions for the temporally dependent rate-constraint scenario. Survey works by Wachi et al.~\cite{wachi2024survey} and Gu et al.~\cite{gu2024review} further confirm that existing safe RL theoretical frameworks cannot mathematically cover the per-step hard guarantee requirements of rate constraints.
	
	At the policy parameterization level, Schulman et al.~\cite{schulman2017proximal} proposed PPO, which has found wide application in on-policy settings; however, PPO itself does not handle action-domain constraints, allowing actions to exceed physical limits during training, and requiring additional constraint handling at deployment. Fujimoto et al.~\cite{fujimoto2018td3} proposed TD3, improving critic estimate accuracy via target policy smoothing; similarly, the standard TD3 framework has no native support for rate constraints. Lillicrap et al.~\cite{lillicrap2016continuous} introduced a $\tanh$ output layer in DDPG to restrict actions to $[-1,1]$; this method hard-codes the action range as a uniform symmetric interval and cannot handle dynamic shifting feasible sets centered at $a_{t-1}$. Haarnoja et al.~\cite{haarnoja2018soft} proposed SAC, adopting a squashed Gaussian policy to explicitly model action boundaries while maintaining entropy maximization; however, squashed Gaussians are equally applicable only to static position-domain constraints and cannot express inter-step action differential relationships. Kasaura et al.~\cite{kasaura2023benchmarking} systematically studied the effect of action constraints on actor-critic algorithms and proposed Spherical Radial Squashing (SRad), constraining policy output within a ball of center $c_s$ and radius $R$ via $a = c_s + R \cdot u/\sqrt{1+\|u\|^2}$, outperforming naive clip and QP post-processing on robotic control constraint benchmarks; SRad performs well under homogeneous constraints, but its global radius design faces a structural trade-off under heterogeneous constraints---if $R = \min_i\delta^i$, the reachable set for high-budget dimensions is compressed to a fraction $\min_i\delta^i/\delta^i$ of the original radius; if $R > \min_i\delta^i$, low-budget dimensions are violated; no globally valid and tight radius selection exists. Singletary et al.~\cite{singletary2021safety} proposed a safety-critical kinematics control framework using control barrier functions (CBF) to guarantee safety constraints at the kinematic level; the CBF approach requires high model accuracy and faces computational bottlenecks with real-time CBF-QP solving in high-dimensional joint spaces. Pham et al.~\cite{pham2018optlayer} proposed OptLayer, inserting a differentiable optimization layer between the policy and environment to project infeasible actions to the nearest feasible point, achieving hard constraint satisfaction; however, QP solve time under high-dimensional non-convex constraints grows with dimension, and the Jacobian computation of the differentiable projection layer introduces additional latency in real-time control. Dalal et al.~\cite{dalal2018safe} proposed Safe Layer, learning a safe projection via linear approximation of constraint functions to achieve low-cost action correction; linear approximation accuracy degrades near constraint boundaries, causing protection failures under high-frequency rate constraint activation. Ames et al.~\cite{ames2017control} proposed a CBF-based QP safety filter, guaranteeing forward invariance in control input space via QP solving with rigorous continuous-time safety guarantees; however, adapting CBF-QP to discrete-time RL differential action constraints requires conservative margin design, excessively restricting the action range during normal motion.
	
	Recent generative model-based methods attempt to fundamentally change the constraint handling paradigm. Brahmanage et al.~\cite{brahmanage2023flowpg} proposed FlowPG, sampling directly from the feasible set via normalizing flows, achieving excellent performance on static action constraint benchmarks; however, FlowPG requires pre-building a flow model of the global feasible set, while the feasible set $\mathcal{F}_t$ of rate constraints drifts with each step's $a_{t-1}$, making it impossible to establish a static distribution, rendering FlowPG inapplicable to rate-constraint scenarios. Conditional normalizing flows can theoretically model time-varying feasible sets conditioned on $a_{t-1}$, but executing a full $L$-layer flow forward pass at each inference step introduces unacceptable real-time latency at 50\,Hz control frequency; DD-SRad's analytic parameterization incurs zero inference overhead. Brahmanage et al.~\cite{cvflows2025} further proposed CV-Flows, augmenting flow model training with constraint violation signals to improve sampling quality near boundaries; however, its assumption of a static feasible set equally prevents it from handling temporally dependent rate constraints. Hung et al.~\cite{hung2025aram} proposed ARAM, efficiently realizing constraint-feasible action sampling via acceptance-rejection sampling and augmented MDPs, with sample efficiency significantly superior to MCMC methods under static constraints; however, the acceptance-rejection framework equally requires relatively stable feasible sets during training and cannot directly accommodate rate constraints where the feasible set shifts dynamically with $a_{t-1}$ at each step. Stolz et al.~\cite{stolz2025truncated} proposed defining a truncated Gaussian distribution policy directly over constrained intervals, resolving the abnormal probability density of squashed Gaussians at constraint boundaries via closed-form/approximate normalization constants and entropy corrections, achieving significant performance improvements on static and state-dependent action constraint benchmarks; the original work does not cover the temporally dependent feasible set $\mathcal{F}_t$ induced by $a_{t-1}$, and adapting it to rate constraints requires state augmentation and recomputing truncation endpoints at each step. This approach and DD-SRad represent two parallel design routes within action-constrained RL: the former focuses on correct density modeling of the policy distribution, the latter on analytic parameterization of $\ell_\infty$ geometry; the two share the same reachable set under an $\ell_\infty$ feasible set, differing in that one handles the density function for stochastic sampling while the other uses a deterministic mapping such that entropy and log-probability computations in $u$-space remain consistent with standard SAC.

	Synthesizing the above analysis, existing methods share a common geometric mismatch problem when handling rate constraints. The rate constraint $|a_t^i - a_{t-1}^i| \leq \delta^i$ defines an $\ell_\infty$ hyperrectangle feasible set $\mathcal{F}_t = \{a : |a^i - a_{t-1}^i| \leq \delta^i,\ \forall i\}$ in the differential action space, while $\ell_2$ spherical parameterization (SRad) and the static feasible set assumption of generative model methods are both incompatible with this $\ell_\infty$ geometric structure. Under heterogeneous constraints ($\kappa \gg 1$), the effective exploration radius of $\ell_2$ spherical parameterization is limited to $R \leq \min_i\delta^i$, compressing the reachable set for high-budget dimensions to a fraction $\min_i\delta^i/\delta^i$ of the original radius---corresponding to approximately 60\% radius loss at $\kappa = 2.5$. The reachable set of $\ell_\infty$ projection methods (such as per-dimension clip) is consistent with $\mathcal{F}_t$, but clip truncates the gradient at the boundary of the feasible set, causing the policy gradient to lose directional information as the constraint approaches saturation.
	
	\section{Proofs of Theoretical Results}
	\label{app:proofs}
	
	This appendix collects complete proofs of all theoretical results stated in \S\ref{sec:theory}.
	Notation follows \S\ref{sec:methodology} throughout.
	Two elementary facts are used repeatedly throughout the proofs.
	
	\begin{itemize}
		\item \textbf{Squashing bound.}
		For $f(t) \triangleq t/\sqrt{1+t^2}$, one has $|f(t)| < 1$ for all $t \in \mathbb{R}$
		and $|f(t)| \leq |t|$ (since $\sqrt{1+t^2} \geq 1$).
		\item \textbf{Squashing derivative.}
		$f'(t) = (1+t^2)^{-3/2}$, which satisfies $0 < f'(t) \leq 1$ for all $t \in \mathbb{R}$.
	\end{itemize}

	\subsection{Proof of Augmented MDP Validity}
	\label{app:proof:augmented_mdp}
	
	\begin{proof}
		The three properties are verified in turn.
		
		\medskip
		\noindent\textbf{(i) Markov property.}
		By construction, $\tilde{s}_{t+1} = (s_{t+1}, a_t)$, where $s_{t+1}$ is drawn from the original transition kernel $P(\cdot \mid s_t, a_t)$ and the action component is set deterministically to $a_t$.
		Hence the augmented transition kernel satisfies
		\[
		\tilde{P}\!\left(\tilde{s}_{t+1} \mid \tilde{s}_t, a_t\right)
		= P\!\left(s_{t+1} \mid s_t, a_t\right).
		\]
		The right-hand side depends only on $(s_t, a_t)$, and $s_t$ is a component of $\tilde{s}_t$, so the
		transition probability is a function of $(\tilde{s}_t, a_t)$ alone.
		The Markov property therefore holds on the augmented state space $\tilde{\mathcal{S}} = \mathcal{S}\times\mathcal{A}$.
		
		\medskip
		\noindent\textbf{(ii) Policy validity.}
		The DD-SRad mapping \eqref{eq:ddsrad} takes $a_{\mathrm{prev}} = a_{t-1}$ as input, which is the second component of the augmented state $\tilde{s}_t = (s_t, a_{t-1})$.
		By Theorem~\ref{thm:constraint}, the output $a_t = \phi(u_t, a_{t-1})$ satisfies
		$|a_t^i - a_{t-1}^i| \leq \delta^i$ and $a_t^i \in [a_{\min}^i, a_{\max}^i]$ for all $i$ with probability~1.
		Therefore the stochastic policy $\pi : \tilde{\mathcal{S}} \to \Delta(\mathcal{A})$ is well-defined and maps every augmented state to a distribution over $\mathcal{A}$ satisfying constraint~\eqref{eq:rate_constraint}.
		
		\medskip
		\noindent\textbf{(iii) Existence of the optimal value function.}
		The augmented MDP $\tilde{\mathcal{M}}$ inherits all standard regularity conditions from the original MDP: the reward $r$ is bounded, the augmented action space $\mathcal{A}$ is compact (unchanged), and the augmented transition kernel $\tilde{P}$ is Lipschitz continuous in $(\tilde{s}, a)$ because $P$ is Lipschitz in $(s,a)$ and the action component of $\tilde{s}_{t+1}$ is set deterministically.
		Under these conditions, the augmented Bellman operator
		\[
		(\mathcal{T} Q)(\tilde{s}, a)
		\triangleq r(s,a) + \gamma\,\mathbb{E}_{\tilde{s}' \sim \tilde{P}(\cdot \mid \tilde{s},a)}\!\left[\max_{a'} Q(\tilde{s}',a')\right]
		\]
		is a $\gamma$-contraction in $\|\cdot\|_\infty$ on the space of bounded functions~\cite{bertsekas1996stochastic}.
		By Banach's fixed-point theorem, $\mathcal{T}$ admits a unique fixed point $Q^*$, which is the optimal action-value function of $\tilde{\mathcal{M}}$.
	\end{proof}

	\subsection{Proof of Theorem~\ref{thm:constraint} (Heterogeneous Rate Constraint Satisfaction)}
	\label{app:proof:constraint}

	\begin{proof}
		Fix any dimension $i \in \{1,\ldots,d\}$ and any $u^i \in \mathbb{R}$.
		Let $\Delta a^i \triangleq a^i - a_{\mathrm{prev}}^i = R_{\mathrm{eff}}^i(u^i, a_{\mathrm{prev}}^i)\cdot f(u^i)$.
		
		Three exhaustive cases are considered according to the sign of $u^i$.
		
		\medskip
		\noindent\textbf{Case 1: $u^i > 0$.}
		Then $R_{\mathrm{eff}}^i = \min\!\left(\delta^i,\, a_{\max}^i - a_{\mathrm{prev}}^i\right) \geq 0$ and $f(u^i) \in (0,1)$.
		\begin{enumerate}
			\item \textit{Rate constraint.}
			$|\Delta a^i| = R_{\mathrm{eff}}^i \cdot f(u^i) < R_{\mathrm{eff}}^i \leq \delta^i$,
			where the strict inequality uses $f(u^i) < 1$.
			(When $a_{\mathrm{prev}}^i = a_{\max}^i$ one has $R_{\mathrm{eff}}^i = 0$, giving
			$\Delta a^i = 0$, which also satisfies $|\Delta a^i| \leq \delta^i$.)
			\item \textit{Position constraint.}
			Since $\Delta a^i = R_{\mathrm{eff}}^i f(u^i) \geq 0$,
			it follows that $a^i = a_{\mathrm{prev}}^i + \Delta a^i \geq a_{\mathrm{prev}}^i \geq a_{\min}^i$.
			Moreover, $\Delta a^i < R_{\mathrm{eff}}^i \leq a_{\max}^i - a_{\mathrm{prev}}^i$,
			so $a^i < a_{\max}^i$.
			Hence $a^i \in [a_{\min}^i, a_{\max}^i]$.
		\end{enumerate}
		If additionally $a_{\mathrm{prev}}^i \in (a_{\min}^i, a_{\max}^i)$, then $R_{\mathrm{eff}}^i > 0$,
		so $|\Delta a^i| < \delta^i$ and $a^i \in (a_{\min}^i, a_{\max}^i)$ hold strictly.
		
		\medskip
		\noindent\textbf{Case 2: $u^i < 0$.}
		Then $R_{\mathrm{eff}}^i = \min\!\left(\delta^i,\, a_{\mathrm{prev}}^i - a_{\min}^i\right) \geq 0$ and $f(u^i) \in (-1,0)$.
		A symmetric argument to Case~1 (with the roles of $a_{\min}^i$ and $a_{\max}^i$ exchanged) yields
		$|\Delta a^i| \leq \delta^i$ and $a^i \in [a_{\min}^i, a_{\max}^i]$,
		with strict inclusions when $a_{\mathrm{prev}}^i \in (a_{\min}^i, a_{\max}^i)$.
		
		\medskip
		\noindent\textbf{Case 3: $u^i = 0$.}
		Then $\Delta a^i = R_{\mathrm{eff}}^i \cdot f(0) = 0$, so $|\Delta a^i| = 0 \leq \delta^i$ and $a^i = a_{\mathrm{prev}}^i \in \mathcal{A}^i$.
		
		\medskip
		Applying the above to every dimension independently and observing that the latent action $u$ is drawn from a continuous distribution (hence $u^i = 0$ occurs with probability~0), the constraints $|a^i - a_{\mathrm{prev}}^i| \leq \delta^i$ and $a^i \in [a_{\min}^i, a_{\max}^i]$ hold for all $i$ with probability~1.
	\end{proof}

	\subsection{Proof of Proposition~\ref{prop:gradient} (Jacobian Structure and Gradient Norm Bound)}
	\label{app:proof:gradient}
	
	\begin{proof}
		\noindent\textbf{(i) Diagonal structure and positivity.}
		Fix dimension $i$ and suppose $u^i \neq 0$.
		On the open half-lines $\{u^i > 0\}$ and $\{u^i < 0\}$, the effective radius $R_{\mathrm{eff}}^i(u^i, a_{\mathrm{prev}}^i)$ is piecewise constant (it equals $\min(\delta^i, a_{\max}^i - a_{\mathrm{prev}}^i)$ for $u^i > 0$ and $\min(\delta^i, a_{\mathrm{prev}}^i - a_{\min}^i)$ for $u^i < 0$).
		Differentiating $a^i = a_{\mathrm{prev}}^i + R_{\mathrm{eff}}^i \cdot f(u^i)$ with respect to $u^i$ on each open half-line via the chain rule gives
		\[
		\frac{\partial a^i}{\partial u^i}
		= R_{\mathrm{eff}}^i \cdot f'(u^i)
		= \frac{R_{\mathrm{eff}}^i}{\left(1+(u^i)^2\right)^{3/2}}.
		\]
		Since $a_j$ depends only on $u^j$ for $j \neq i$, all off-diagonal entries of the Jacobian $J(u) = \partial a/\partial u$ are zero.
		Hence $J(u)$ is diagonal.
		When $a_{\mathrm{prev}}^i \in (a_{\min}^i, a_{\max}^i)$, $R_{\mathrm{eff}}^i > 0$, so $\partial a^i/\partial u^i > 0$.
		
		\medskip
		\noindent\textbf{(ii) Spectral norm and condition number.}
		Since $J(u)$ is diagonal, its spectral norm equals the largest absolute diagonal entry:
		\[
		\|J(u)\|_2
		= \max_i \frac{R_{\mathrm{eff}}^i}{(1+(u^i)^2)^{3/2}}
		\leq \max_i R_{\mathrm{eff}}^i
		\leq \max_i \delta^i,
		\]
		where the bound $(1+(u^i)^2)^{3/2} \geq 1$ and $R_{\mathrm{eff}}^i \leq \delta^i$ are applied.
		
		For the condition number, note that the $i$-th diagonal entry $J_{ii}(u) = R_{\mathrm{eff}}^i/(1+(u^i)^2)^{3/2}$ is maximized at $u^i = 0$ with value $R_{\mathrm{eff}}^i|_{u^i=0} = \delta^i$.
		At $u = \mathbf{0}$ with $R_{\mathrm{eff}}^i = \delta^i$ for all $i$ (i.e., away from position boundaries),
		\[
		\kappa(J(\mathbf{0}))
		= \frac{\max_i \delta^i}{\min_i \delta^i} = \kappa.
		\]
		For general $u \neq \mathbf{0}$, the factor $(1+(u^i)^2)^{3/2}$ varies across dimensions and may either increase or decrease the ratio of diagonal entries; it does not exceed the ratio at $u=\mathbf{0}$, so $\kappa$ is the worst-case upper bound on the condition number.
		
		\medskip
		\noindent\textbf{(iii) Gradient norm bound.}
		For the TD3 backbone, the actor loss is $\mathcal{L}_{\mathrm{actor}}(u) = -Q(\tilde{s}, \phi(u)) + \lambda_{\mathrm{base}}\|u\|_2^2$.
		Differentiating with respect to $u$ and applying the triangle inequality:
		\[
		\|\nabla_u \mathcal{L}_{\mathrm{actor}}\|_2
		= \left\|\nabla_a Q(\tilde{s},a)\big|_{a=\phi(u)} \cdot J(u)
		+ 2\lambda_{\mathrm{base}} u\right\|_2
		\leq \|J(u)\|_2 \cdot \|\nabla_a Q(\tilde{s},a)\|_2 + 2\lambda_{\mathrm{base}}\|u\|_2.
		\]
		Substituting the spectral norm bound from~(ii) yields inequality~\eqref{eq:grad_bound}.
		For the SAC backbone, the entropy term $\alpha\log\pi_\theta(a|\tilde{s})$ contributes an additional gradient term $\alpha\nabla_u \log\pi_\theta(a|\tilde{s})$, which enters linearly and is accounted for as stated.
	\end{proof}
	
	\subsection{Proof of Proposition (Lipschitz Continuity of the Mapping)}
	\label{app:proof:lipschitz}
	\label{prop:lipschitz}
	
	\begin{proof}
		Fix $a_{\mathrm{prev}} \in \mathcal{A}$ and $u, \hat{u} \in \mathbb{R}^d$.
		For each dimension $i$, define $g^i(v) \triangleq R_{\mathrm{eff}}^i(v, a_{\mathrm{prev}}^i) \cdot f(v)$
		so that $a^i - \hat{a}^i = g^i(u^i) - g^i(\hat{u}^i)$.
		Denote $R^{+} \triangleq \min(\delta^i, a_{\max}^i - a_{\mathrm{prev}}^i)$ and $R^{-} \triangleq \min(\delta^i, a_{\mathrm{prev}}^i - a_{\min}^i)$; both satisfy $R^{\pm} \leq \delta^i$.
		
		\medskip
		\noindent\textbf{Case 1: $u^i, \hat{u}^i > 0$.}
		Both map under $R^+$, so
		\[
		|g^i(u^i) - g^i(\hat{u}^i)|
		= R^+ |f(u^i) - f(\hat{u}^i)|.
		\]
		By the Mean Value Theorem and $f'(t) = (1+t^2)^{-3/2} \leq 1$:
		\[
		|f(u^i) - f(\hat{u}^i)| \leq |u^i - \hat{u}^i|.
		\]
		Hence $|g^i(u^i) - g^i(\hat{u}^i)| \leq R^+ |u^i - \hat{u}^i| \leq \delta^i |u^i - \hat{u}^i|$.
		
		\medskip
		\noindent\textbf{Case 2: $u^i, \hat{u}^i < 0$.}
		Both map under $R^-$; the identical argument gives $|g^i(u^i) - g^i(\hat{u}^i)| \leq \delta^i |u^i - \hat{u}^i|$.
		
		\medskip
		\noindent\textbf{Case 3: $u^i \geq 0$ and $\hat{u}^i \leq 0$ (or vice versa).}
		Assume without loss of generality $u^i \geq 0 \geq \hat{u}^i$.
		Then $g^i(u^i) = R^+ f(u^i) \geq 0$ and $g^i(\hat{u}^i) = R^- f(\hat{u}^i) \leq 0$, so
		\begin{align*}
			|g^i(u^i) - g^i(\hat{u}^i)|
			&= R^+ f(u^i) + R^- |f(\hat{u}^i)| \\
			&\leq \delta^i \!\left[f(u^i) + |f(\hat{u}^i)|\right] \\
			&\leq \delta^i \!\left[u^i + |\hat{u}^i|\right]
			= \delta^i |u^i - \hat{u}^i|,
		\end{align*}
		where the second inequality uses $|f(t)| \leq |t|$, and the last equality holds because $u^i \geq 0 \geq \hat{u}^i$.
		
		\medskip
		Combining all three cases, $|a^i - \hat{a}^i| \leq \delta^i |u^i - \hat{u}^i|$ holds for every $i$.
		Squaring and summing over all dimensions:
		\[
		\|\phi(u) - \phi(\hat{u})\|_2^2
		= \sum_{i=1}^d |a^i - \hat{a}^i|^2
		\leq \sum_{i=1}^d (\delta^i)^2 (u^i - \hat{u}^i)^2
		\leq \|\boldsymbol{\delta}\|_2^2 \cdot \|u - \hat{u}\|_2^2.
		\]
		Taking the square root gives the stated Lipschitz bound.
		
		\medskip
		\noindent\textbf{Tightness.}
		Fix any $\varepsilon > 0$ and set $u^i = \varepsilon$, $\hat{u}^i = -\varepsilon$ for all $i$, with $a_{\mathrm{prev}}^i$ in the interior of $\mathcal{A}^i$ so that $R^{\pm} = \delta^i$.
		Then each dimension falls into Case~3, and the inequality chain above becomes an equality to leading order as $\varepsilon \to 0$ (since $f(\varepsilon) \to \varepsilon$ and $|f(-\varepsilon)| \to \varepsilon$).
		Hence $\|\boldsymbol{\delta}\|_2$ is the best possible Lipschitz constant.
	\end{proof}
	
	\subsection{Proof of Theorem~\ref{thm:exploration} (Tight $\ell_\infty$-Coverage of the Reachable Set)}
	\label{app:proof:exploration}
	
	\begin{proof}
		Fix $a_{\mathrm{prev}} \in \mathcal{A}$.
		For each dimension $i$, define the direction-dependent effective radii
		\[
		R^{+,i} \triangleq \min\!\left(\delta^i,\; a_{\max}^i - a_{\mathrm{prev}}^i\right), \qquad
		R^{-,i} \triangleq \min\!\left(\delta^i,\; a_{\mathrm{prev}}^i - a_{\min}^i\right),
		\]
		corresponding to $R_{\mathrm{eff}}^i$ when $u^i > 0$ and $u^i < 0$, respectively.
		
		\medskip
		\noindent\textbf{Step 1: Characterization of $\mathcal{R}_{\mathrm{DD}}$.}
		
		\textit{Inclusion (image is contained in the open box).}
		By Theorem~\ref{thm:constraint}, for every $u \in \mathbb{R}^d$ and every $i$:
		\begin{itemize}
			\item If $u^i > 0$: $\Delta a^i = R^{+,i} f(u^i) \in (0, R^{+,i})$, so $a^i \in (a_{\mathrm{prev}}^i,\; a_{\mathrm{prev}}^i + R^{+,i})$.
			\item If $u^i < 0$: $\Delta a^i = R^{-,i} f(u^i) \in (-R^{-,i}, 0)$, so $a^i \in (a_{\mathrm{prev}}^i - R^{-,i},\; a_{\mathrm{prev}}^i)$.
			\item If $u^i = 0$: $a^i = a_{\mathrm{prev}}^i$.
		\end{itemize}
		In all cases, $a^i \in (a_{\mathrm{prev}}^i - R^{-,i},\; a_{\mathrm{prev}}^i + R^{+,i})$.
		
		\textit{Surjectivity (every interior point is achieved).}
		Fix any target $a^* = (a^{*1}, \ldots, a^{*d})$ satisfying $a^{*i} \in (a_{\mathrm{prev}}^i - R^{-,i},\; a_{\mathrm{prev}}^i + R^{+,i})$ for all $i$.
		Set $\Delta^{*i} \triangleq a^{*i} - a_{\mathrm{prev}}^i$ and consider two subcases.
		
		If $\Delta^{*i} > 0$: the goal is to find $u^{*i} > 0$ satisfying $R^{+,i} f(u^{*i}) = \Delta^{*i}$.
		Since $\Delta^{*i} < R^{+,i}$, the ratio $\Delta^{*i}/R^{+,i} \in (0,1)$.
		The equation $f(u) = c$ for $c \in (0,1)$ has the unique positive solution $u = c/\sqrt{1-c^2}$, giving
		\[
		u^{*i} = \frac{\Delta^{*i}}{\sqrt{(R^{+,i})^2 - (\Delta^{*i})^2}} > 0.
		\]
		For this $u^{*i}$, $R_{\mathrm{eff}}^i(u^{*i}, a_{\mathrm{prev}}^i) = R^{+,i}$ by definition, so $\phi(u^*)^i = a_{\mathrm{prev}}^i + R^{+,i} f(u^{*i}) = a^{*i}$ as required.
		
		If $\Delta^{*i} < 0$: a symmetric construction with $R^{-,i}$ yields the unique negative preimage $u^{*i} = \Delta^{*i}/\sqrt{(R^{-,i})^2 - (\Delta^{*i})^2} < 0$.
		
		If $\Delta^{*i} = 0$: $u^{*i} = 0$ gives $a^{*i} = a_{\mathrm{prev}}^i$.
		
		Since dimensions are decoupled, setting $u^* = (u^{*1}, \ldots, u^{*d})$ achieves $\phi(u^*) = a^*$.
		
		\textit{Conclusion.} The reachable set (image) is
		\[
		\mathcal{R}_{\mathrm{DD}}
		= \prod_{i=1}^d \!\left(a_{\mathrm{prev}}^i - R^{-,i},\; a_{\mathrm{prev}}^i + R^{+,i}\right),
		\]
		and its closure is the closed product interval $\overline{\mathcal{R}_{\mathrm{DD}}} = \prod_{i=1}^d [a_{\mathrm{prev}}^i - R^{-,i},\; a_{\mathrm{prev}}^i + R^{+,i}]$.
		
		\medskip
		\noindent\textbf{Step 2: Agreement with $\mathcal{F}$.}
		Observe that
		\[
		a_{\mathrm{prev}}^i + R^{+,i}
		= a_{\mathrm{prev}}^i + \min\!\left(\delta^i, a_{\max}^i - a_{\mathrm{prev}}^i\right)
		= \min\!\left(a_{\mathrm{prev}}^i + \delta^i,\; a_{\max}^i\right),
		\]
		\[
		a_{\mathrm{prev}}^i - R^{-,i}
		= a_{\mathrm{prev}}^i - \min\!\left(\delta^i, a_{\mathrm{prev}}^i - a_{\min}^i\right)
		= \max\!\left(a_{\mathrm{prev}}^i - \delta^i,\; a_{\min}^i\right).
		\]
		Therefore, for each dimension $i$:
		\[
		\left[a_{\mathrm{prev}}^i - R^{-,i},\; a_{\mathrm{prev}}^i + R^{+,i}\right]
		= \left[a_{\mathrm{prev}}^i - \delta^i,\; a_{\mathrm{prev}}^i + \delta^i\right] \cap \left[a_{\min}^i,\; a_{\max}^i\right].
		\]
		Taking the Cartesian product over all dimensions:
		\[
		\overline{\mathcal{R}_{\mathrm{DD}}}
		= \prod_{i=1}^d \!\left(\left[a_{\mathrm{prev}}^i - \delta^i, a_{\mathrm{prev}}^i + \delta^i\right] \cap [a_{\min}^i, a_{\max}^i]\right)
		= \mathcal{F},
		\]
		where the last equality uses the product structure of both the rate-constraint set and $\mathcal{A}$.
		This establishes \eqref{eq:reach_tight}.
		
		\medskip
		\noindent\textbf{Step 3: Volume ratio.}
		When $a_{\mathrm{prev}}^i$ is sufficiently far from the position boundaries so that $R^{+,i} = R^{-,i} = \delta^i$ for all $i$, the Lebesgue measure of $\overline{\mathcal{R}_{\mathrm{DD}}}$ is
		\[
		V_{\mathrm{DD}} = \prod_{i=1}^d \!\left(R^{+,i} + R^{-,i}\right) = \prod_{i=1}^d 2\delta^i.
		\]
		The $d$-dimensional $\ell_2$ ball with radius $R = \min_i \delta^i$ has volume
		\[
		V_{\mathrm{SRad}} = \frac{\pi^{d/2}}{\Gamma(d/2+1)}\left(\min_i \delta^i\right)^d.
		\]
		Dividing:
		\[
		\frac{V_{\mathrm{DD}}}{V_{\mathrm{SRad}}}
		= \frac{\prod_{i=1}^d 2\delta^i}{\dfrac{\pi^{d/2}}{\Gamma(d/2+1)}\left(\min_i \delta^i\right)^d}
		= \frac{2^d\,\Gamma(d/2+1)}{\pi^{d/2}} \cdot \frac{\prod_i \delta^i}{\left(\min_i \delta^i\right)^d},
		\]
		which establishes \eqref{eq:volume_ratio}. \qquad
	\end{proof}
	
	\section{Structure and Hyperparameters}
	\label{app:hyperparams}
	
	The complete DD-SRad algorithm is presented in Algorithm~\ref{alg:ddsrad}.
	
	\begin{algorithm}[htbp]
		\captionsetup{font=scriptsize, labelfont=scriptsize}
		\caption{DD-SRad (Off-Policy Backbone)}
		\label{alg:ddsrad}
		\KwIn{Per-dimension rate limits $\boldsymbol{\delta}$, latent action regularization coefficient $\lambda_{\text{base}}$, position bounds $[a_{\min}, a_{\max}]$, off-policy backbone $\mathcal{B}$}
		Initialize augmented replay buffer $\mathcal{D}$, policy network $\pi_\theta$, action-value function $Q_\phi$, and target network $Q_{\phi'}$\;
		\For{each training step}{
			\textbf{Environment interaction:} Read $s_t$ from the environment, retrieve previous action $a_{t-1}$ from the augmented state, construct augmented state $\tilde{s}_t = (s_t, a_{t-1})$\;
			\textbf{Constrained action generation (DD-SRad core):} Sample latent action $u_t \sim \pi_\theta(\cdot \mid \tilde{s}_t)$\;
			Compute effective radius $R_{\text{eff}}^i(u_t^i, a_{t-1}^i)$ independently per dimension $i$ (Eq.~\ref{eq:reff}); apply per-dimension spherical squashing to obtain $a_t$ (Eq.~\ref{eq:ddsrad})\;
			Execute $a_t$, obtain $r_t$ and $s_{t+1}$, store $(\tilde{s}_t, a_t, r_t, \tilde{s}_{t+1})$ in $\mathcal{D}$\;
			\textbf{Q-function update (determined by backbone $\mathcal{B}$):} Sample a batch from $\mathcal{D}$; update $Q_\phi$ according to the Bellman target of backbone $\mathcal{B}$\;
			\textbf{Policy gradient update:} Minimize $\mathcal{L}_{\text{actor}}$ per Eq.~(\ref{eq:actor_loss}); backpropagate gradient through the DD-SRad Jacobian (Proposition~\ref{prop:gradient}) via the chain rule to $\theta$; under the SAC backbone, compute $\log\pi_\theta(a|\tilde{s})$ via change of variables per Eq.~(\ref{eq:sac_logprob})\;
			Update target network $\phi' \leftarrow \tau\phi + (1-\tau)\phi'$\;
		}
	\end{algorithm}
	
	All MuJoCo benchmark experiments were run on a single NVIDIA RTX A5000 GPU (24\,GB VRAM), with 1.5--2M training steps per environment-backbone-constraint configuration; single-run training time per environment is approximately 2--3 hours. IsaacLab simulation experiments were also conducted on a single A5000 using 4096 parallel simulation environments: H1 rough terrain was trained for 50k gradient steps with approximately 6 hours per run; G1 flat terrain was trained for 100k gradient steps with approximately 12 hours per run; 5 seeds were run serially, with complete H1 and G1 experiments taking approximately 30 hours and 60 hours respectively. Code is based on PyTorch 3.11 and IsaacLab 5.1.0, with Ubuntu 22.04 as the server operating system.
	
	Table~\ref{tab:hyperparams_full} lists the complete network and training hyperparameters used in all experiments. Unless otherwise specified, all experiments use the same network architecture and training configuration.
	
	\begin{table}[htbp]
		\centering
		\caption{Network architecture and training hyperparameters (common to all experiments)}
		\label{tab:hyperparams_full}
		\footnotesize
		\begin{tabular}{llc}
			\toprule
			Category & Parameter & Value \\
			\midrule
			\multirow{4}{*}{Network Architecture}
			& Actor hidden layers & [256, 256] \\
			& Critic hidden layers & [256, 256] \\
			& Activation function & ReLU \\
			& Output activation & Tanh (Actor mean), Identity (log\_std) \\
			\midrule
			\multirow{5}{*}{Optimizer}
			& Actor learning rate & $3 \times 10^{-4}$ \\
			& Critic learning rate & $3 \times 10^{-4}$ \\
			& Alpha learning rate & $3 \times 10^{-4}$ \\
			& Optimizer type & Adam \\
			& Gradient clipping & 1.0 \\
			\midrule
			\multirow{7}{*}{Training parameters}
			& Batch size & 256 \\
			& Replay buffer size & 1,000,000 \\
			& Learning starts & 10,000 \\
			& Train frequency & 1 \\
			& Gradient steps & 1 \\
			& Target update $\tau$ & 0.005 \\
			& Discount $\gamma$ & 0.99 \\
			\midrule
			\multirow{3}{*}{Other}
			& Evaluation frequency & Every 5,000 steps \\
			& Evaluation episodes & 5 \\
			& Random seeds & 12345, 22345, 32345, 42345, 52345 \\
			\bottomrule
		\end{tabular}
	\end{table}
	
	Table~\ref{tab:constraint_params} lists the constraint parameter settings for each MuJoCo experiment.
	
	\begin{table}[htbp]
		\centering
		\caption{Constraint parameter settings (MuJoCo experiments, by experiment group)}
		\label{tab:constraint_params}
		\footnotesize
		\begin{tabular}{lllc}
			\toprule
			Experiment & Environment & Constraint type & $\boldsymbol{\delta}$ \\
			\midrule
			\multirow{8}{*}{§\ref{sec:benchmark} (Exp. 1/2)}
			& \multirow{2}{*}{Ant-v5}
			& Wide homogeneous & $[1.0]^8$ \\
			&  & Tight heterogeneous & $[0.2^4, 0.5^4]$ \\
			& \multirow{2}{*}{Humanoid-v5}
			& Wide homogeneous & $[0.4]^{17}$ \\
			&  & Tight heterogeneous & $[0.8^6, 0.5^6, 0.2^5]$ \\
			& \multirow{2}{*}{Hopper-v5}
			& Wide homogeneous & $[1.0]^{3}$ \\
			&  & Tight heterogeneous & $[0.2, 0.5, 0.5]$ \\
			& \multirow{2}{*}{HalfCheetah-v5}
			& Wide homogeneous & $[1.0]^{6}$ \\
			&  & Tight heterogeneous & $[0.2^3, 0.5^3]$ \\
			\midrule
			\multirow{2}{*}{§\ref{sec:sensitivity} (Exp. 3)}
			& Ant-v5 (wide) & All methods & $[1.0]^8$ \\
			& Ant-v5 (tight) & All methods & $[0.2^4, 0.5^4]$ \\
			\bottomrule
		\end{tabular}
	\end{table}
	
	Table~\ref{tab:sim_constraint_params} lists the per-joint-group constraint parameters for each robot in the simulation experiments.
	
	\begin{table}[htbp]
		\centering
		\caption{Constraint parameters for simulation experiments (from official technical specifications; normalized to joint position increment units, $\delta_i = \omega_{\max}^i \times \Delta t \times \eta$, $\Delta t=0.02$\,s, $\eta$ is the safety factor)}
		\label{tab:sim_constraint_params}
		\footnotesize
		\begin{tabular}{llccc}
			\toprule
			Robot & Terrain & Joint group & No. joints & $\delta$ (normalized) \\
			\midrule
			\multirow{3}{*}{Unitree H1} & \multirow{3}{*}{Rough}
			& Hip yaw/roll/pitch (proximal) & 6 & 0.40 \\
			&& Knee (mid)                   & 2 & 0.25 \\
			&& Ankle (distal)               & 2 & 0.18 \\
			\midrule
			\multirow{4}{*}{Unitree G1} & \multirow{4}{*}{Flat}
			& Hip pitch/roll (proximal) & 4 & 0.40 \\
			&& Hip yaw/knee (mid)        & 4 & 0.35 \\
			&& Ankle pitch               & 2 & 0.25 \\
			&& Ankle roll/torso          & 3 & 0.18--0.25 \\
			\bottomrule
		\end{tabular}
	\end{table}
	
	Table~\ref{tab:baseline_params} lists the hyperparameters for each baseline method.
	
	\begin{table}[htbp]
		\centering
		\caption{Baseline method hyperparameters}
		\label{tab:baseline_params}
		\footnotesize
		\begin{tabular}{llc}
			\toprule
			Method & Parameter & Value \\
			\midrule
			DD-SRad
			& $\lambda_{\text{base}}$ & 0.005 \\
			\midrule
			\multirow{3}{*}{SAC-BoxPre+}
			& Projection & Per-dimension clip ($\ell_\infty$ hyperrectangle) \\
			& Critic training & Pre-projected actions + penalty \\
			& Penalty coefficient & Equivalent to DD-SRad $\lambda_{\text{base}}$ \\
			\midrule
			\multirow{2}{*}{SAC-Post(QP)}
			& Projection & Per-dimension clip post-processing \\
			& Critic training & Pre-projected actions (no penalty) \\
			\midrule
			\multirow{2}{*}{SRad-Strict}
			& Global radius $R$ & $\min_i \delta^i$ \\
			& Other parameters & Same as DD-SRad \\
			\midrule
			\multirow{3}{*}{SRad-QP}
			& Mapping & SRad ($R=\min_i\delta^i$) \\
			& Post-processing & QP projection to $\mathcal{F}_t$ \\
			& Other parameters & Same as DD-SRad \\
			\bottomrule
		\end{tabular}
	\end{table}
	
	Note: the network architecture, learning rates, and buffer sizes of all baseline methods are identical to DD-SRad to ensure fair comparison.
	
	\section{Extended Benchmark Performance}
	\label{app:bench_curves}
	
	Figure~\ref{fig:bench_reward} presents the mean return learning curves across four MuJoCo environments (Ant-v5, Humanoid-v5, HalfCheetah-v5, Hopper-v5) under tight heterogeneous constraints, for both SAC and TD3 backbones, covering all six methods. Shaded regions indicate mean $\pm$ standard deviation over 5 random seeds. Under wide homogeneous constraints all methods converge to similar levels; the curves under tight heterogeneous constraints are shown here to visualize the training stability differences discussed in \S\ref{sec:benchmark}, in particular the systematic collapse of SRad-Strict (high CV) versus the stable convergence of DD-SRad.
	
	\begin{figure*}[htbp]
		\centering
		\begin{subfigure}{0.24\linewidth}
			\includegraphics[width=\linewidth]{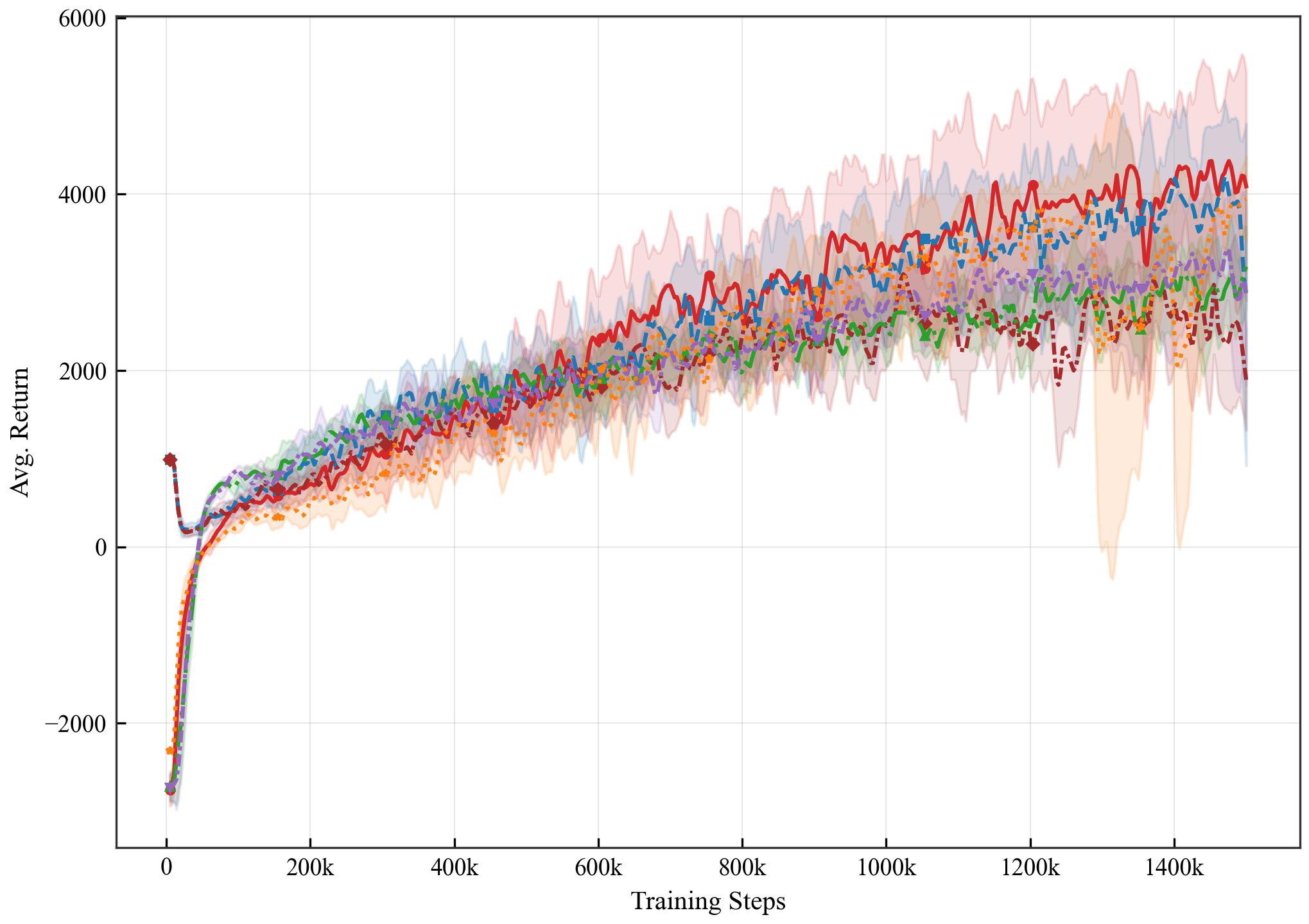}
			\caption{Ant-SAC-W}
		\end{subfigure}
		\hfill
		\begin{subfigure}{0.24\linewidth}
			\includegraphics[width=\linewidth]{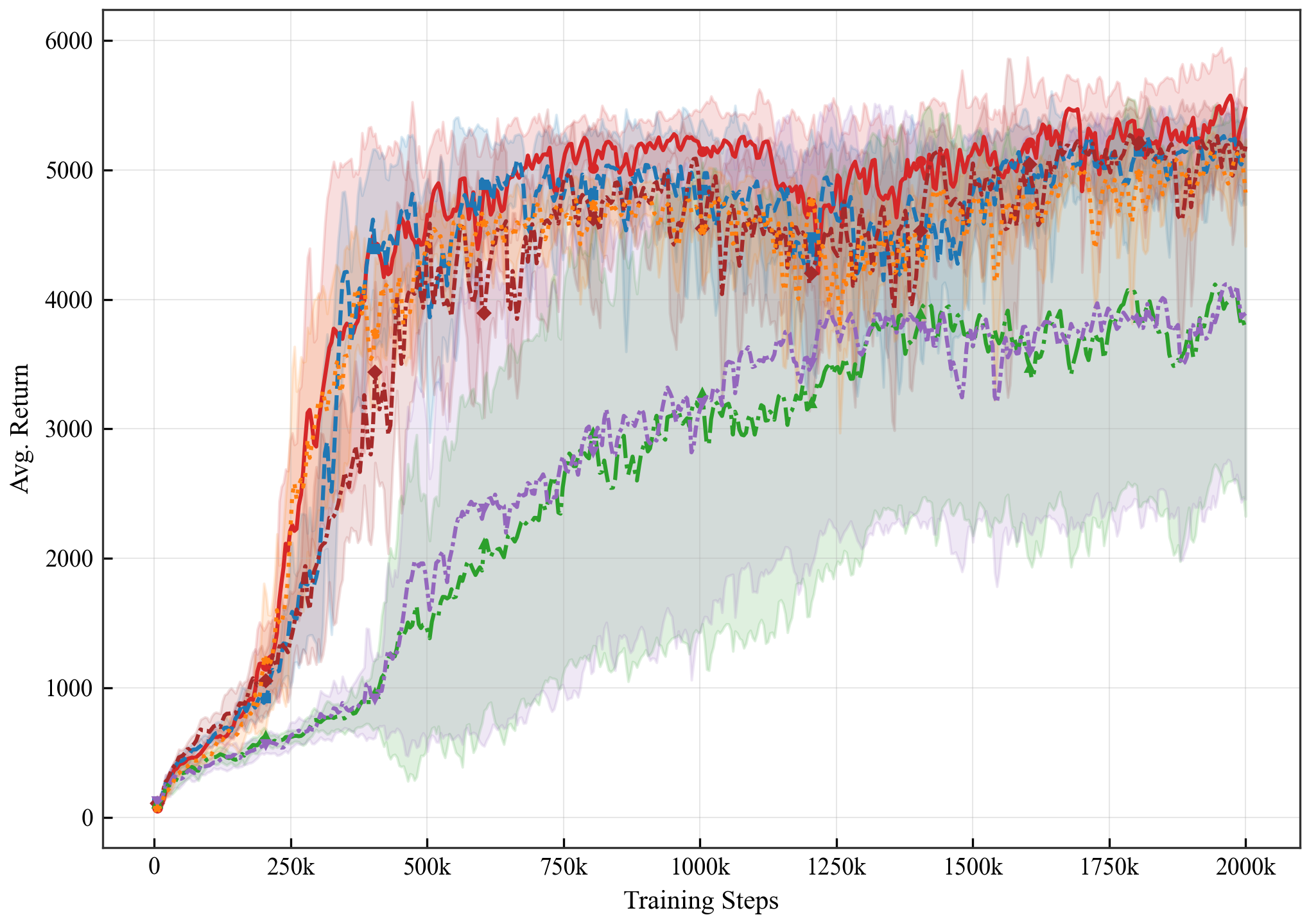}
			\caption{Humanoid-SAC-W}
		\end{subfigure}
		\vspace{0.5em}
		\begin{subfigure}{0.24\linewidth}
			\includegraphics[width=\linewidth]{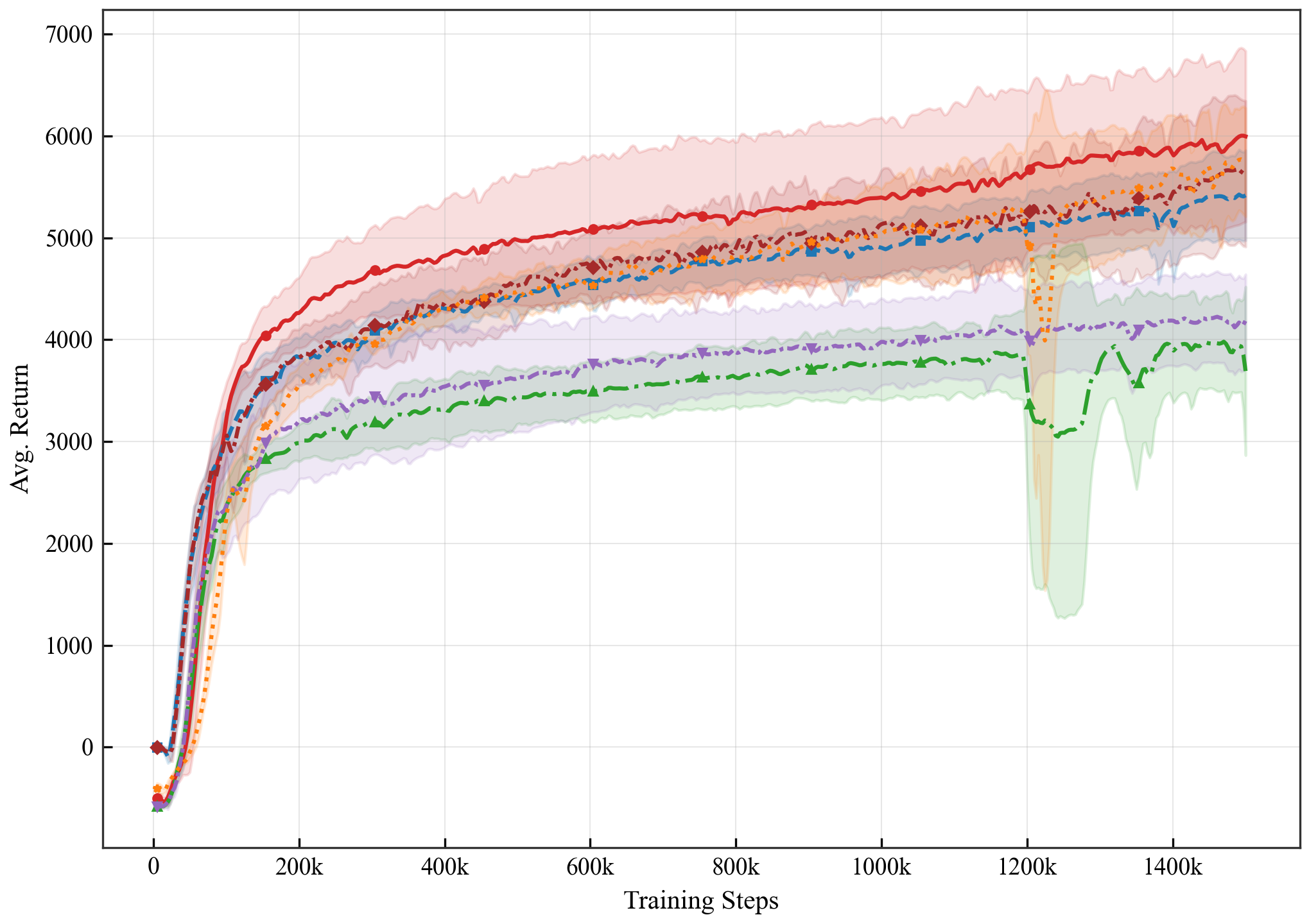}
			\caption{HalfCheetah-SAC-W}
		\end{subfigure}
		\hfill
		\begin{subfigure}{0.24\linewidth}
			\includegraphics[width=\linewidth]{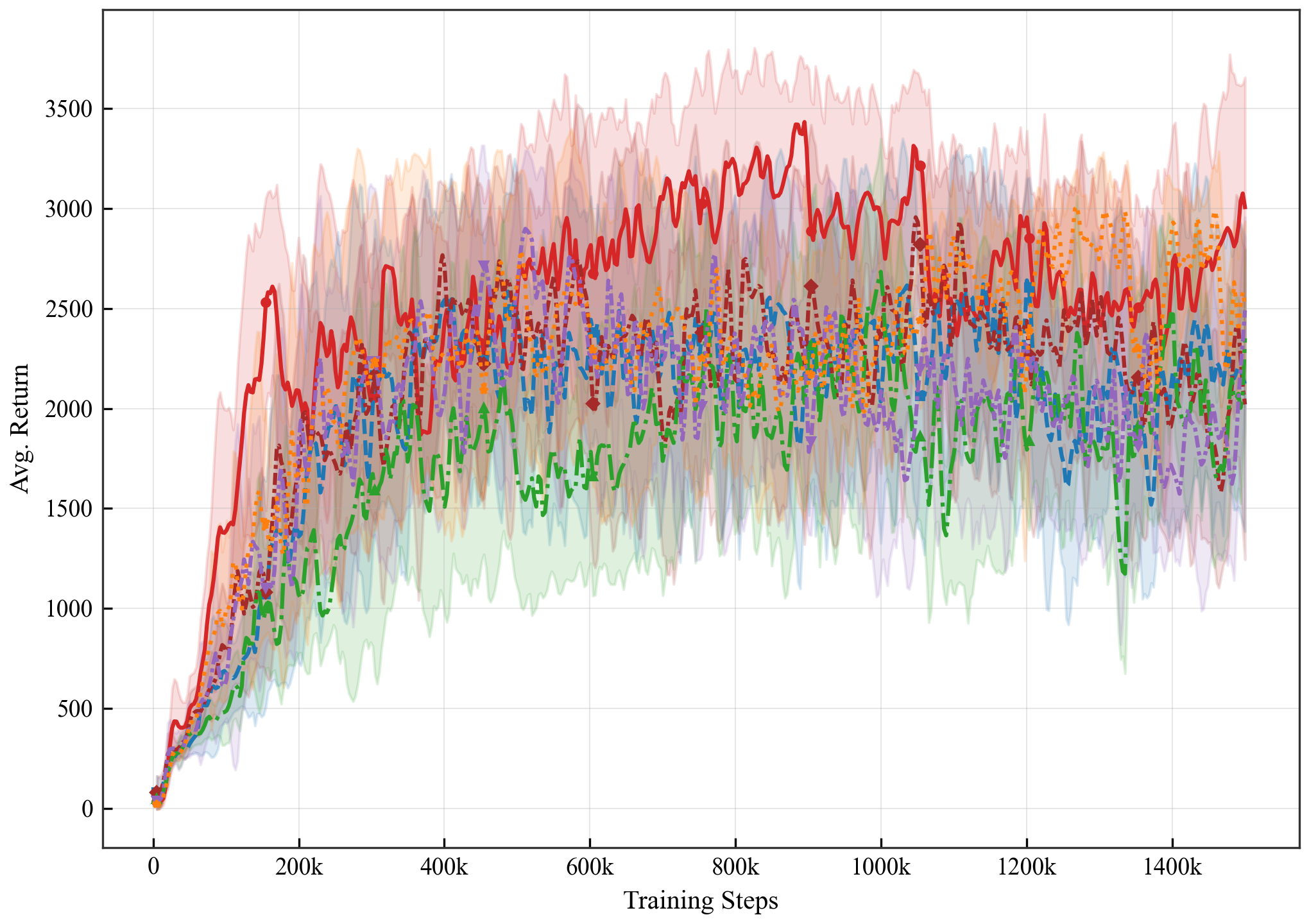}
			\caption{Hopper-SAC-W}
		\end{subfigure}
		
		\vspace{0.2em}
		\begin{subfigure}{0.24\linewidth}
			\includegraphics[width=\linewidth]{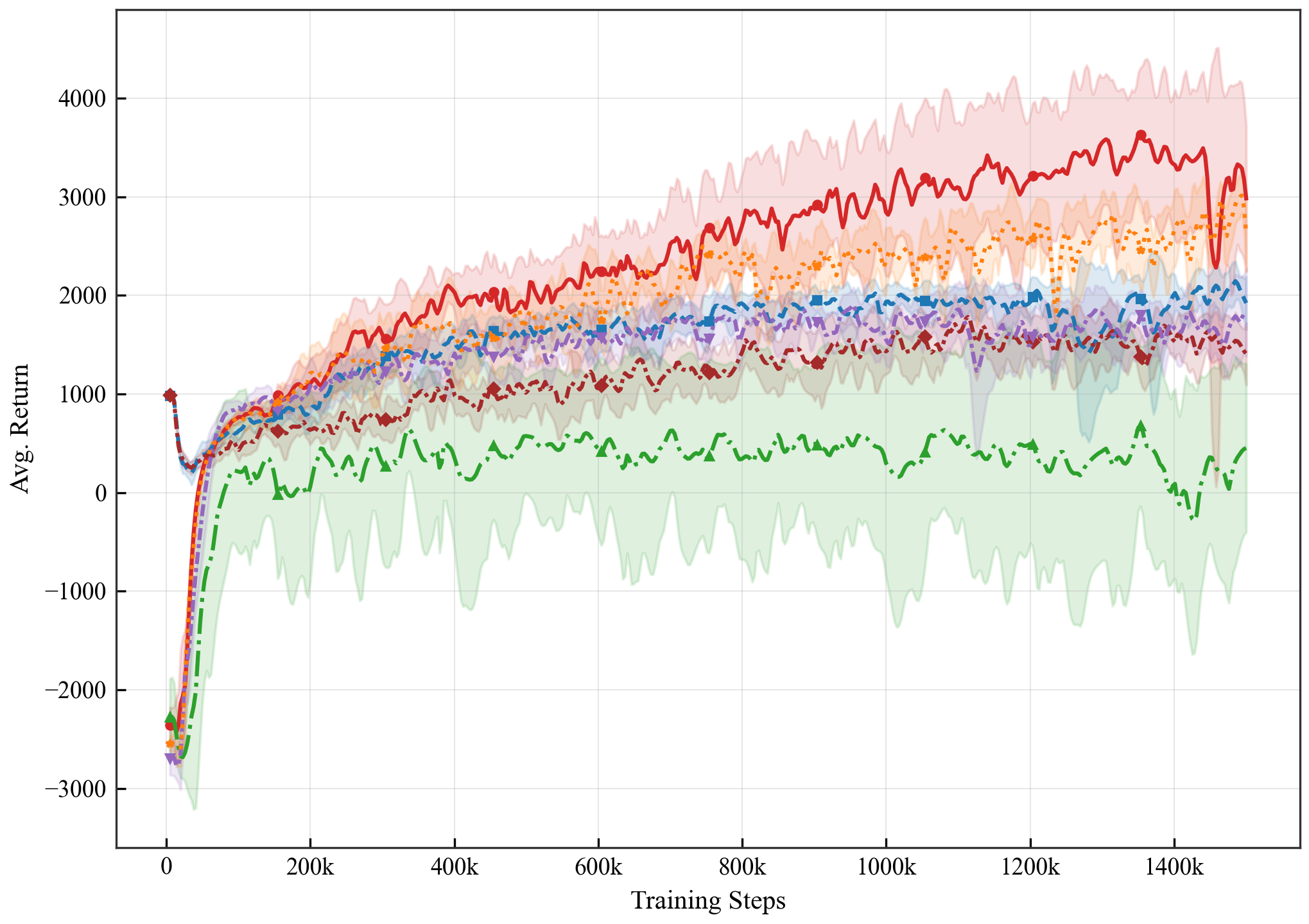}
			\caption{Ant-SAC-S}
		\end{subfigure}
		\hfill
		\begin{subfigure}{0.24\linewidth}
			\includegraphics[width=\linewidth]{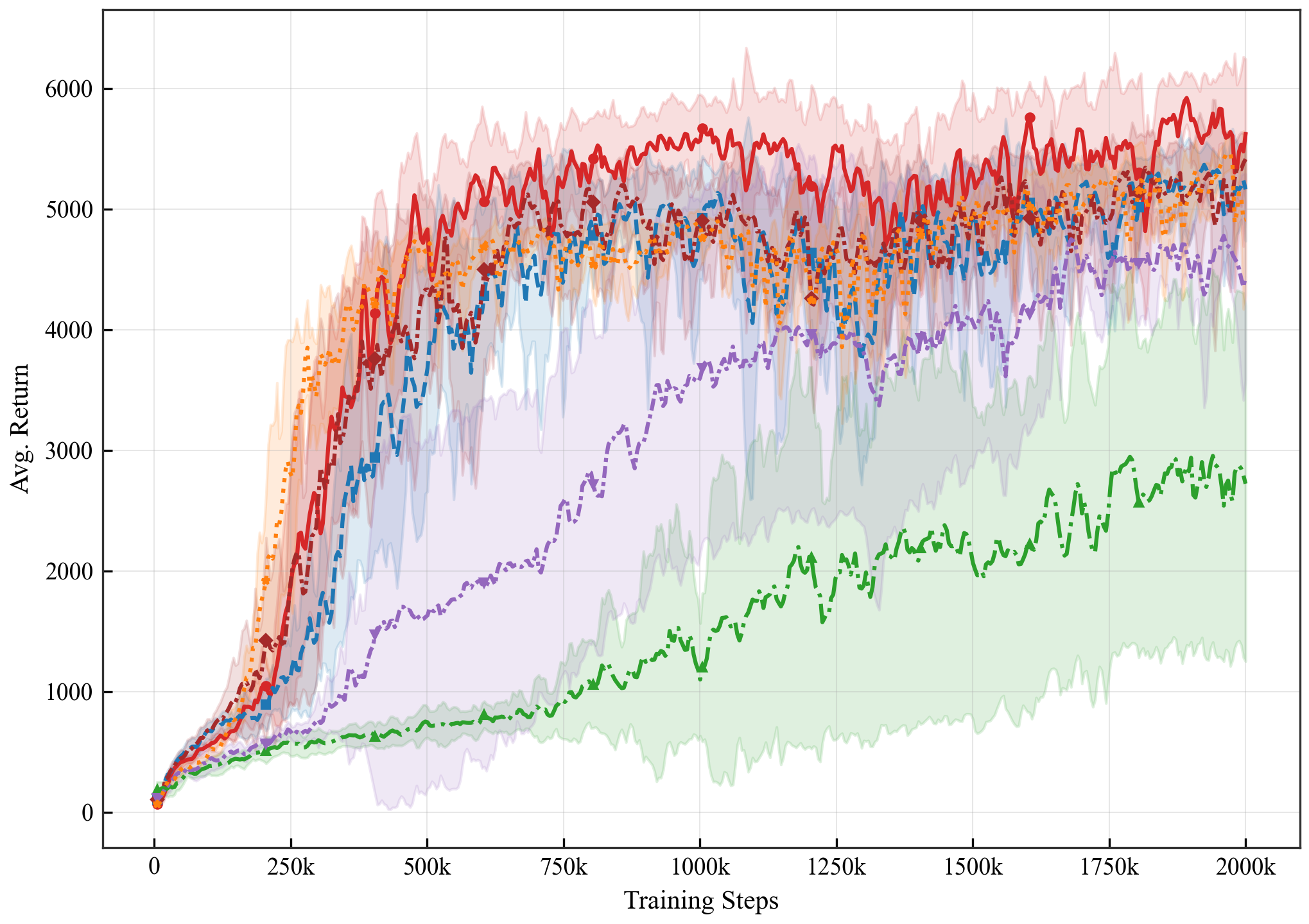}
			\caption{Humanoid-SAC-S}
		\end{subfigure}
		\vspace{0.5em}
		\begin{subfigure}{0.24\linewidth}
			\includegraphics[width=\linewidth]{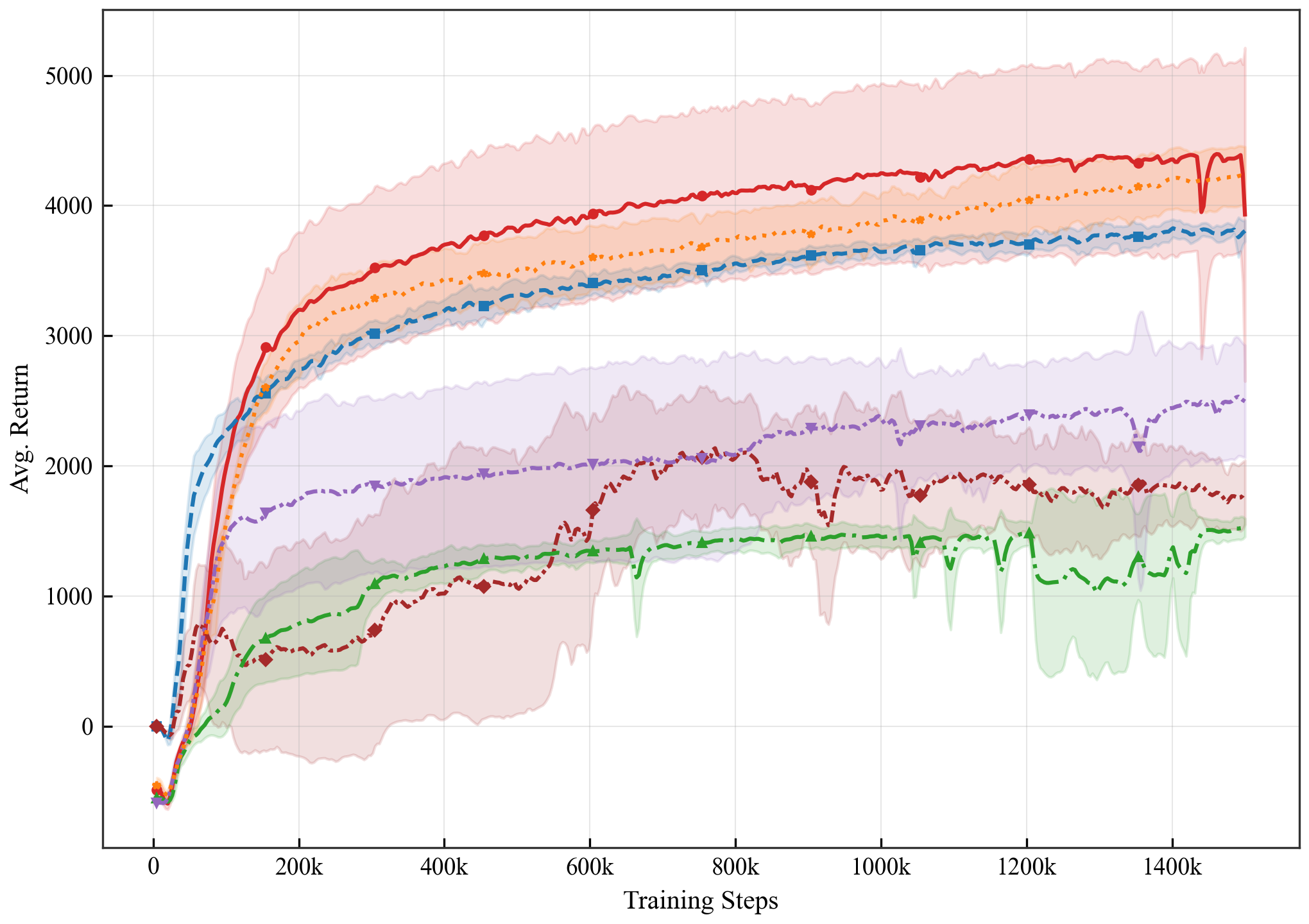}
			\caption{HalfCheetah-SAC-S}
		\end{subfigure}
		\hfill
		\begin{subfigure}{0.24\linewidth}
			\includegraphics[width=\linewidth]{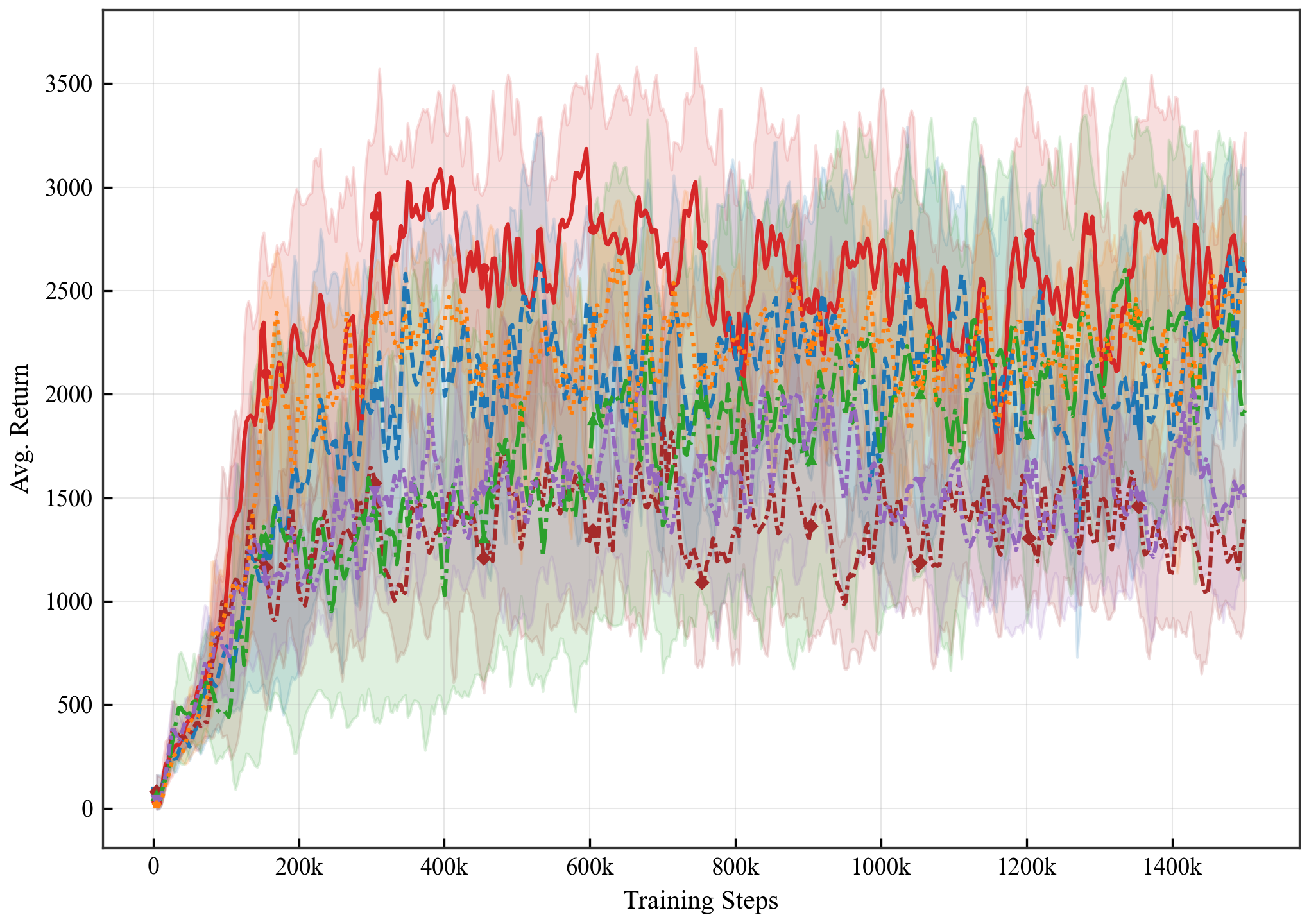}
			\caption{Hopper-SAC-S}
		\end{subfigure}
		
		\begin{subfigure}{0.24\linewidth}
			\includegraphics[width=\linewidth]{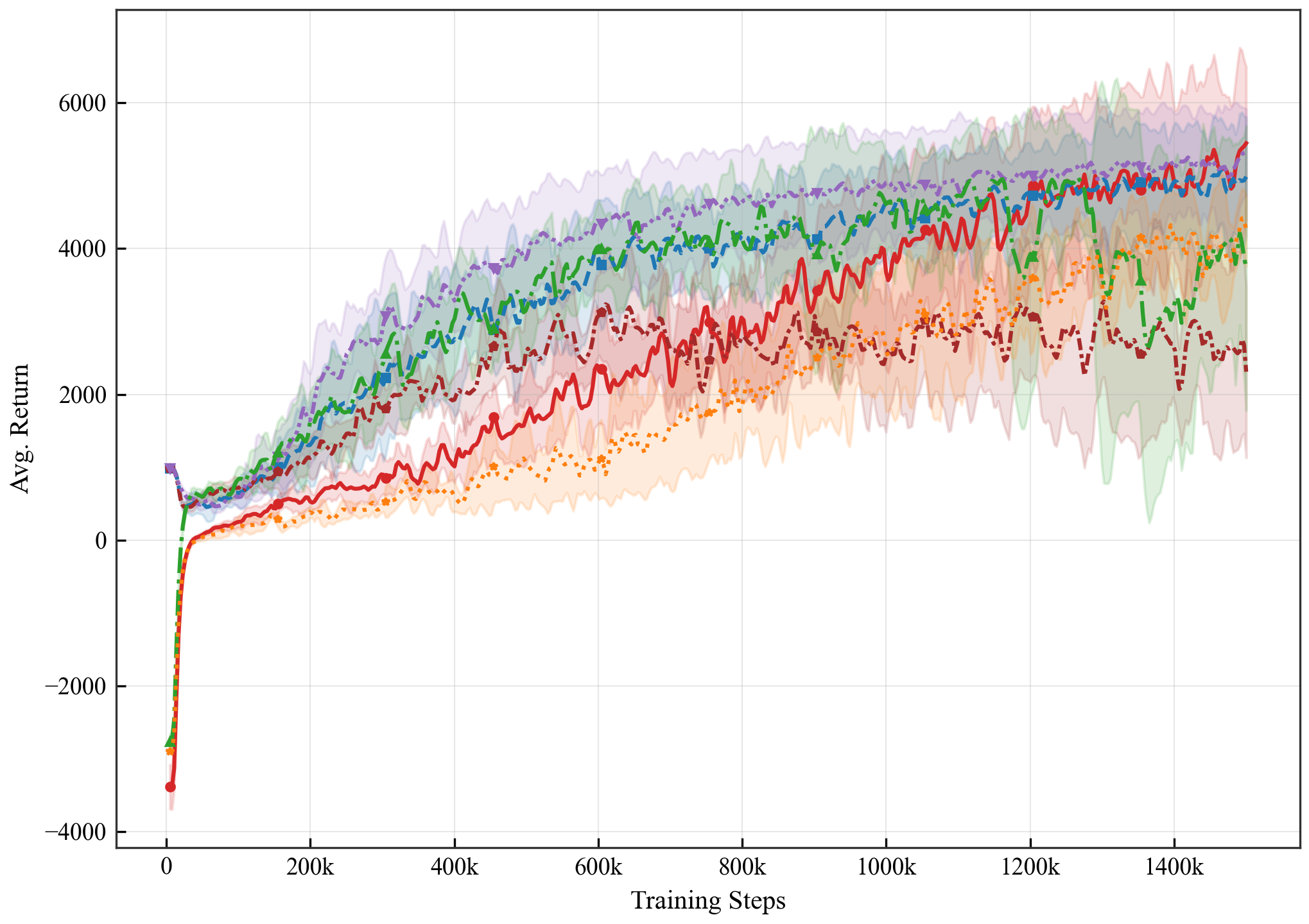}
			\caption{Ant-TD3-W}
		\end{subfigure}
		\hfill
		\begin{subfigure}{0.24\linewidth}
			\includegraphics[width=\linewidth]{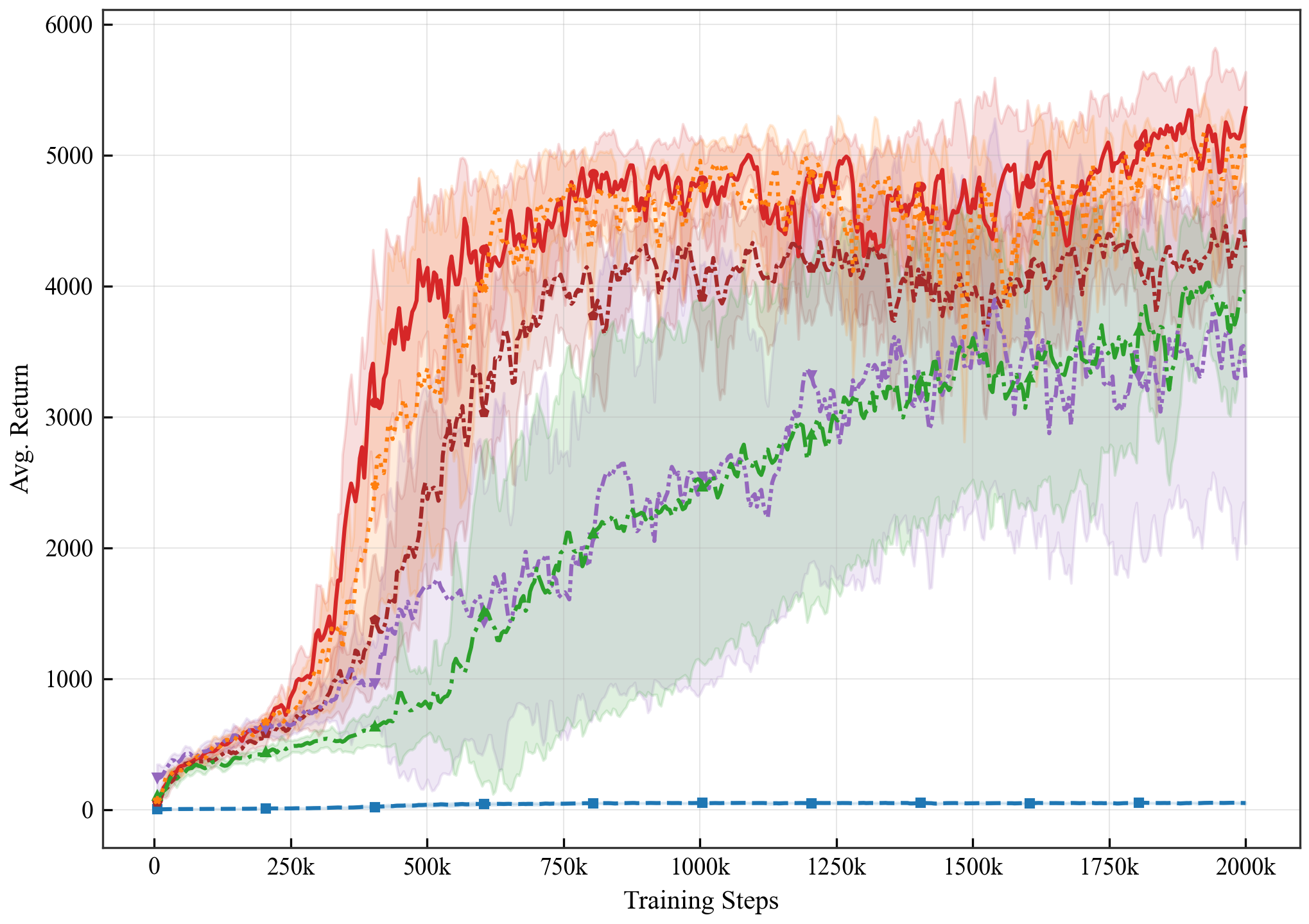}
			\caption{Humanoid-TD3-W}
		\end{subfigure}
		\vspace{0.5em}
		\begin{subfigure}{0.24\linewidth}
			\includegraphics[width=\linewidth]{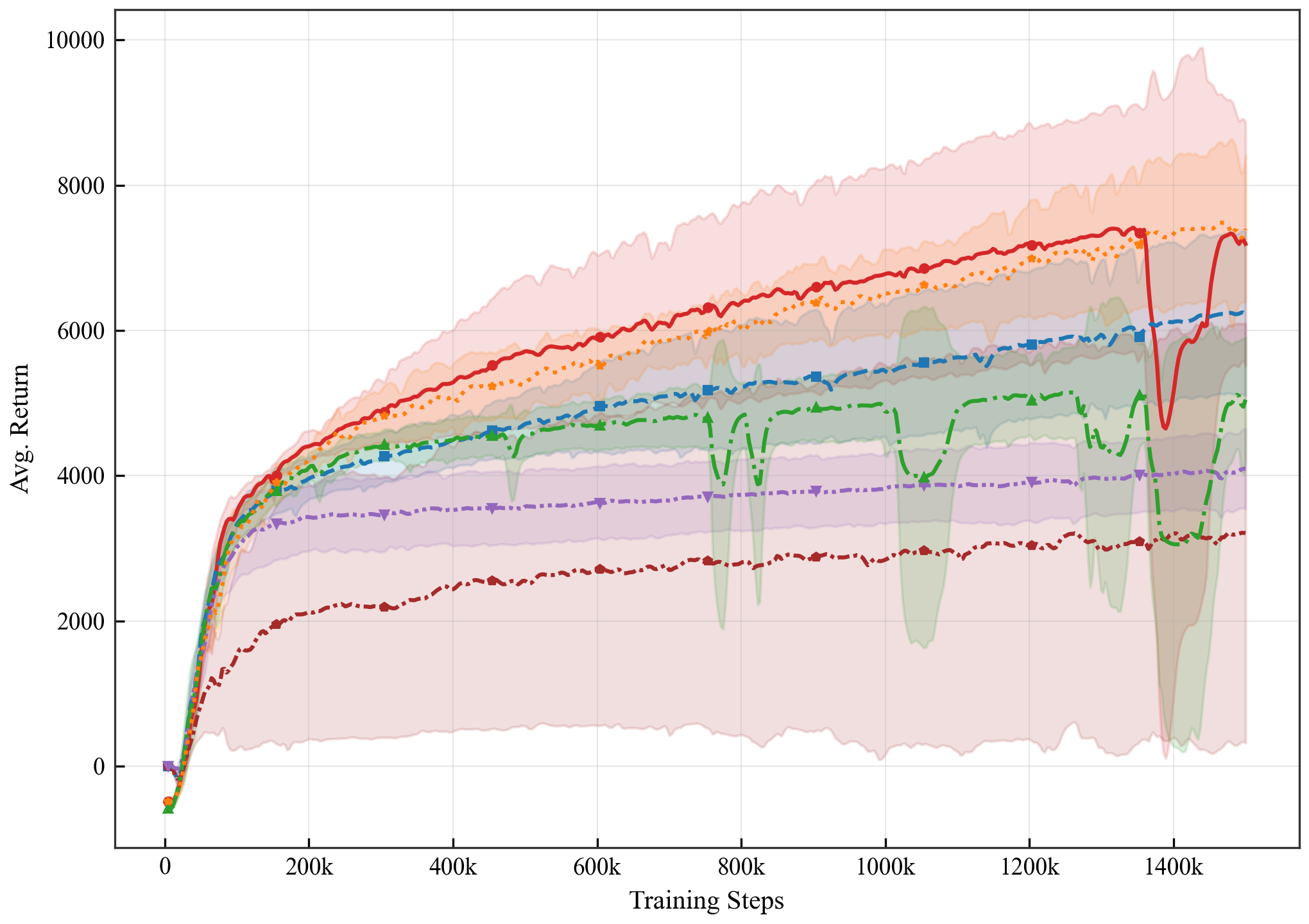}
			\caption{HalfCheetah-TD3-W}
		\end{subfigure}
		\hfill
		\begin{subfigure}{0.24\linewidth}
			\includegraphics[width=\linewidth]{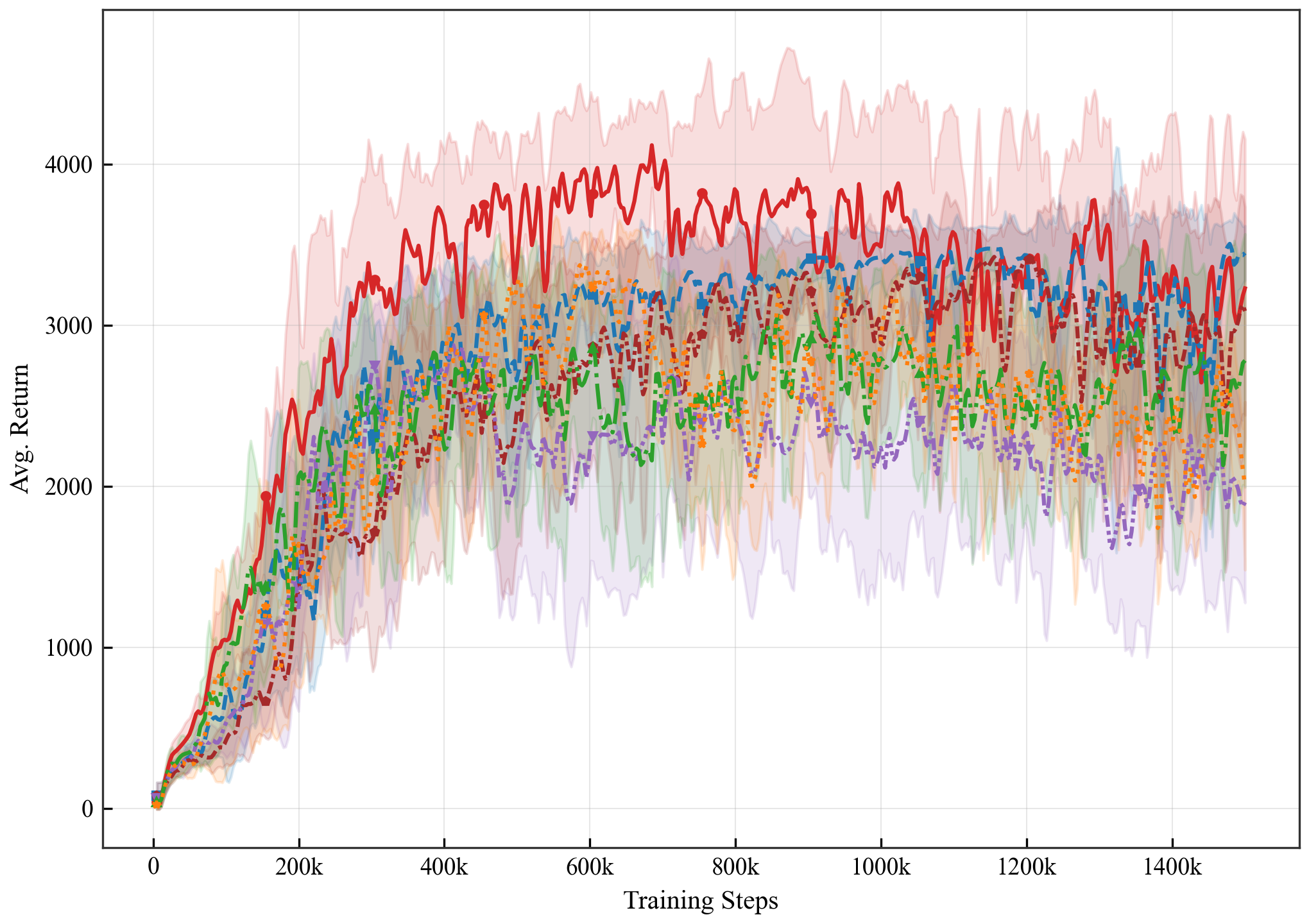}
			\caption{Hopper-TD3-W}
		\end{subfigure}
		
		\vspace{0.2em}
		\begin{subfigure}{0.24\linewidth}
			\includegraphics[width=\linewidth]{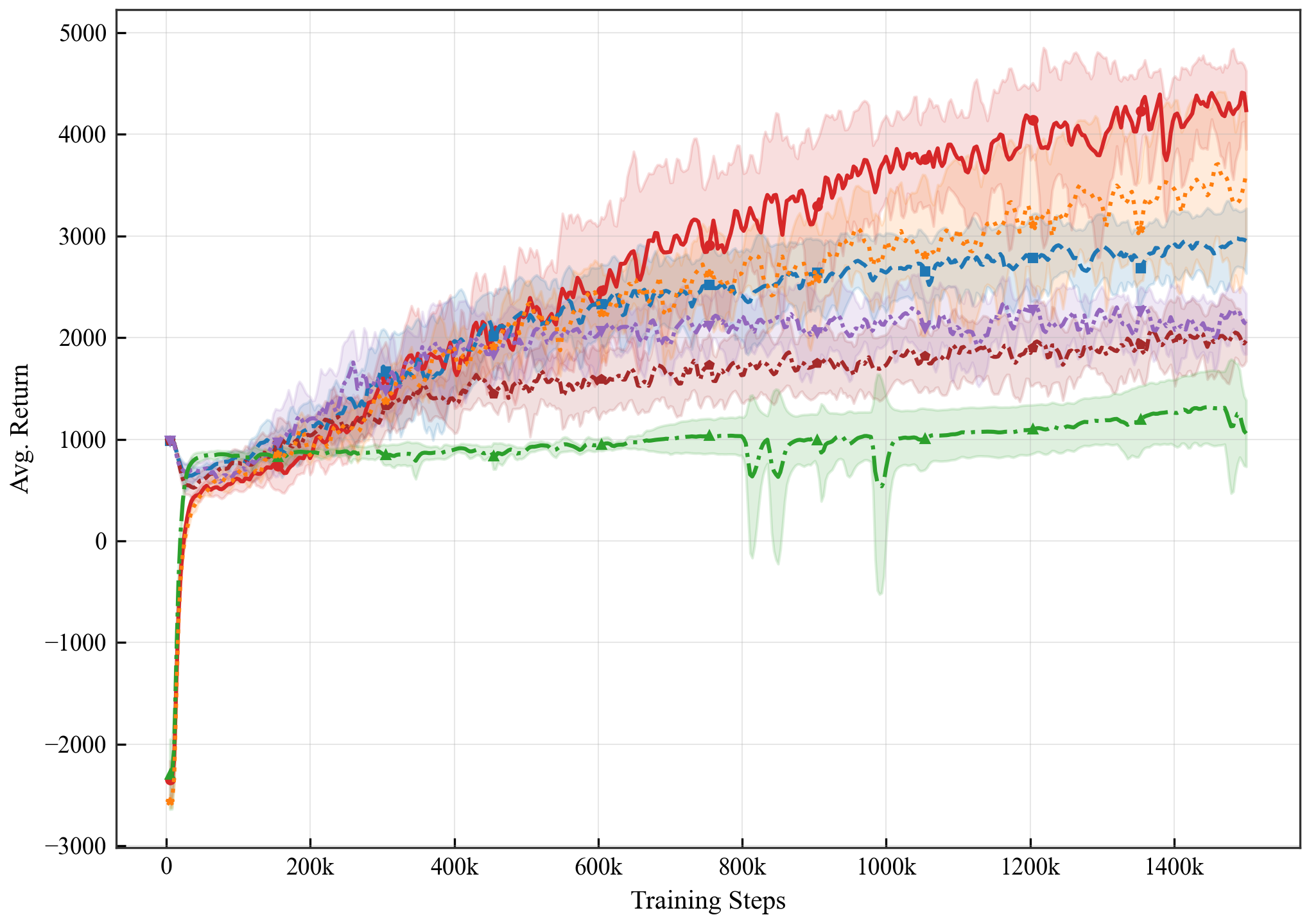}
			\caption{Ant-TD3-S}
		\end{subfigure}
		\hfill
		\begin{subfigure}{0.24\linewidth}
			\includegraphics[width=\linewidth]{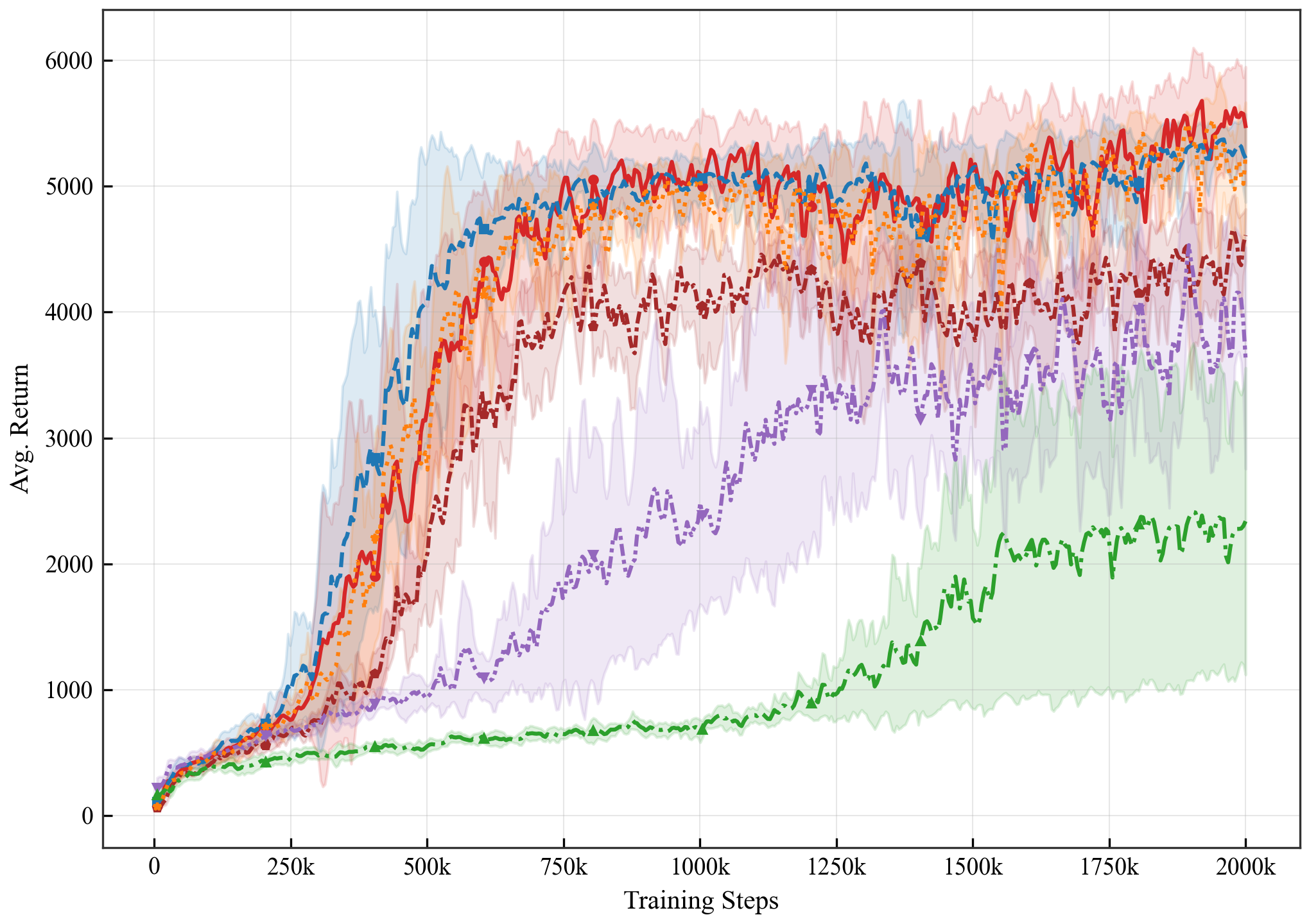}
			\caption{Humanoid-TD3-S}
		\end{subfigure}
		\vspace{0.5em}
		\begin{subfigure}{0.24\linewidth}
			\includegraphics[width=\linewidth]{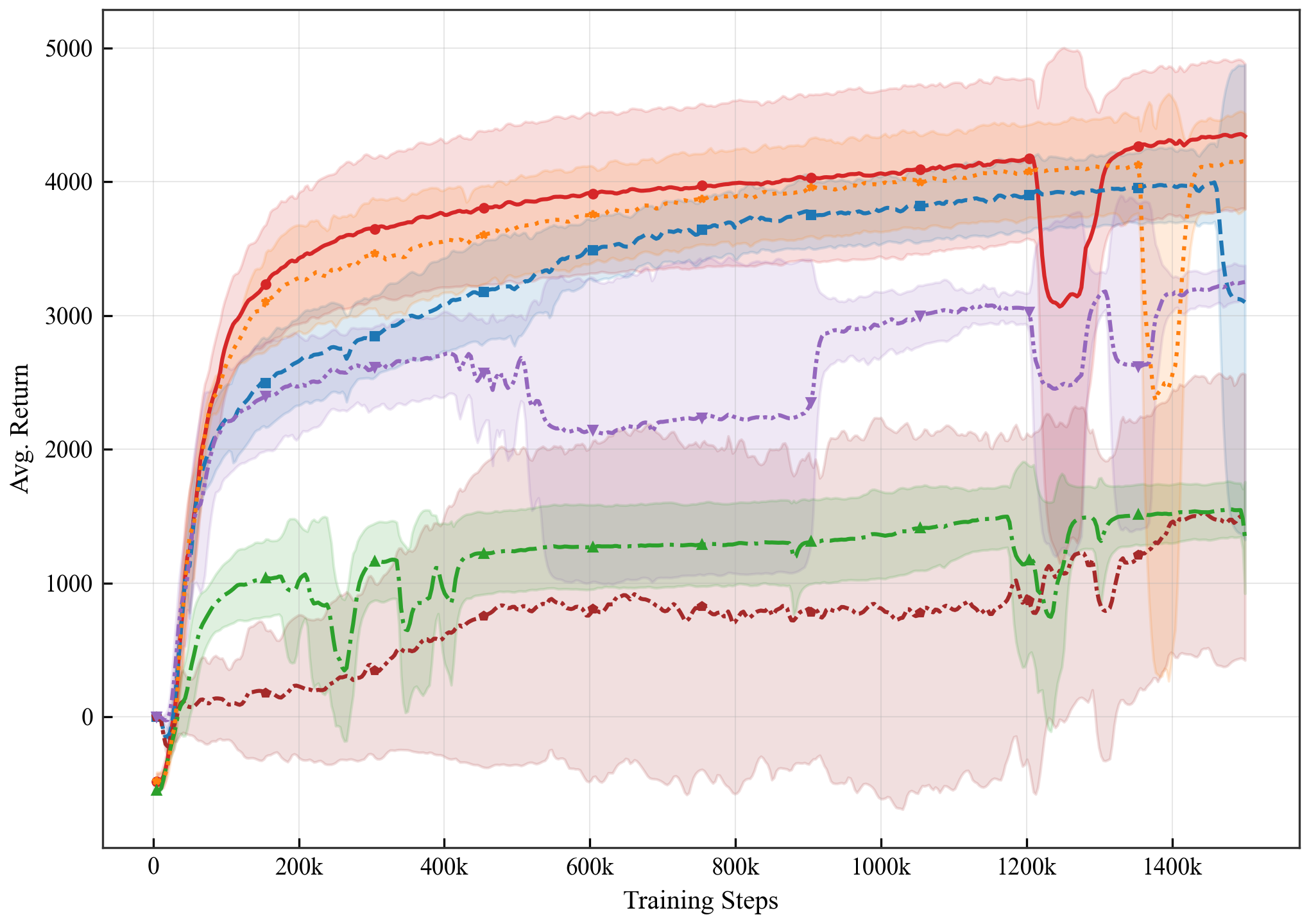}
			\caption{HalfCheetah-TD3-S}
		\end{subfigure}
		\hfill
		\begin{subfigure}{0.24\linewidth}
			\includegraphics[width=\linewidth]{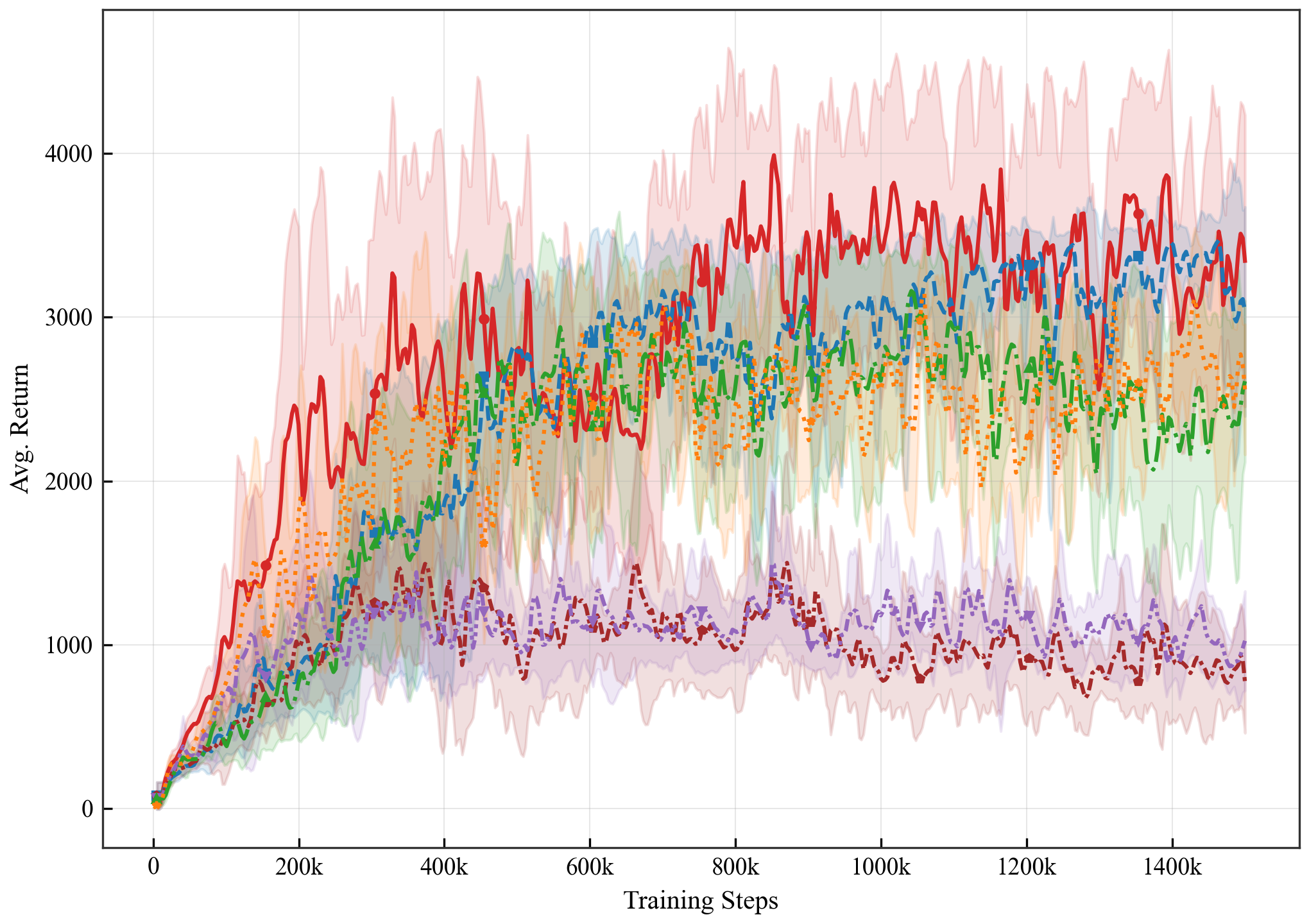}
			\caption{Hopper-TD3-S}
		\end{subfigure}
		
		\vspace{0.15cm}
		\centering
		\begin{tabular}{@{}l@{\hspace{1.5em}}l@{\hspace{1.5em}}l@{\hspace{1.5em}}l@{\hspace{1.5em}}l@{\hspace{1.5em}}l@{}}
			\colorbox{BoxPre}{\rule{0pt}{1pt}\rule{8pt}{0pt}} \raisebox{-2.0pt}{\scriptsize BoxPre+} &
			\colorbox{QP}{\rule{0pt}{1pt}\rule{8pt}{0pt}} \raisebox{-2.0pt}{\scriptsize QP} &
			\colorbox{SRadQP}{\rule{0pt}{1pt}\rule{8pt}{0pt}} \raisebox{-2.0pt}{\scriptsize SRad-QP} &
			\colorbox{SRadStrict}{\rule{0pt}{1pt}\rule{8pt}{0pt}} \raisebox{-2.0pt}{\scriptsize SRad-Strict} &
			\colorbox{dtanh}{\rule{0pt}{1pt}\rule{8pt}{0pt}} \raisebox{-2.0pt}{\scriptsize D-Tanh} &
			\colorbox{DDSRad}{\rule{0pt}{1pt}\rule{8pt}{0pt}} \raisebox{-2.0pt}{\scriptsize DD-SRad}
		\end{tabular}
		
		\caption{SAC/TD3-Based benchmark performance comparison across four environments.}
		\label{fig:bench_reward}
	\end{figure*}
	
	Table~\ref{tab:benchmark_wide} presents the quantitative performance from the §\ref{sec:benchmark} standard benchmark experiments under wide homogeneous constraints (Humanoid-v5: $\boldsymbol{\delta}=[0.4]^d$; all others: $\boldsymbol{\delta}=[1.0]^d$), for reference alongside the tight heterogeneous constraint results in the main Table~\ref{tab:benchmark_main}. Under wide constraints, the rate limits exert weak actual constraint force on the policy, with CVR approaching 0 for all methods and Return gaps narrowing substantially---this pattern itself validates the design expectation that DD-SRad introduces no additional performance loss when constraints are non-binding.
	
	\renewcommand{\arraystretch}{1.15}
	\begin{table*}[htbp]
		\scriptsize
		\centering
		\captionsetup{font=scriptsize, labelfont=scriptsize}
		\caption{Supplementary benchmark results under wide homogeneous constraints (mean$\pm$std over 5 seeds; best in \textbf{bold}; $\dagger$ see footnote)}
		\label{tab:benchmark_wide}
		
		\textbf{(a) SAC Backbone}\\[0.5mm]
		\begin{tabular}{l ccc ccc}
			\toprule
			& \multicolumn{3}{c}{\textbf{Ant-v5}\small($\boldsymbol{\delta}=[1.0]^8$, $\kappa=1.0$)}
			& \multicolumn{3}{c}{\textbf{Humanoid-v5}\small($\boldsymbol{\delta}=[0.4]^{17}$, $\kappa=1.0$)} \\
			\cmidrule(lr){2-4}\cmidrule(lr){5-7}
			\textbf{Method} & Return $\uparrow$ & CV $\downarrow$ & Util
			& Return $\uparrow$ & CV $\downarrow$ & Util \\
			\midrule
			BoxPre+
			& $3737\pm729$           & $19.5$          & $0.323\pm0.014$
			& $5144\pm145$           & $2.8$           & $0.158\pm0.014$ \\
			Post(QP)$^\dagger$
			& $2437\pm467$           & $19.1$          & $0.341\pm0.011$
			& $5163\pm175$           & $3.4$           & $0.289\pm0.018$ \\
			SRad-Strict
			& $2952\pm385$           & $\mathbf{13.0}$ & $0.244\pm0.009$
			& $3918\pm1350$          & $34.5$          & $0.057\pm0.016$ \\
			SRad-QP
			& $3122\pm719$           & $23.0$          & $0.253\pm0.012$
			& $3903\pm1324$          & $33.9$          & $0.060\pm0.015$ \\
			D-Tanh
			& $4039\pm725$           & $18.0$          & $0.301\pm0.009$
			& $5212\pm135$           & $\mathbf{2.6}$  & $0.253\pm0.029$ \\
			\rowcolor{blue!10}
			DD-SRad (Ours)
			& $\mathbf{4199\pm957}$  & $22.8$          & $0.334\pm0.026$
			& $\mathbf{5384\pm261}$  & $4.8$           & $0.241\pm0.049$ \\
			\midrule
			& \multicolumn{3}{c}{\textbf{HalfCheetah-v5}\small($\boldsymbol{\delta}=[1.0]^6$, $\kappa=1.0$)}
			& \multicolumn{3}{c}{\textbf{Hopper-v5}\small($\boldsymbol{\delta}=[1.0]^3$, $\kappa=1.0$)} \\
			\cmidrule(lr){2-4}\cmidrule(lr){5-7}
			\textbf{Method} & Return $\uparrow$ & CV $\downarrow$ & Util
			& Return $\uparrow$ & CV $\downarrow$ & Util \\
			\midrule
			BoxPre+
			& $5383\pm412$           & $\mathbf{7.7}$  & $0.528\pm0.009$
			& $2226\pm437$           & $19.6$          & $0.145\pm0.009$ \\
			Post(QP)$^\dagger$
			& $5582\pm704$           & $12.6$          & $0.578\pm0.052$
			& $2037\pm425$           & $20.9$          & $0.167\pm0.009$ \\
			SRad-Strict
			& $3921\pm469$           & $12.0$          & $0.273\pm0.020$
			& $1992\pm145$           & $\mathbf{7.3}$  & $0.193\pm0.016$ \\
			SRad-QP
			& $4186\pm439$           & $10.5$          & $0.270\pm0.006$
			& $1915\pm188$           & $9.8$           & $0.185\pm0.017$ \\
			D-Tanh
			& $5646\pm517$           & $9.2$           & $0.626\pm0.018$
			& $2649\pm194$           & $\mathbf{7.3}$  & $0.361\pm0.016$ \\
			\rowcolor{blue!10}
			DD-SRad (Ours)
			& $\mathbf{5925\pm782}$  & $13.2$          & $0.555\pm0.103$
			& $\mathbf{2764\pm405}$  & $14.7$          & $0.325\pm0.012$ \\
			\bottomrule
		\end{tabular}
		
		\vspace{3mm}
		\scriptsize
		\textbf{(b) TD3 Backbone}\\[0.5mm]
		\begin{tabular}{l ccc ccc}
			\toprule
			& \multicolumn{3}{c}{\textbf{Ant-v5}\small($\boldsymbol{\delta}=[1.0]^8$, $\kappa=1.0$)}
			& \multicolumn{3}{c}{\textbf{Humanoid-v5}\small($\boldsymbol{\delta}=[0.4]^{17}$, $\kappa=1.0$)} \\
			\cmidrule(lr){2-4}\cmidrule(lr){5-7}
			\textbf{Method} & Return $\uparrow$ & CV $\downarrow$ & Util
			& Return $\uparrow$ & CV $\downarrow$ & Util \\
			\midrule
			BoxPre+
			& $4906\pm708$           & $14.4$          & $0.307\pm0.021$
			& $5018\pm144$           & $2.9$           & $0.154\pm0.025$ \\
			Post(QP)$^\dagger$
			& $2613\pm1011$          & $38.7$          & $0.309\pm0.033$
			& $4289\pm111$           & $\mathbf{2.6}$  & $0.537\pm0.044$ \\
			SRad-Strict
			& $3756\pm1287$          & $34.3$          & $0.236\pm0.015$
			& $3884\pm404$           & $10.4$          & $0.127\pm0.020$ \\
			SRad-QP
			& $4973\pm716$           & $14.4$          & $0.214\pm0.009$
			& $3483\pm1009$          & $29.0$          & $0.018\pm0.005$ \\
			D-Tanh
			& $4058\pm483$           & $\mathbf{11.9}$ & $0.353\pm0.005$
			& $5027\pm144$           & $2.9$           & $0.242\pm0.047$ \\
			\rowcolor{blue!10}
			DD-SRad (Ours)
			& $\mathbf{5088\pm881}$  & $17.3$          & $0.362\pm0.031$
			& $\mathbf{5107\pm343}$  & $6.7$           & $0.207\pm0.020$ \\
			\midrule
			& \multicolumn{3}{c}{\textbf{HalfCheetah-v5}\small($\boldsymbol{\delta}=[1.0]^6$, $\kappa=1.0$)}
			& \multicolumn{3}{c}{\textbf{Hopper-v5}\small($\boldsymbol{\delta}=[1.0]^3$, $\kappa=1.0$)} \\
			\cmidrule(lr){2-4}\cmidrule(lr){5-7}
			\textbf{Method} & Return $\uparrow$ & CV $\downarrow$ & Util
			& Return $\uparrow$ & CV $\downarrow$ & Util \\
			\midrule
			BoxPre+
			& $6198\pm1093$          & $17.6$          & $0.510\pm0.059$
			& $3157\pm126$           & $\mathbf{4.0}$  & $0.178\pm0.006$ \\
			Post(QP)$^\dagger$
			& $3163\pm2881$          & $91.1$          & $0.589\pm0.102$
			& $2915\pm473$           & $16.2$          & $0.281\pm0.023$ \\
			SRad-Strict
			& $4119\pm1665$          & $40.4$          & $0.264\pm0.035$
			& $2598\pm556$           & $21.4$          & $0.168\pm0.017$ \\
			SRad-QP
			& $4051\pm521$           & $\mathbf{12.9}$ & $0.339\pm0.023$
			& $2030\pm406$           & $20.0$          & $0.163\pm0.018$ \\
			D-Tanh
			& $6596\pm942$           & $14.3$          & $0.611\pm0.064$
			& $2303\pm122$           & $5.3$           & $0.371\pm0.020$ \\
			\rowcolor{blue!10}
			DD-SRad (Ours)
			& $\mathbf{6650\pm2477}$ & $37.2$          & $0.523\pm0.119$
			& $\mathbf{3216\pm478}$  & $14.9$          & $0.337\pm0.017$ \\
			\bottomrule
		\end{tabular}
		
		\vspace{1.5mm}
		\scriptsize
		\textbf{Notes:}
		CVR\,${=}$\,0 for all parameterization-based methods across all configurations, confirmed by Theorem~\ref{thm:constraint}.
		Return gaps across methods narrow substantially compared to Table~\ref{tab:benchmark_main}
		(tight heterogeneous constraints),
		consistent with the expectation that DD-SRad's advantage is most pronounced
		when constraints are binding.
		$^\dagger$\,SAC-Post(QP)/TD3-Post(QP): Util is inflated by clip boundary-dwelling
		(see Table~\ref{tab:benchmark_main} Notes).
	\end{table*}
	\renewcommand{\arraystretch}{1}
	
	\section{Per-Dimension Utilization Radar Charts Under Benchmark}
	\label{app:radar_benchmark}
	
	To quantify how fully each method exploits its per-dimension rate budget, define the constraint utilization of dimension $i$ as
	\begin{equation}
		u^i \;=\; \frac{1}{T}\sum_{t=1}^{T} \frac{|\Delta a_t^i|}{\delta^i},
	\end{equation}
	where $\Delta a_t^i = a_t^i - a_{t-1}^i$ is the executed action increment at step $t$ and $\delta^i$ is the per-dimension rate limit. By construction $u^i \in [0, 1]$, with $u^i = 1$ indicating that the constraint is saturated on average and $u^i = 0$ indicating the dimension is never moved.
	
	Figure~\ref{fig:radar_all} presents the post-convergence per-dimension utilization radar charts across four MuJoCo environments under tight heterogeneous constraints, for both SAC and TD3 backbones. Each axis of each polygon corresponds to one action dimension, with the radial value indicating the 5-seed mean utilization $\mathbb{E}[|\Delta a^i|]/\delta^i$ for that dimension. The DD-SRad polygon (red) hugs the boundary corresponding to each dimension's $\delta^i$; BoxPre+ (blue) exhibits systematic inward contraction on high-budget dimensions (axes with larger $\delta^i$); SRad-Strict (green) shows the most severe collapse on high-budget dimensions, consistent with Theorem~\ref{thm:exploration}'s prediction that $\ell_2$ parameterization achieves utilization of approximately $\min_j\delta^j/\delta^i$ on those dimensions; QP Post (dark red) exhibits inflated Util due to boundary-dwelling behavior but carries no geometric coverage meaning (see §\ref{sec:benchmark} footnote). Under wide homogeneous constraints (panels (a)(c)(i)(k)), performance gaps across methods narrow, consistent with the expectation that constraints are non-binding.
	
	\begin{figure*}[htbp]
		\centering
		\begin{subfigure}{0.24\linewidth}
			\includegraphics[width=\linewidth]{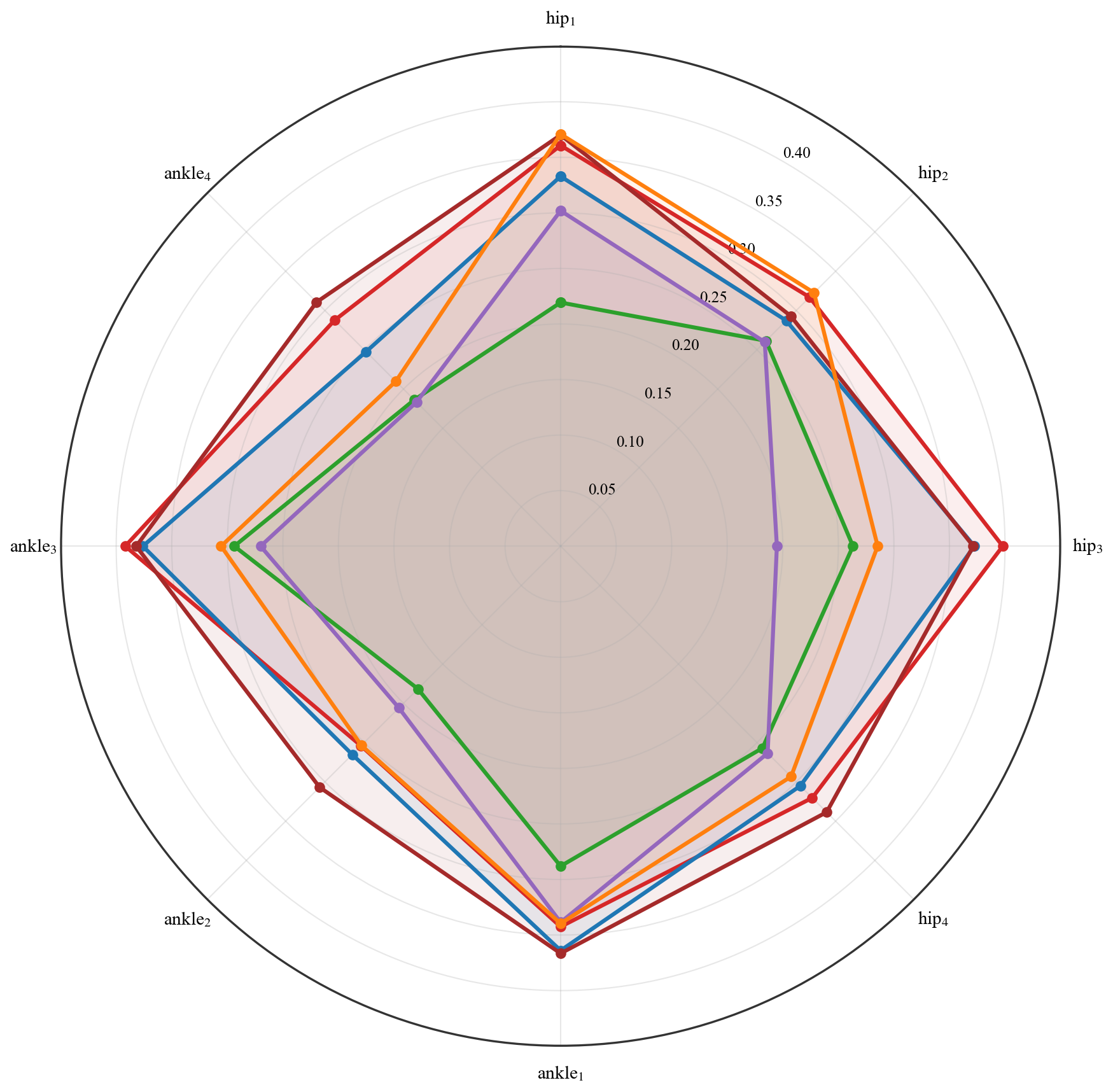}
			\caption{Ant-SAC-W}
		\end{subfigure}
		\hfill
		\begin{subfigure}{0.24\linewidth}
			\includegraphics[width=\linewidth]{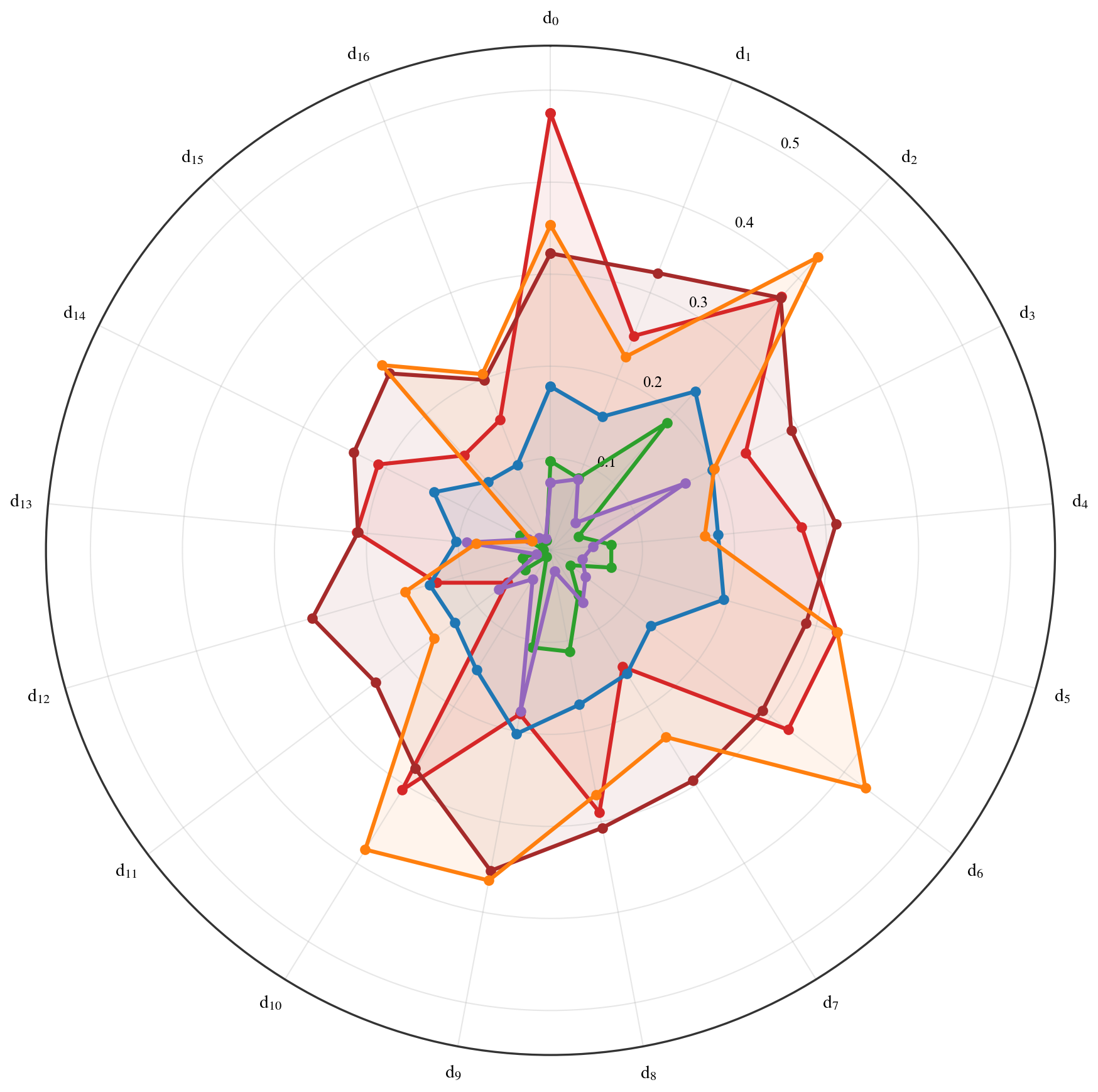}
			\caption{Humanoid-SAC-W}
		\end{subfigure}
		\vspace{0.5em}
		\begin{subfigure}{0.24\linewidth}
			\includegraphics[width=\linewidth]{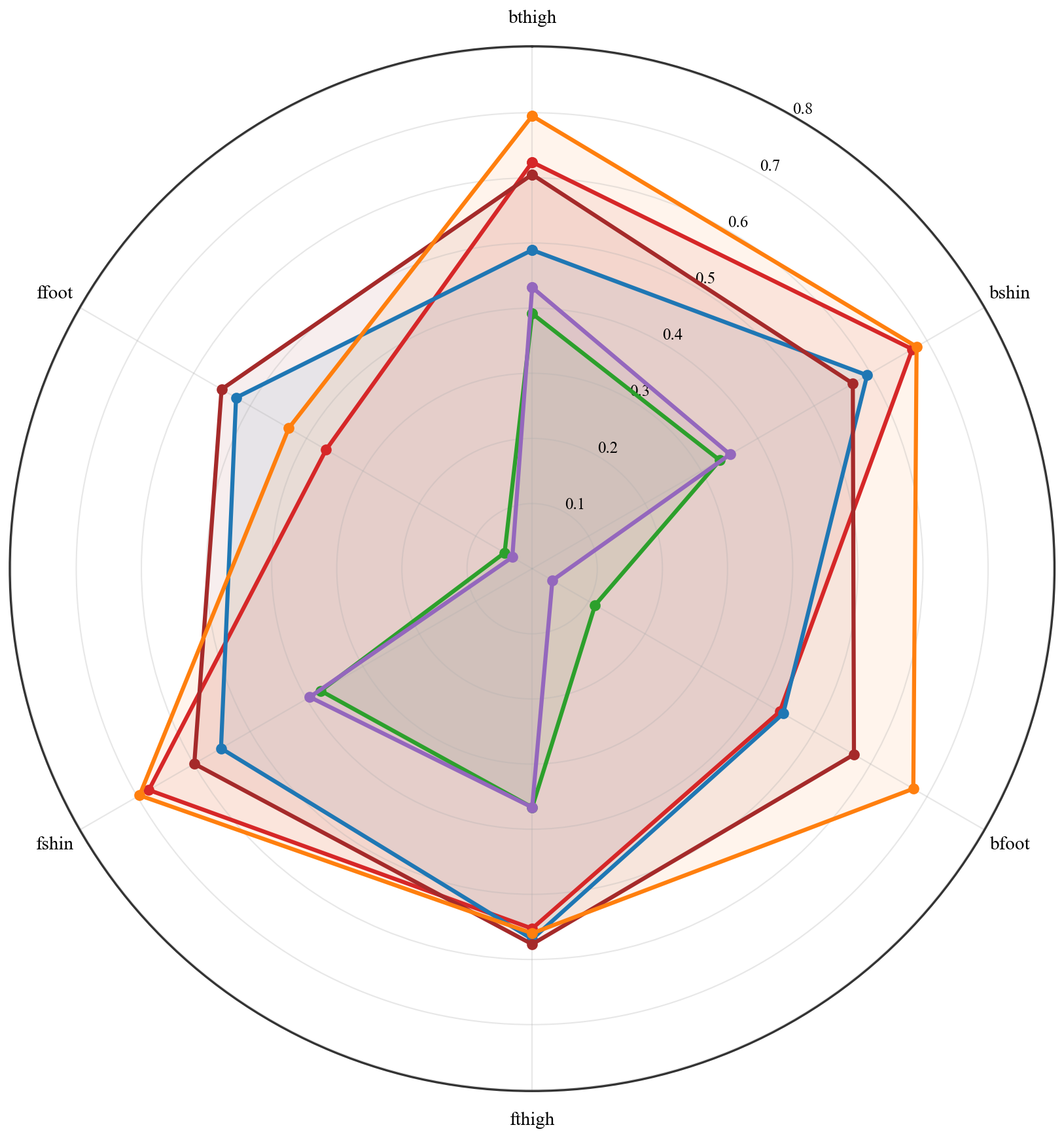}
			\caption{HalfCheetah-SAC-W}
		\end{subfigure}
		\hfill
		\begin{subfigure}{0.24\linewidth}
			\includegraphics[width=\linewidth]{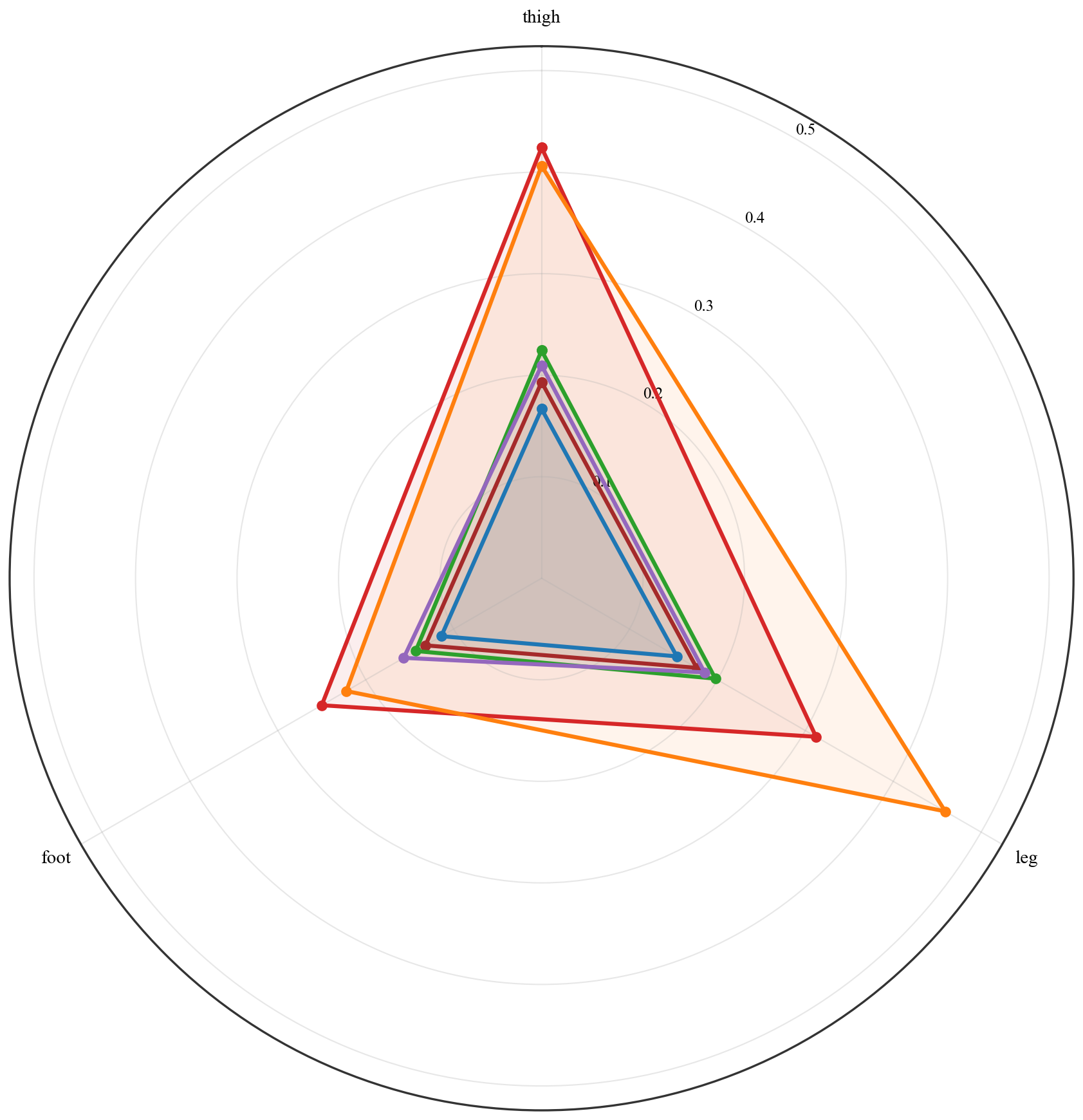}
			\caption{Hopper-SAC-W}
		\end{subfigure}
		
		\vspace{0.2em}
		\begin{subfigure}{0.24\linewidth}
			\includegraphics[width=\linewidth]{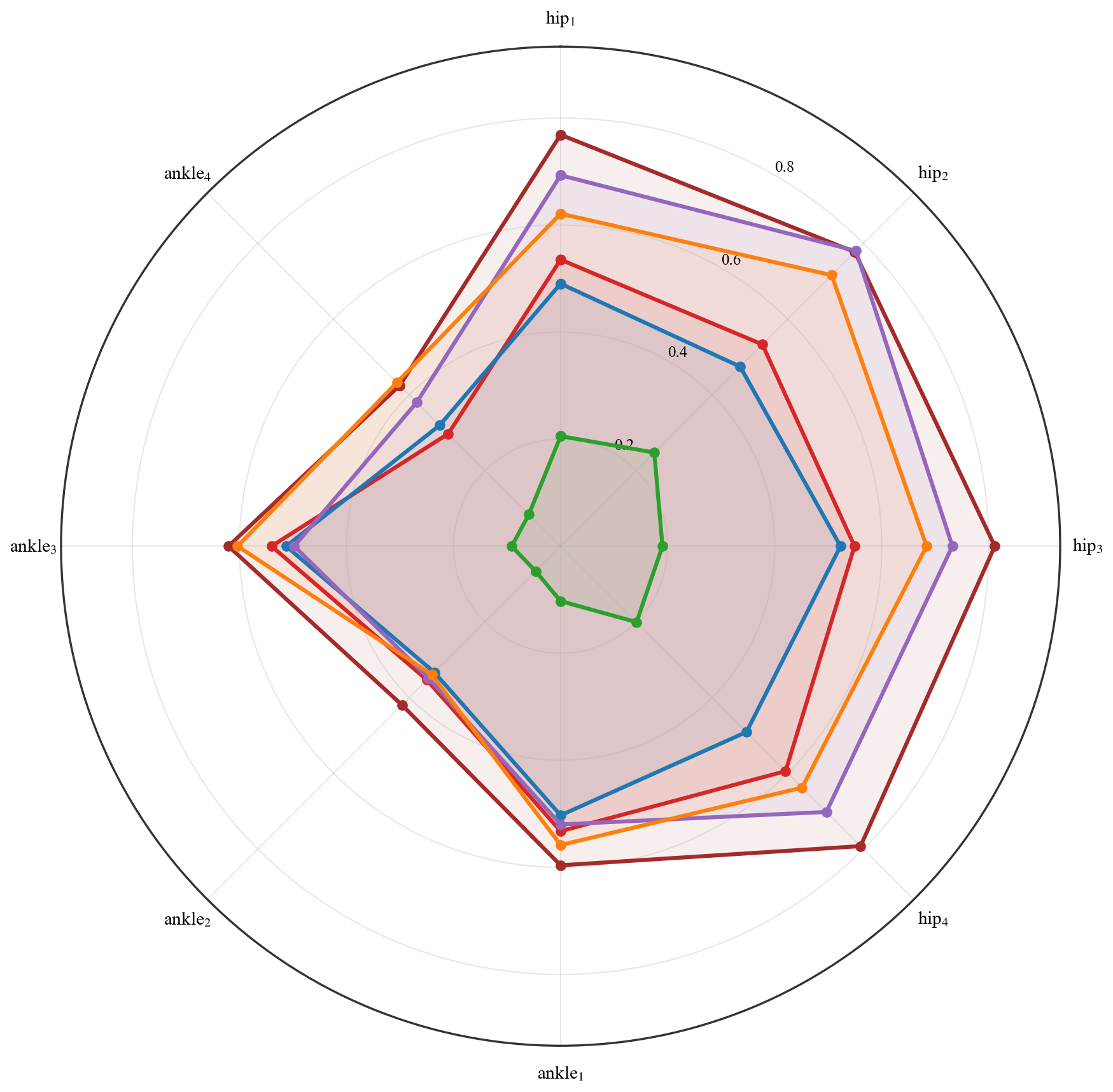}
			\caption{Ant-SAC-S}
		\end{subfigure}
		\hfill
		\begin{subfigure}{0.24\linewidth}
			\includegraphics[width=\linewidth]{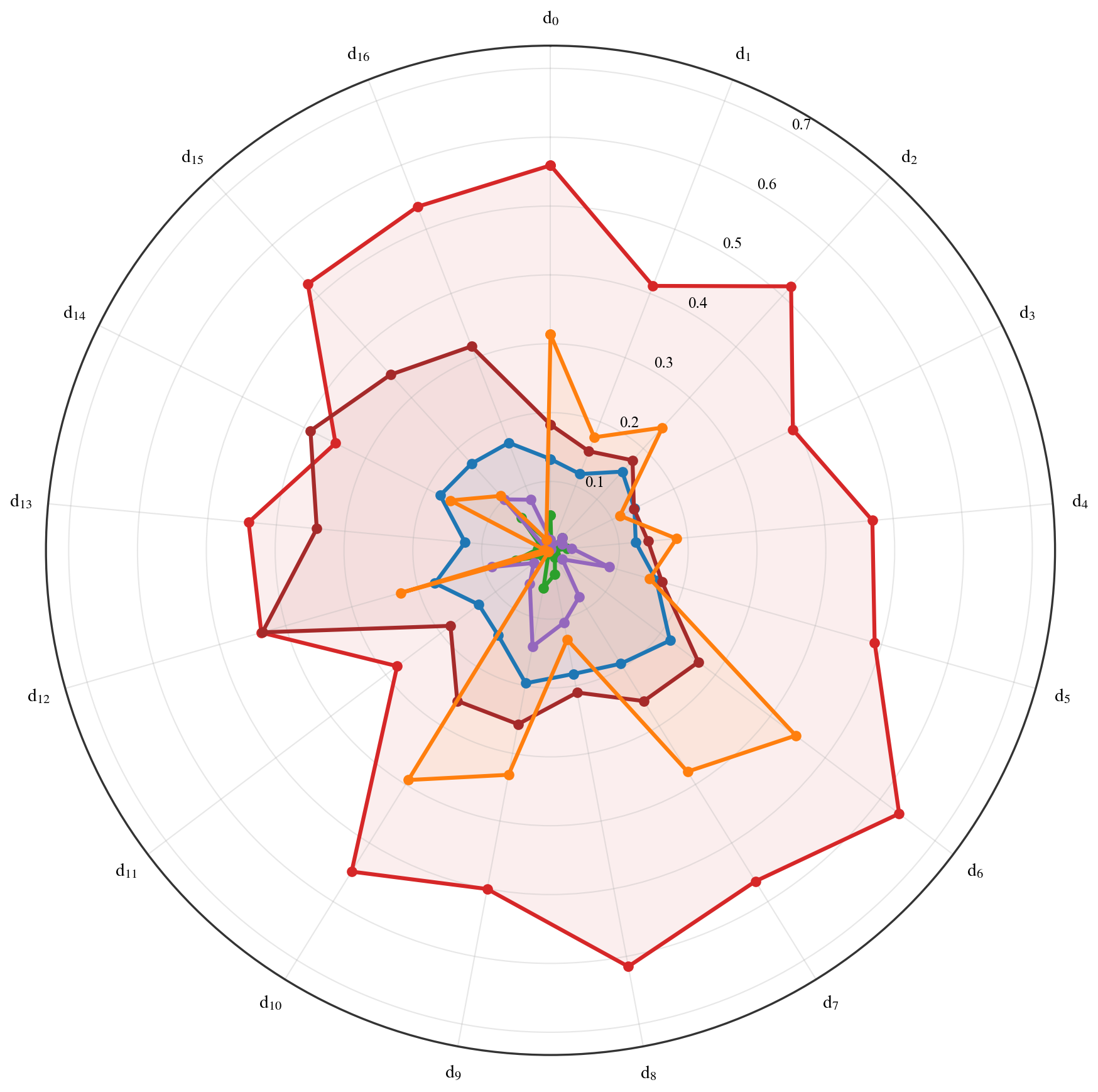}
			\caption{Humanoid-SAC-S}
		\end{subfigure}
		\vspace{0.5em}
		\begin{subfigure}{0.24\linewidth}
			\includegraphics[width=\linewidth]{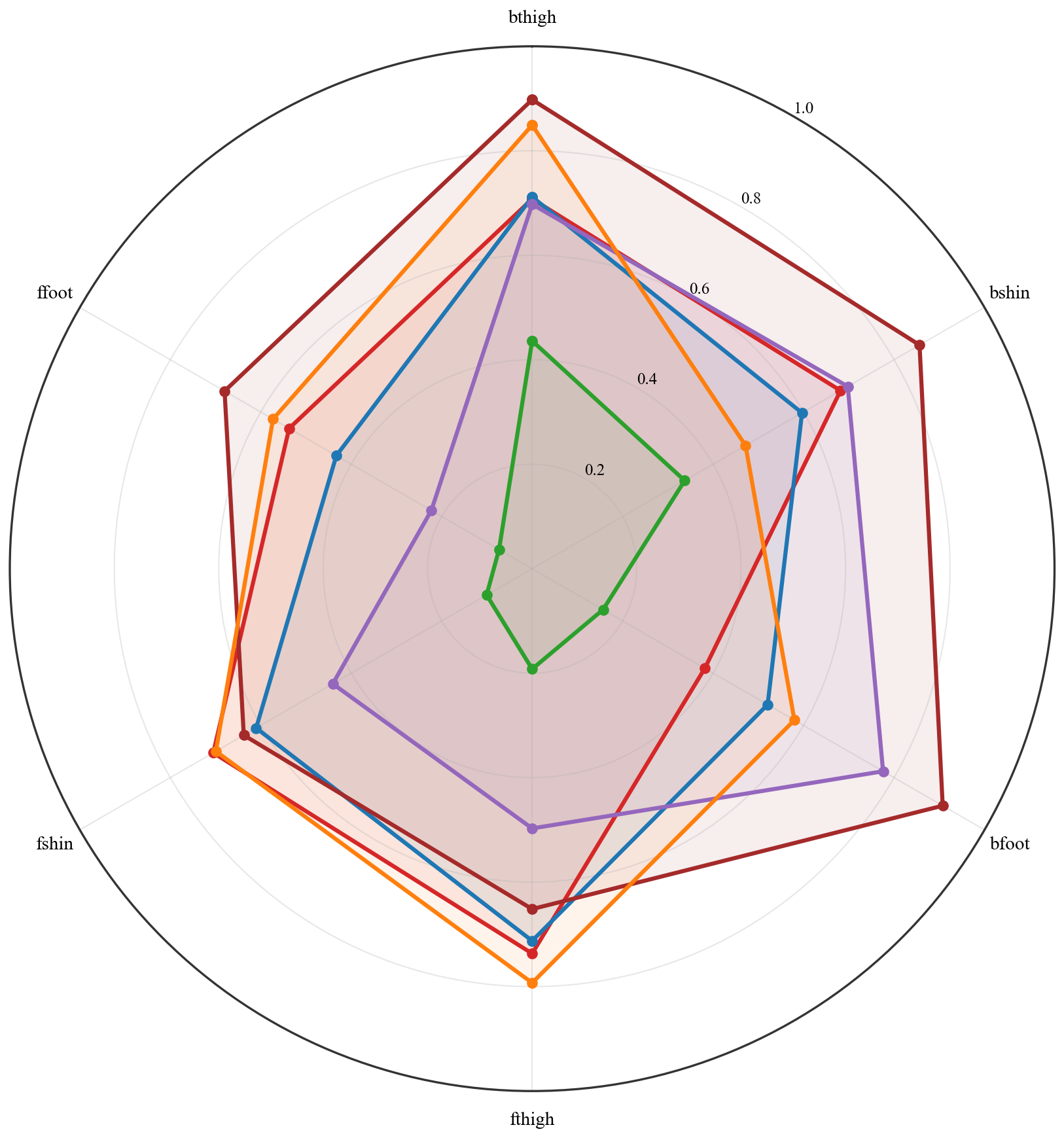}
			\caption{HalfCheetah-SAC-S}
		\end{subfigure}
		\hfill
		\begin{subfigure}{0.24\linewidth}
			\includegraphics[width=\linewidth]{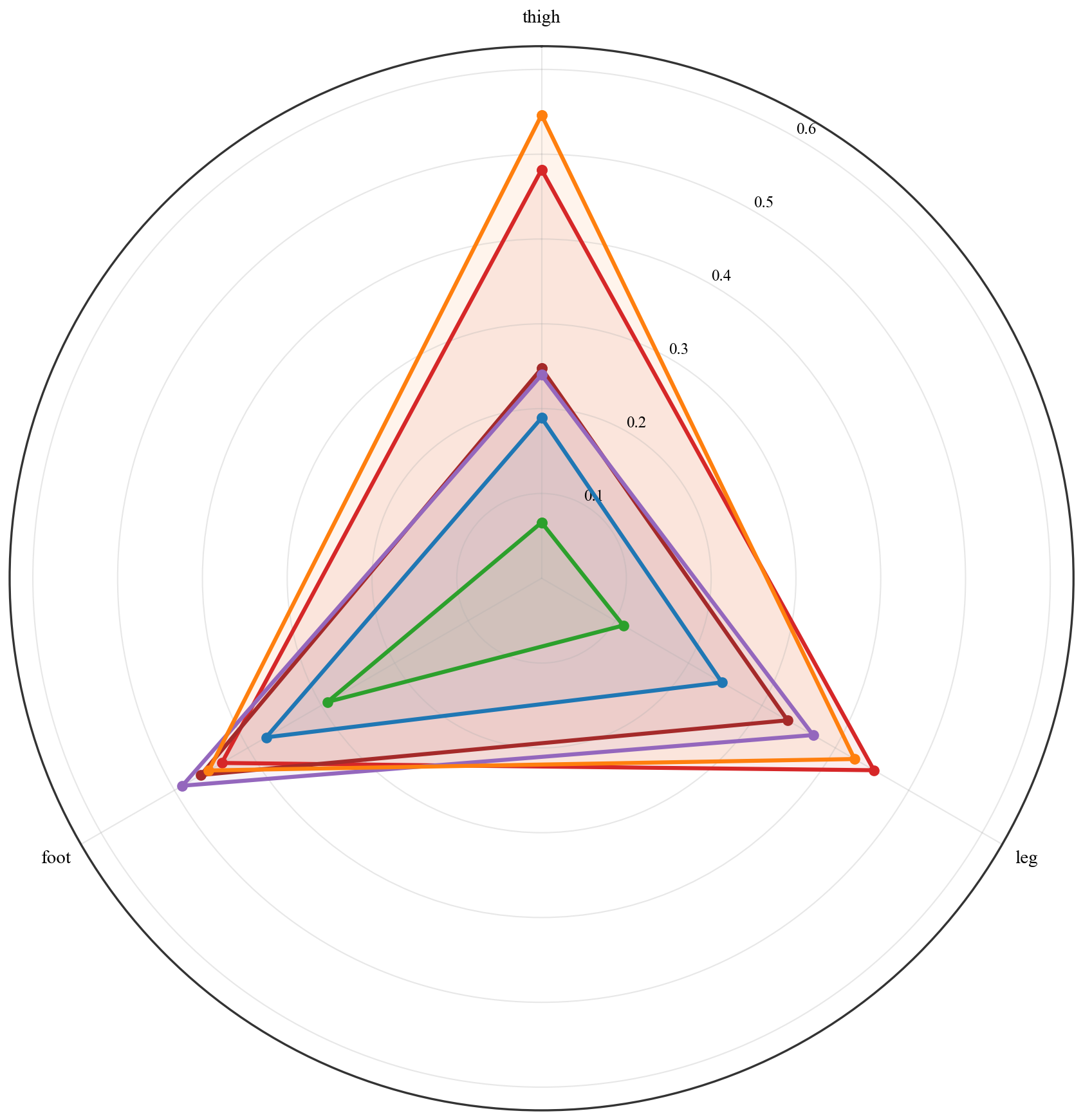}
			\caption{Hopper-SAC-S}
		\end{subfigure}
		
		\begin{subfigure}{0.24\linewidth}
			\includegraphics[width=\linewidth]{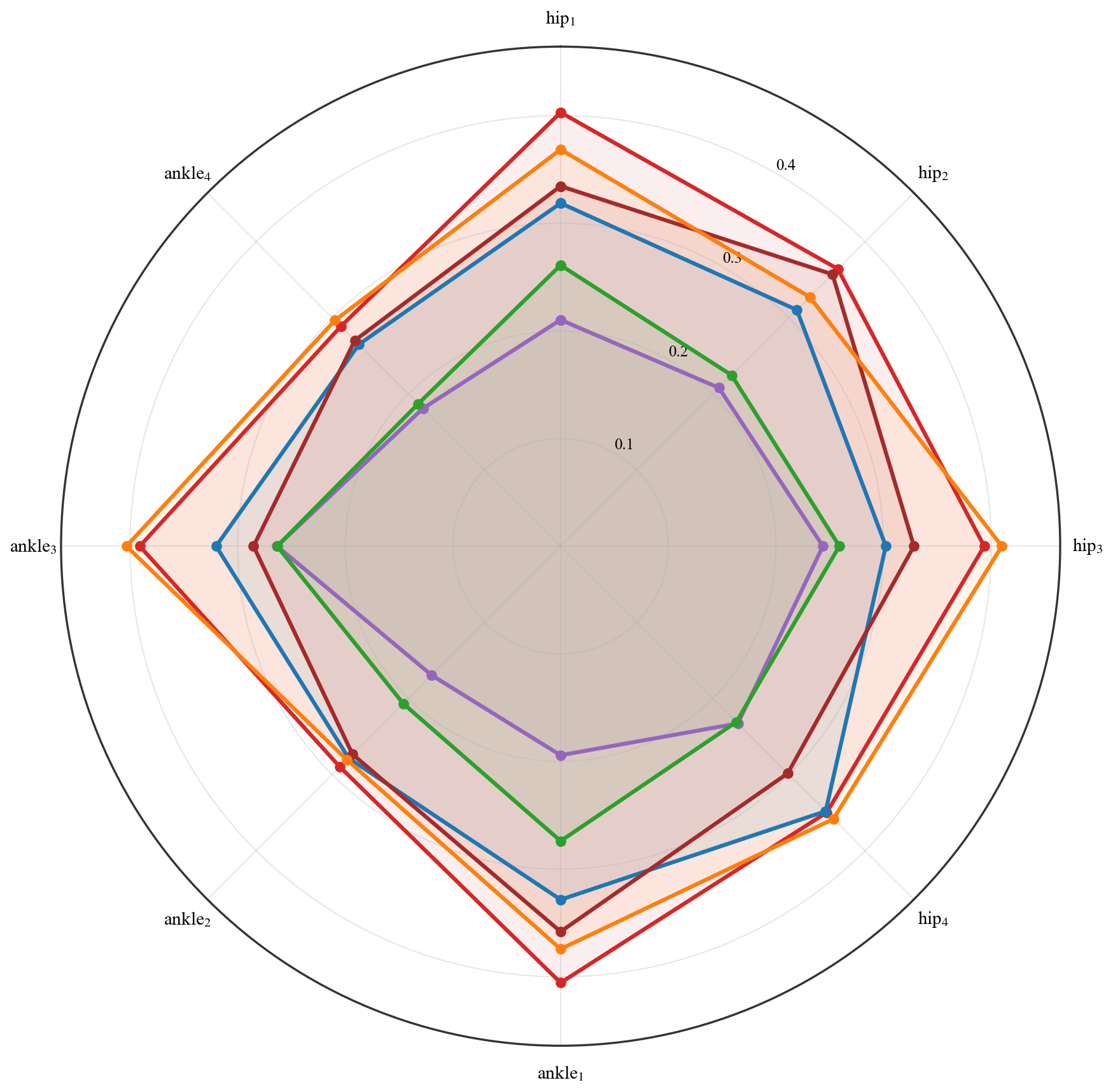}
			\caption{Ant-TD3-W}
		\end{subfigure}
		\hfill
		\begin{subfigure}{0.24\linewidth}
			\includegraphics[width=\linewidth]{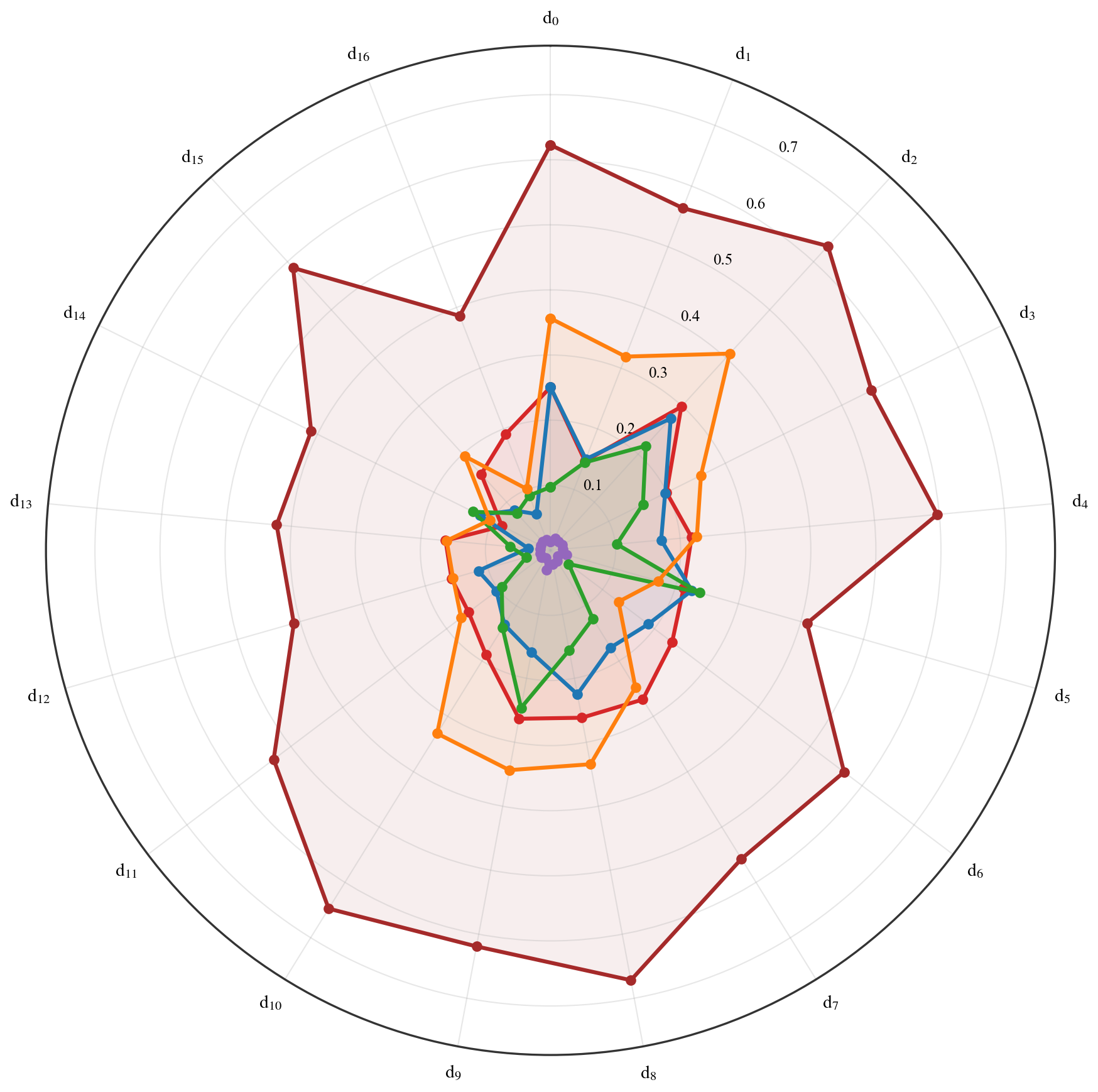}
			\caption{Humanoid-TD3-W}
		\end{subfigure}
		\vspace{0.5em}
		\begin{subfigure}{0.24\linewidth}
			\includegraphics[width=\linewidth]{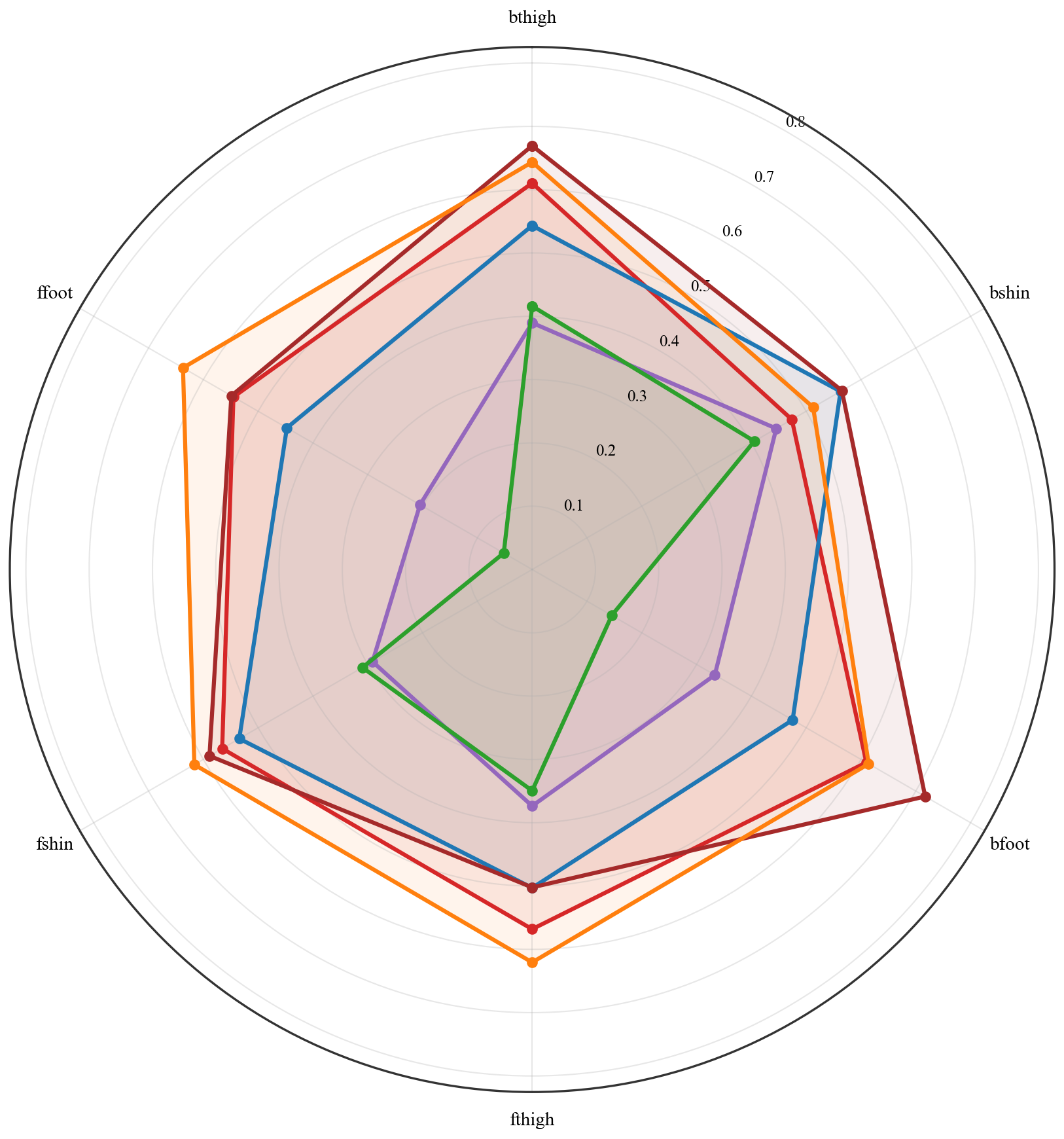}
			\caption{HalfCheetah-TD3-W}
		\end{subfigure}
		\hfill
		\begin{subfigure}{0.24\linewidth}
			\includegraphics[width=\linewidth]{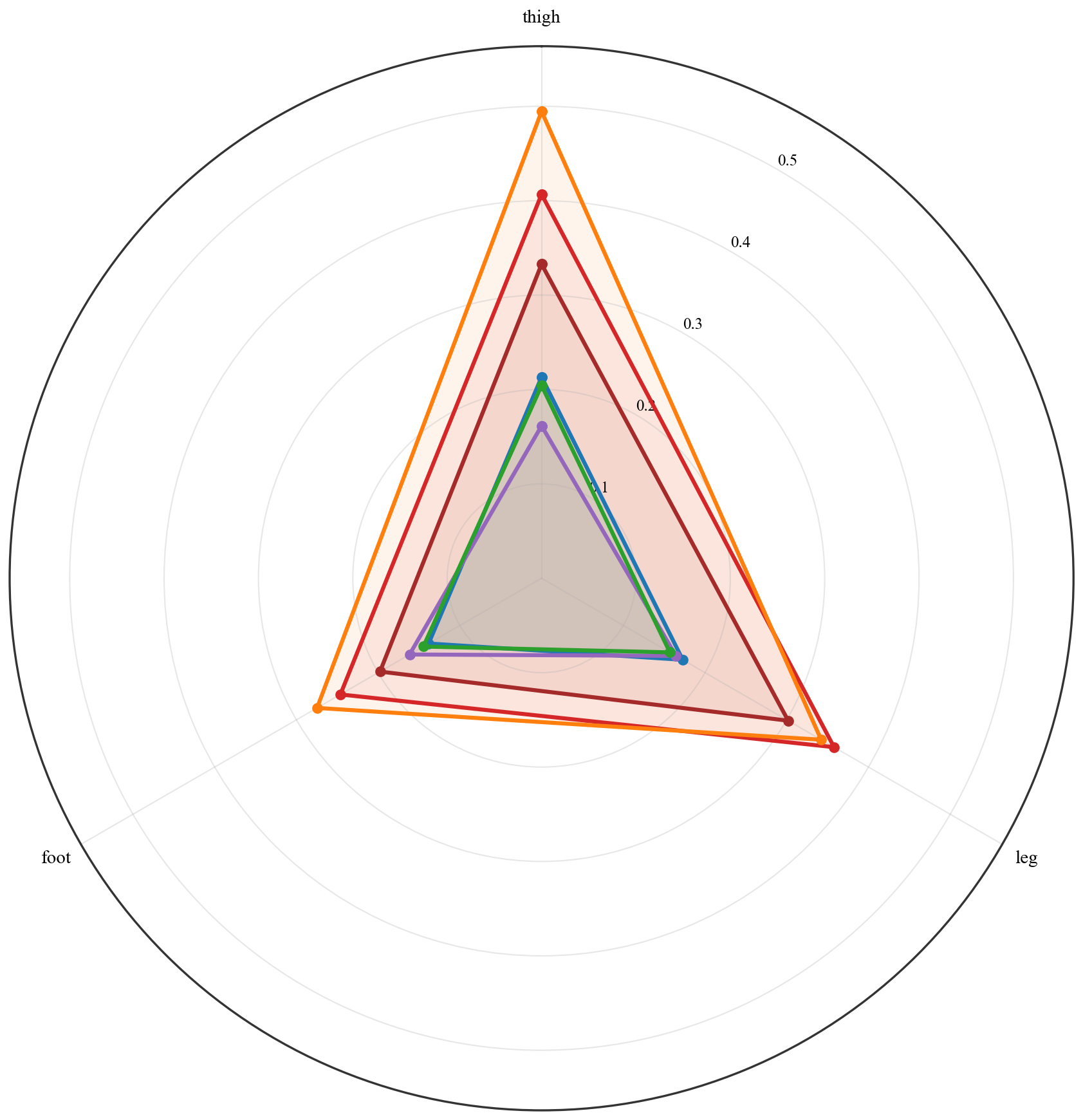}
			\caption{Hopper-TD3-W}
		\end{subfigure}
		
		\vspace{0.2em}
		\begin{subfigure}{0.24\linewidth}
			\includegraphics[width=\linewidth]{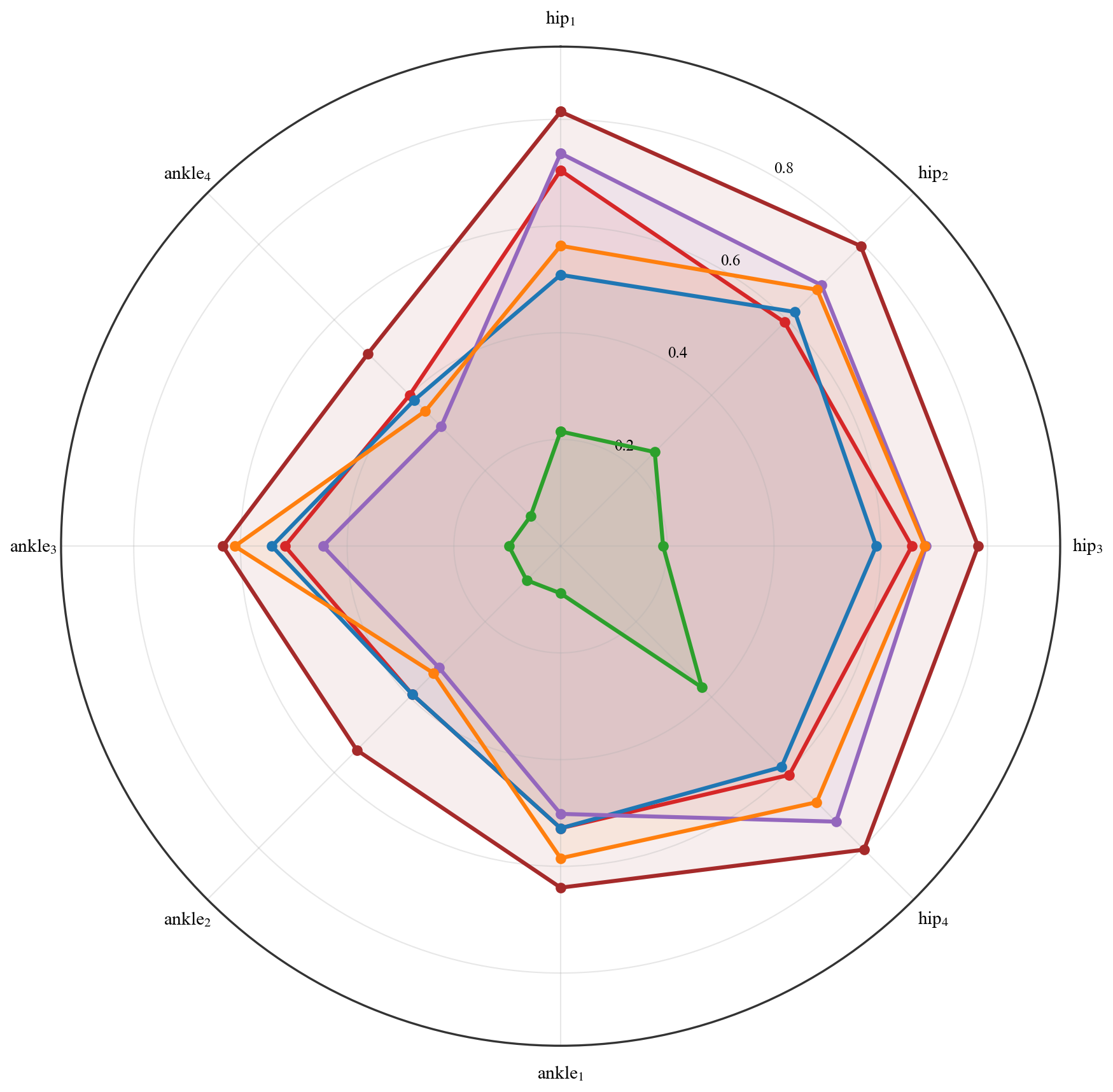}
			\caption{Ant-TD3-S}
		\end{subfigure}
		\hfill
		\begin{subfigure}{0.24\linewidth}
			\includegraphics[width=\linewidth]{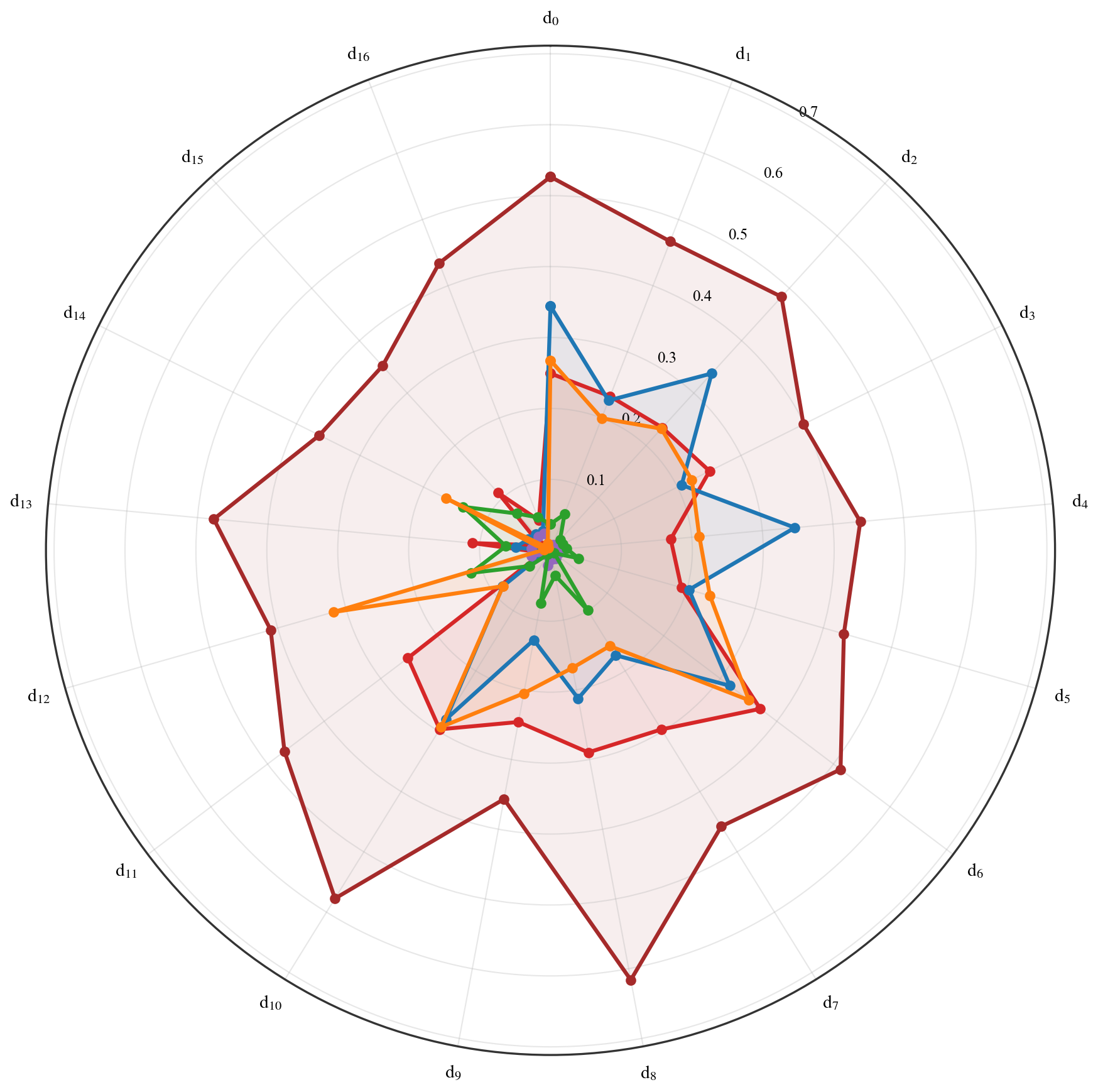}
			\caption{Humanoid-TD3-S}
		\end{subfigure}
		\vspace{0.5em}
		\begin{subfigure}{0.24\linewidth}
			\includegraphics[width=\linewidth]{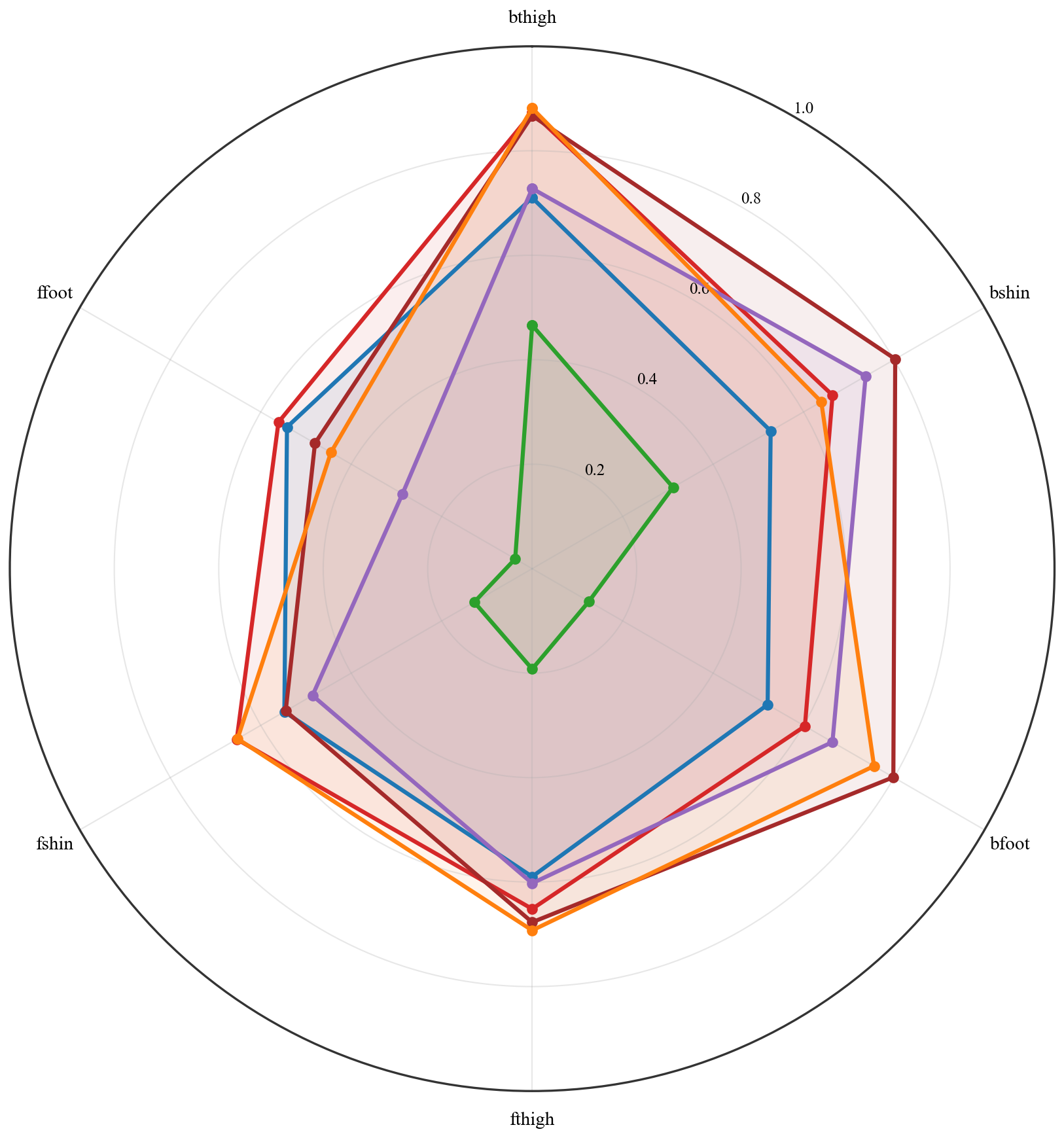}
			\caption{HalfCheetah-TD3-S}
		\end{subfigure}
		\hfill
		\begin{subfigure}{0.24\linewidth}
			\includegraphics[width=\linewidth]{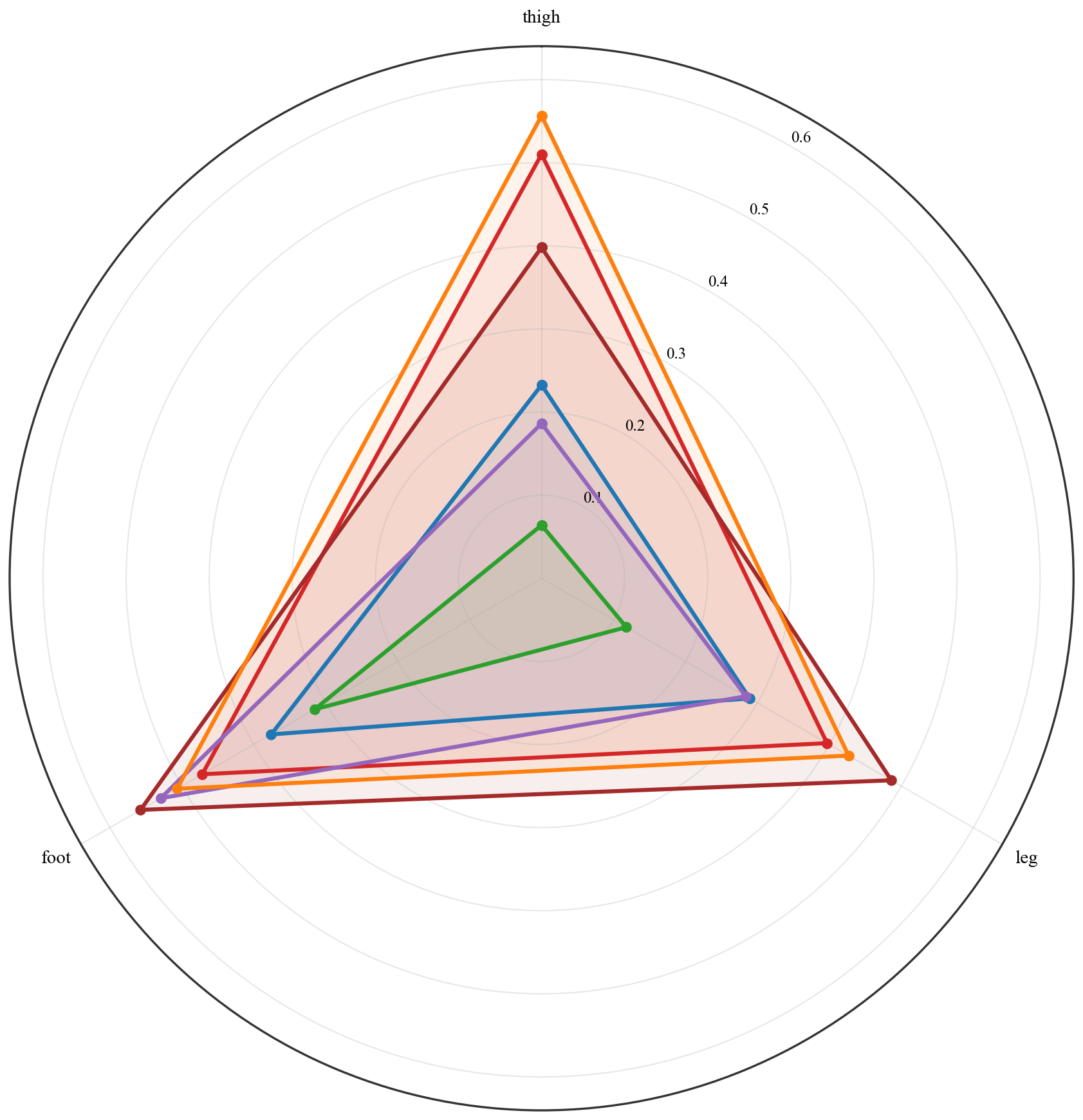}
			\caption{Hopper-TD3-S}
		\end{subfigure}
		
		\vspace{0.15cm}
		\centering
		\begin{tabular}{@{}l@{\hspace{1.5em}}l@{\hspace{1.5em}}l@{\hspace{1.5em}}l@{\hspace{1.5em}}l@{\hspace{1.5em}}l@{}}
			\colorbox{BoxPre}{\rule{0pt}{1pt}\rule{8pt}{0pt}} \raisebox{-2.0pt}{\scriptsize BoxPre+} &
			\colorbox{QP}{\rule{0pt}{1pt}\rule{8pt}{0pt}} \raisebox{-2.0pt}{\scriptsize QP} &
			\colorbox{SRadQP}{\rule{0pt}{1pt}\rule{8pt}{0pt}} \raisebox{-2.0pt}{\scriptsize SRad-QP} &
			\colorbox{SRadStrict}{\rule{0pt}{1pt}\rule{8pt}{0pt}} \raisebox{-2.0pt}{\scriptsize SRad-Strict} &
			\colorbox{dtanh}{\rule{0pt}{1pt}\rule{8pt}{0pt}} \raisebox{-2.0pt}{\scriptsize D-Tanh} &
			\colorbox{DDSRad}{\rule{0pt}{1pt}\rule{8pt}{0pt}} \raisebox{-2.0pt}{\scriptsize DD-SRad}
		\end{tabular}
		
		\caption{Per-dimension constraint utilization radar charts of all methods on four environments under tight heterogeneous constraints (SAC/TD3 backbones).}
		\label{fig:radar_all}
	\end{figure*}
	
	\section{Per-Dimension Utilization Radar Charts Under Heterogeneous Constraints}
	\label{app:hetero_radar}
	
	Figure~\ref{fig:hetero_radar_all} presents the per-dimension constraint utilization radar charts of DD-SRad under wide homogeneous (blue) and tight heterogeneous (red) constraints, for all four environments with SAC and TD3 backbones. Each axis corresponds to one action dimension, with radial value indicating $\mathbb{E}[|\Delta a^i|]/\delta^i$. The layered polygon structure under tight constraints---where each dimension independently approaches its own $\delta^i$ boundary rather than being uniformly truncated to $\min_i\delta^i$---provides visual corroboration of the $\ell_\infty$-tight coverage property established in Theorem~\ref{thm:exploration} and the quantitative Gap analysis of \S\ref{sec:hetero_test}.
	
	\begin{figure}[htbp]
		\centering
		\begin{subfigure}{0.24\linewidth}
			\includegraphics[width=\linewidth]{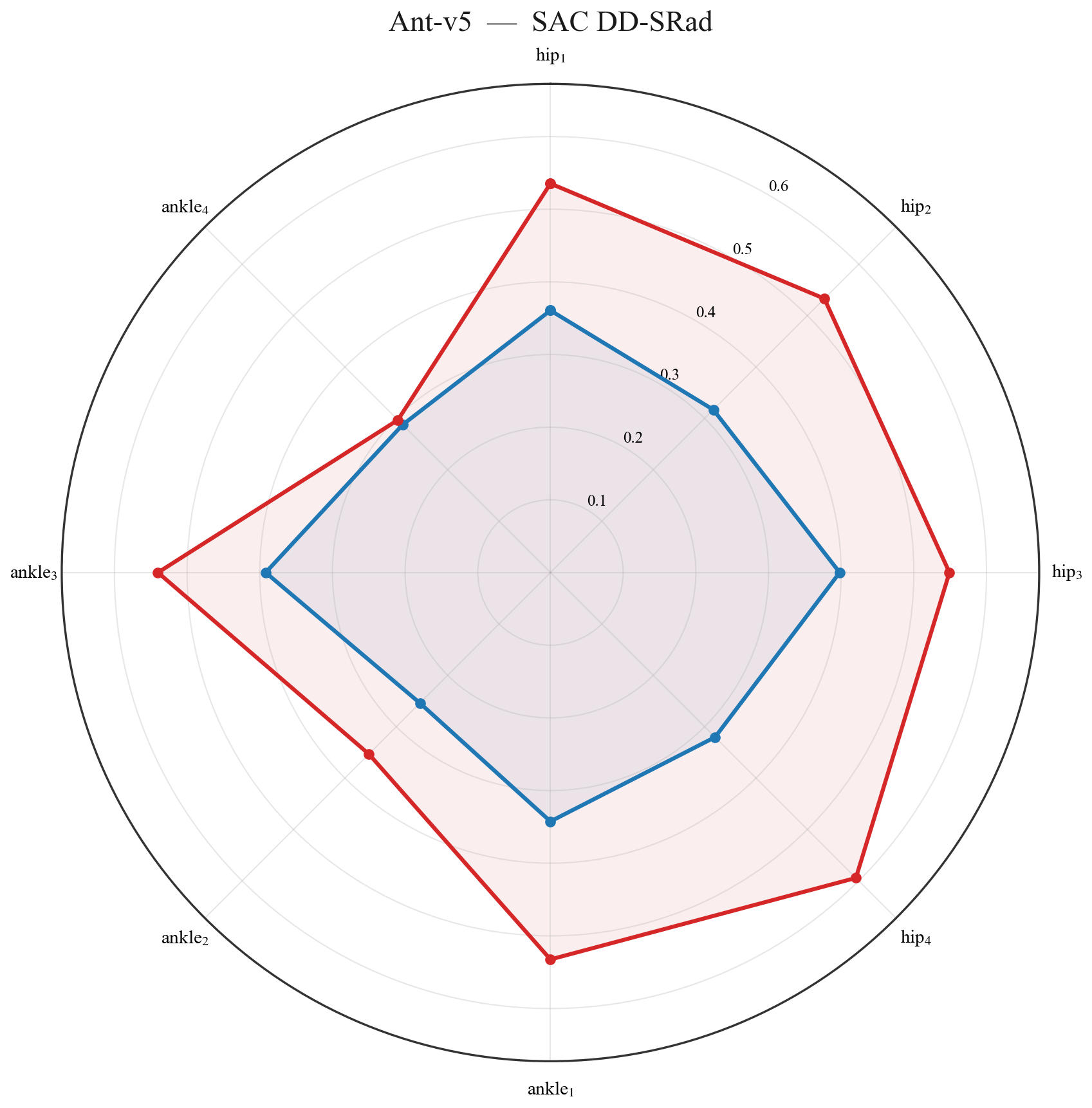}
			\caption{Ant-SAC}
		\end{subfigure}
		\hfill
		\begin{subfigure}{0.24\linewidth}
			\includegraphics[width=\linewidth]{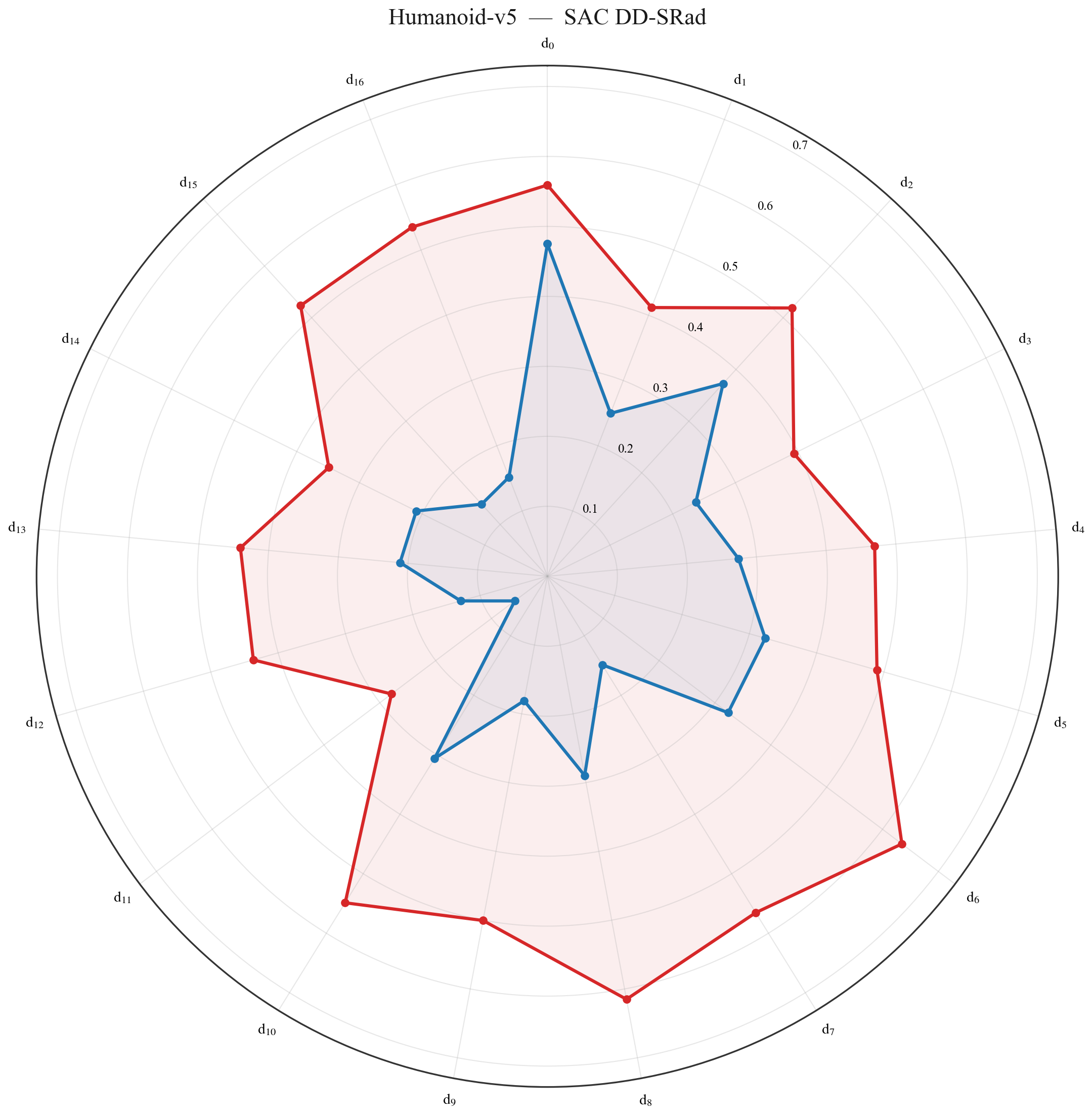}
			\caption{Humanoid-SAC}
		\end{subfigure}
		\hfill
		\begin{subfigure}{0.22\linewidth}
			\includegraphics[width=\linewidth]{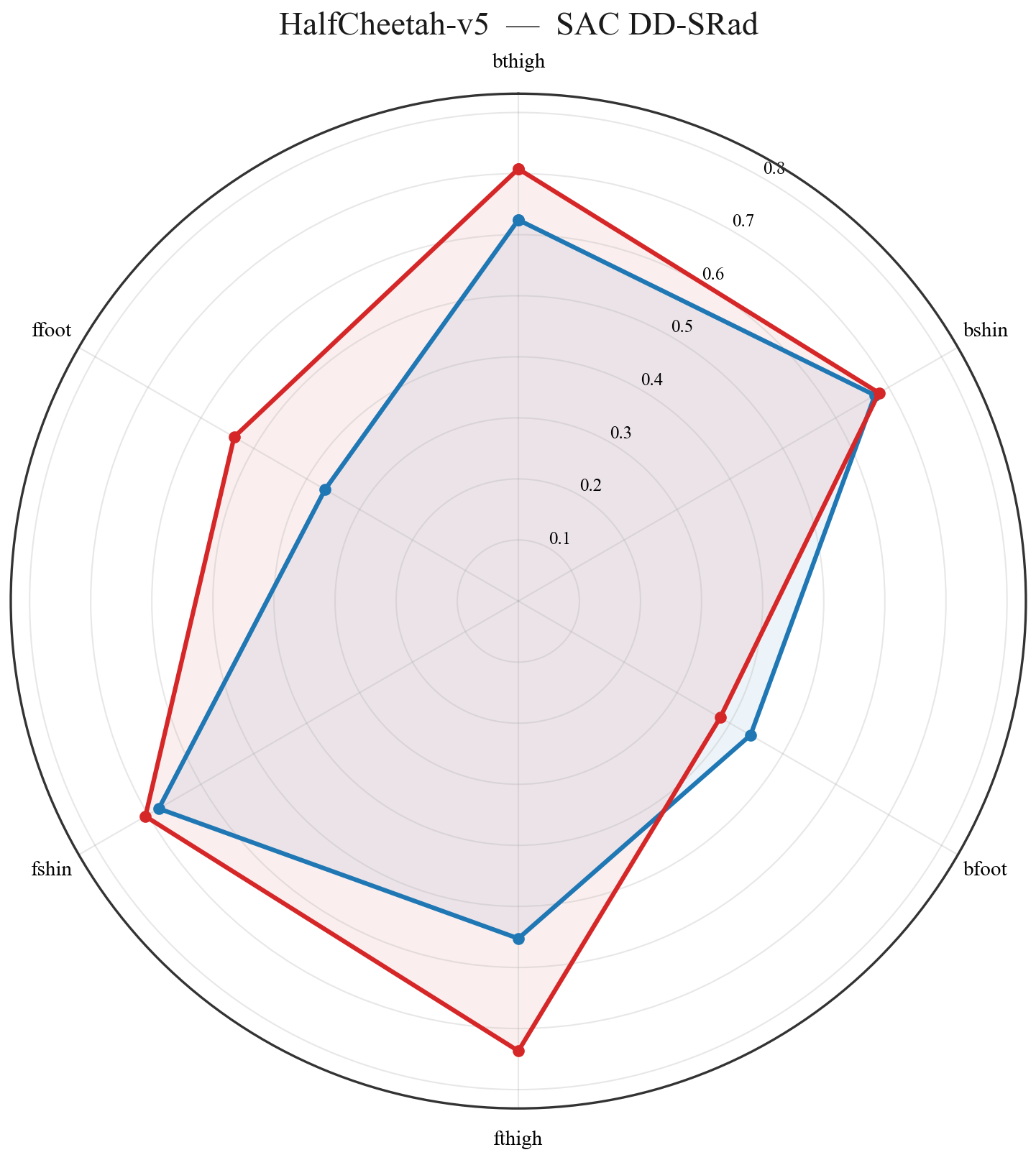}
			\caption{HalfCheetah-SAC}
		\end{subfigure}
		\hfill
		\begin{subfigure}{0.22\linewidth}
			\includegraphics[width=\linewidth]{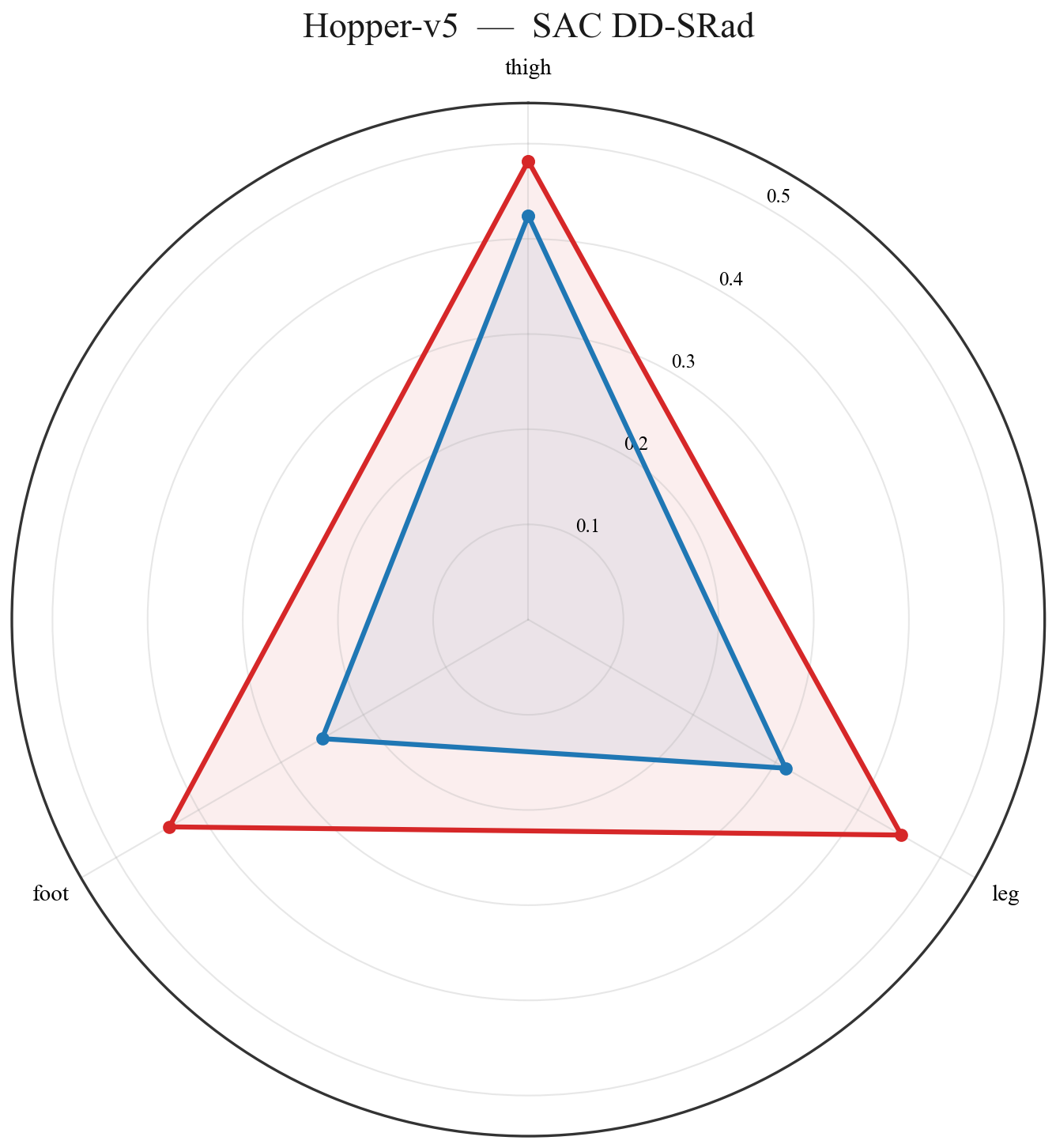}
			\caption{Hopper-SAC}
		\end{subfigure}
		
		\begin{subfigure}{0.24\linewidth}
			\includegraphics[width=\linewidth]{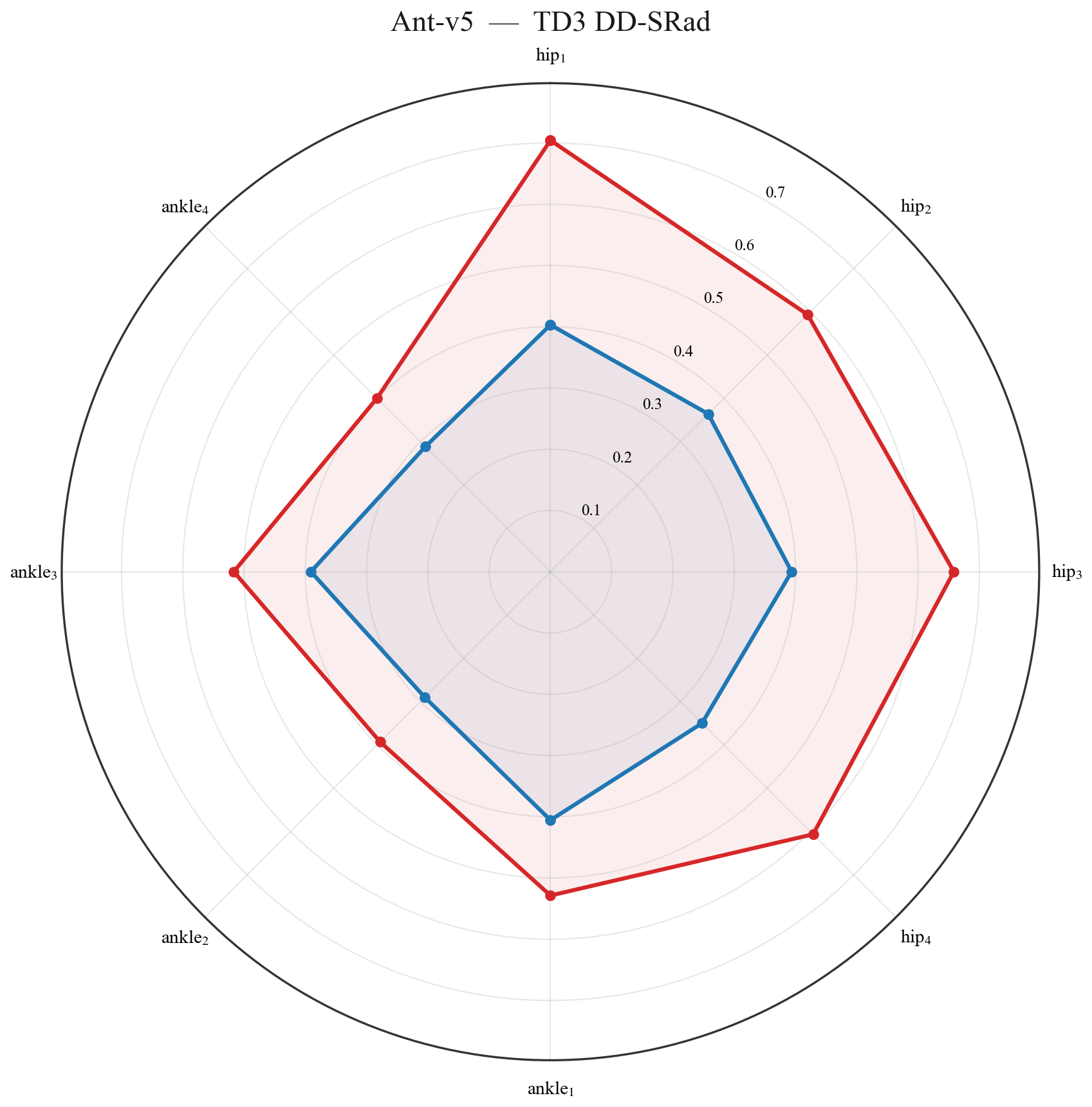}
			\caption{Ant-TD3}
		\end{subfigure}
		\hfill
		\begin{subfigure}{0.24\linewidth}
			\includegraphics[width=\linewidth]{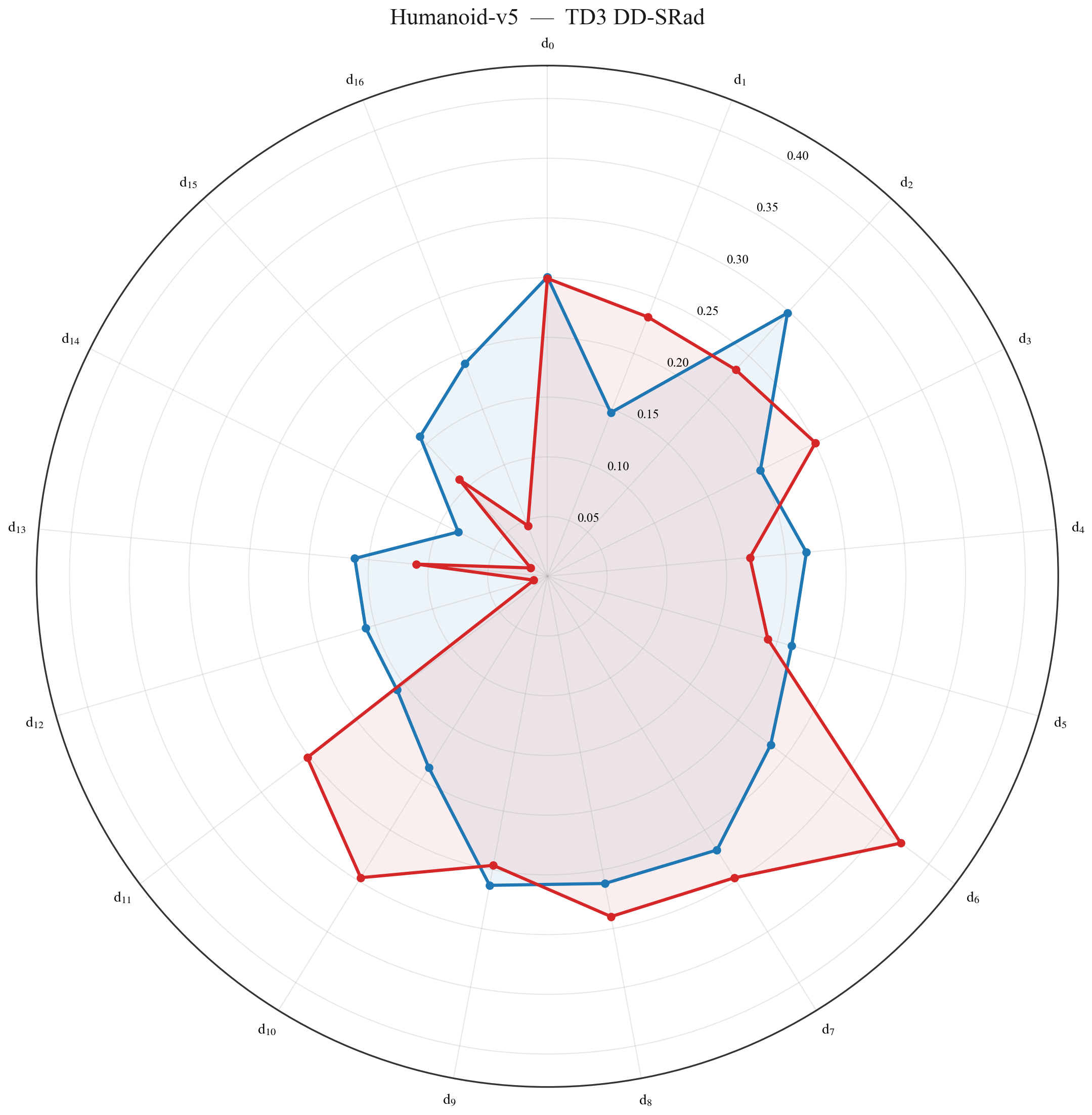}
			\caption{Humanoid-TD3}
		\end{subfigure}
		\hfill
		\begin{subfigure}{0.22\linewidth}
			\includegraphics[width=\linewidth]{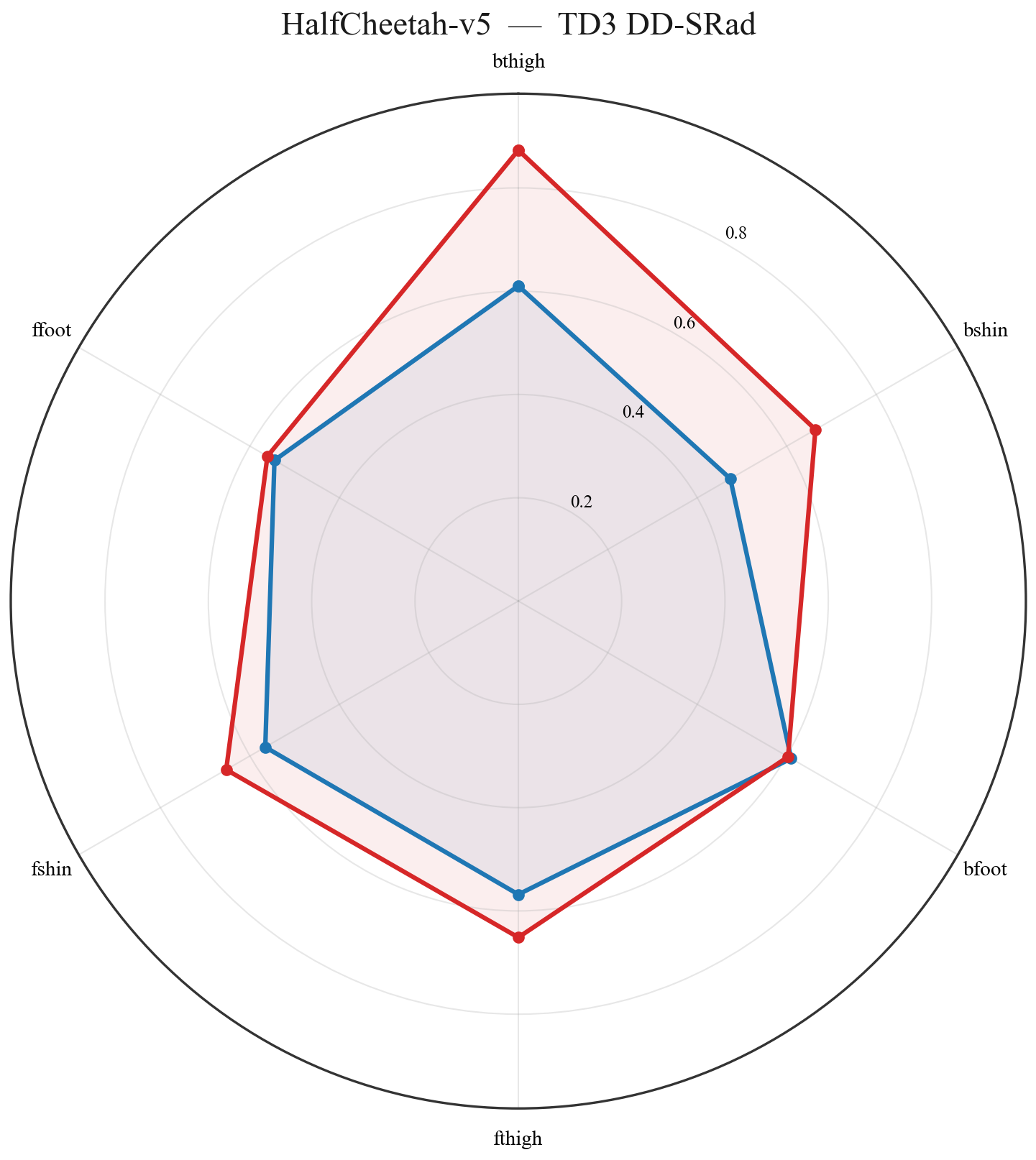}
			\caption{HalfCheetah-TD3}
		\end{subfigure}
		\hfill
		\begin{subfigure}{0.22\linewidth}
			\includegraphics[width=\linewidth]{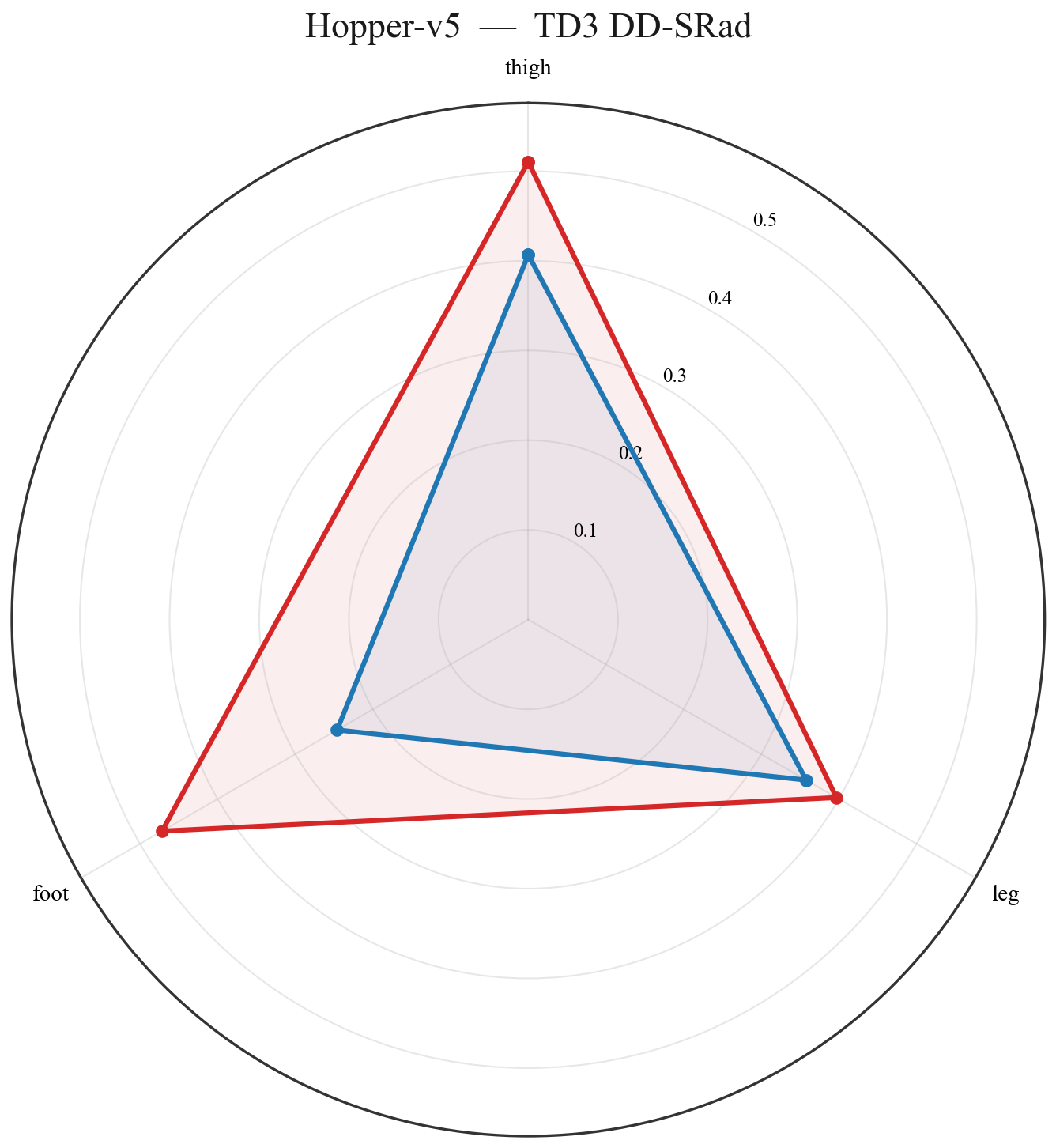}
			\caption{Hopper-TD3}
		\end{subfigure}
		\caption{Per-dimension constraint utilization radar charts of DD-SRad under wide homogeneous (blue) and tight heterogeneous (red) constraints with SAC/TD3 backbones.}
		\label{fig:hetero_radar_all}
	\end{figure}
	
	\section{Simulation Reward Settings}
	\label{app:reward}
	
	Both simulation robots adopt custom reward functions that completely replace the IsaacLab default reward terms. Table~\ref{tab:reward_functions} presents the main components of the reward $r_t = \sum_k w_k \cdot \phi_k$, where the curriculum coefficient is $c_f = \min(\bar{L}/L^*,\, 1)$, $\bar{L}$ is the exponential moving average episode length, and $L^*$ is the target episode length.
	
	\renewcommand{\arraystretch}{1.15}
	\begin{table}[htbp]
		\centering\footnotesize
		\caption{Reward function definitions for simulation robots (H1 Rough \& G1 Flat)}
		\label{tab:reward_functions}
		
		\begin{subtable}{\linewidth}
			\centering
			\caption{Unitree H1 Rough}
			\begin{tabular*}{\linewidth}{@{\extracolsep{\fill}}llc}
				\toprule
				Reward term & Formula & Weight \\
				\midrule
				Linear velocity tracking   & $\exp\!\bigl(-\|(v_{xy}-v^{\text{cmd}}_{xy})\|^2/0.25\bigr)$ & 1.5 \\
				Angular velocity tracking   & $\exp\!\bigl(-(w_z-w^{\text{cmd}}_z)^2/0.25\bigr)$           & 1.0 \\
				Survival bonus     & constant                                                           & 0.02 \\
				Orientation penalty     & $-\|g_{xy}\|^2 \cdot c_f$                                     & 3.0 \\
				Joint default position & $-\|\theta-\theta_{\text{default}}\|^2 \cdot c_f$             & 0.05 \\
				\bottomrule
			\end{tabular*}
		\end{subtable}
		
		\vspace{1em}
		
		\begin{subtable}{\linewidth}
			\centering
			\caption{Unitree G1 Flat (HolosomaRewardComputerV3)}
			\begin{tabular*}{\linewidth}{@{\extracolsep{\fill}}llc}
				\toprule
				Reward term & Formula & Weight \\
				\midrule
				Linear velocity tracking           & $\exp\!\bigl(-\|(v_{xy}-v^{\text{cmd}}_{xy})\|^2/0.25\bigr)$                                         & 1.5 \\
				Angular velocity tracking           & $\exp\!\bigl(-(w_z-w^{\text{cmd}}_z)^2/0.25\bigr)$                                                   & 1.0 \\
				Survival bonus             & constant                                                                                                   & 0.02 \\
				Orientation penalty             & $-\|g_{xy}\|^2 \cdot \max(c_f,0.3)$                                                                  & 3.0 \\
				Joint default position (non-knee) & $-\|\theta_{\neg k}-\theta^{\neg k}_{\text{def}}\|^2 \cdot c_f$                                       & 0.05 \\
				Joint default position (knee)   & $-\|\theta_k-\theta^k_{\text{def}}\|^2 \cdot c_f$                                                    & $0.05{\times}0.5$ \\
				Foot height             & $\exp\!\bigl(-\sum(h_f - h^{\text{swing}}_f)^2/(2\sigma_h^2)\bigr)$                                  & 1.0 \\
				Knee flexion           & $\exp\!\bigl(-(\theta_k - \theta^k_{\text{def}} - \kappa_{\text{flex}}\sin\phi_k)^2/(2\sigma_k^2)\bigr)$ & 1.5 \\
				Minimum knee angle             & $-\operatorname{clamp}(\theta_{\min} - \theta_k,\,0,\,\infty)^2 \cdot c_f$                           & 2.0 \\
				Stride length             & $\exp\!\bigl(-\sum(x^b_f - L\sin\phi_f)^2/(2\sigma_s^2)\bigr)$                                       & 1.5 \\
				Foot parallelism             & $-(\psi_{f,L}-\psi_{f,R})^2 \cdot c_f$                                                               & 1.0 \\
				Foot clearance             & $-\operatorname{clamp}(d_{\min}-|y_{f,L}-y_{f,R}|,\,0,\,\infty)^2 \cdot c_f$                        & 4.0 \\
				Anti-crossing               & $-\operatorname{clamp}(y_{f,R}-y_{f,L},\,0,\,\infty)^2 \cdot c_f$                                    & 5.0 \\
				Hip roll           & $-\|\theta_{\text{hip\_roll}}-\theta^{\text{hr}}_{\text{def}}\|^2 \cdot c_f$                          & 3.0 \\
				Hip yaw               & $-\|\theta_{\text{hip\_yaw}}-\theta^{\text{hy}}_{\text{def}}\|^2 \cdot c_f$                           & 3.0 \\
				Torso joint               & $-\|\theta_{\text{torso}}-\theta^{\text{torso}}_{\text{def}}\|^2 \cdot \max(c_f,0.3)$                & 2.0 \\
				Ankle pitch               & $-\|\theta_{\text{ap}}-\theta^{\text{ap}}_{\text{def}}\|^2 \cdot c_f$                                & 2.0 \\
				Foot slip             & $-\sum_f\|\dot{p}^{xy}_f\| \cdot \mathbf{1}[\|F_f\|>1\,\text{N}]$                                   & 0.5 \\
				\bottomrule
				\multicolumn{3}{l}{\footnotesize $\phi_f$: gait phase; $L$: stride length target; $d_{\min}$: minimum foot clearance; $\sigma$: Gaussian widths for each term.} \\
			\end{tabular*}
		\end{subtable}
		
	\end{table}
	\renewcommand{\arraystretch}{1}
	
	\section{Reachable Set Scatter  of H1 and G1}
	\label{app:sim_scatter}
	
	Figure~\ref{fig:sim_scatter} presents the empirical reachable sets of each method in the $(\Delta a^{L.\text{hip\_pitch}},\,\Delta a^{L.\text{ankle\_roll}})$ plane for H1 and G1, providing execution-level geometric verification complementary to the Gap metric in Table~\ref{tab:sim_full}(b).
	
	\begin{figure}[htbp]
		\centering
		
		\begin{subfigure}{0.99\linewidth}
			\includegraphics[width=\linewidth]{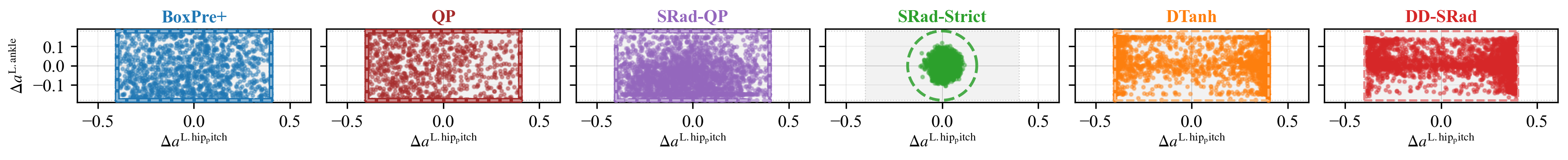}
			\caption{H1 Empirical Reachable Sets}
		\end{subfigure}
		
		\vspace{1em}
		
		\begin{subfigure}{0.99\linewidth}
			\includegraphics[width=\linewidth]{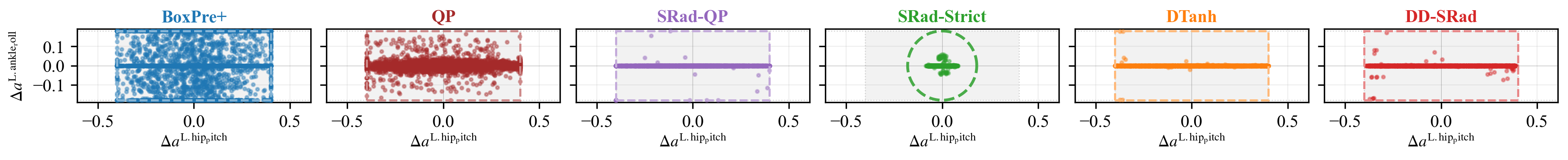}
			\caption{G1 Empirical Reachable Sets}
		\end{subfigure}
		
		\caption{Empirical Reachable Sets}
		\label{fig:sim_scatter}
	\end{figure}
	
	On H1, DD-SRad scatters uniformly within the constraint rectangle $[-\delta^\mathrm{hip},\delta^\mathrm{hip}]\times[-\delta^\mathrm{ankle},\delta^\mathrm{ankle}]$, with both dimensional rate budgets independently and fully utilized, consistent with the balanced utilization of Util-P$=0.38$, Util-D$=0.48$; D-Tanh scatter morphology closely resembles DD-SRad, reflecting that the shared $R_{\text{eff}}^i$ mechanism produces similar coverage structures under the same geometric constraints; SRad-Strict scatter concentrates within the disk of radius $R=\min_i\delta^i$, with the high-budget dimension (hip\_pitch) reachable set uniformly compressed to the tightest constraint magnitude, directly corresponding to the utilization collapse Util-P$=0.08$ and Fall$=100\%$.
	
	On G1, DD-SRad scatter distributes broadly along the $\Delta a^{L.\text{hip\_pitch}}$ axis and is highly concentrated near zero along $\Delta a^{L.\text{ankle\_roll}}$, forming a near-horizontal band structure---a direct manifestation of the position-adaptive $R_{\text{eff}}^i$ mechanism interacting with the flat terrain task structure: hip\_pitch joints bear gait propulsion (large $\delta$, fully utilized), while ankle\_roll joints have extremely low lateral stability demands under flat terrain's low perturbation, and the policy converges to near-zero ankle\_roll through independent per-dimension optimization. D-Tanh exhibits the same morphology, confirming that this solution structure is jointly determined by $\ell_\infty$ geometry and task rewards, independent of the specific squashing function; the task-metric gap (VTE: $0.138$ vs $0.207$, Return: $5473$ vs $5145$) reflects the effective gradient region advantage of spherical radial squashing over $\tanh$ (Proposition~\ref{prop:gradient}). BoxPre+ (Gap$=0.01$) forces uniform coverage---a geometric side effect of $\ell_\infty$ clip projection---preventing task-aligned allocation and directly manifesting as VTE$=0.224$ ($+62\%$ vs DD-SRad). SAC-Post(QP) covers the rectangle interior with reduced boundary density, consistent with QP's conservative margin at feasible-set vertices (Return$=4954$, $\approx10\%$ below DD-SRad). SRad-Strict concentrates within the $\ell_2$ disk of radius $R=\min_i\delta^i$, directly corresponding to Util-P$=0.08$ and Return$=1312$.
	
	The scatter morphology is fully consistent with the Gap metric in Table~\ref{tab:sim_full}(b), forming cross-platform verification from MuJoCo to IsaacLab: $\ell_\infty$-exact parameterization guarantees geometric coverage and enables task-gradient-guided dimensional allocation---balancing rate budgets under high-perturbation rough terrain (H1) and focusing on propulsion joints for precise velocity tracking on flat terrain (G1)---an adaptive capability neither BoxPre+'s clip nor SRad-Strict's global radius can achieve.
	
	\section{Per-Dimension Action Increment Probability Density of H1 and G1}
	\label{app:sim_action_distribution}
	
	To visualize the empirical distribution of executed action increments, the probability density of $\Delta a^i$ for each controlled dimension is plotted. Given $N$ observed samples $\{\Delta a_k^i\}_{k=1}^N$, the density is estimated via a normalized histogram with $B$ uniform bins over $[-\delta^i,\,\delta^i]$:
	\begin{equation}
		\hat{p}(x) \;=\; \frac{c_b}{N \cdot w}, \qquad x \in \text{bin } b,
		\label{eq:utilization}
	\end{equation}
	where $c_b$ is the count of samples in bin $b$ and $w = 2\delta^i / B$ is the bin width. This satisfies $\int_{-\delta^i}^{\delta^i} \hat{p}(x)\,dx = 1$, so $\hat{p}$ is a valid probability density function. Values of $\hat{p} \gg 1$ are physically meaningful and indicate that samples concentrate in a narrow sub-interval of $[-\delta^i,\delta^i]$.
	
	\begin{figure}[htbp]
		\centering
		
		\begin{subfigure}{0.99\linewidth}
			\includegraphics[width=\linewidth]{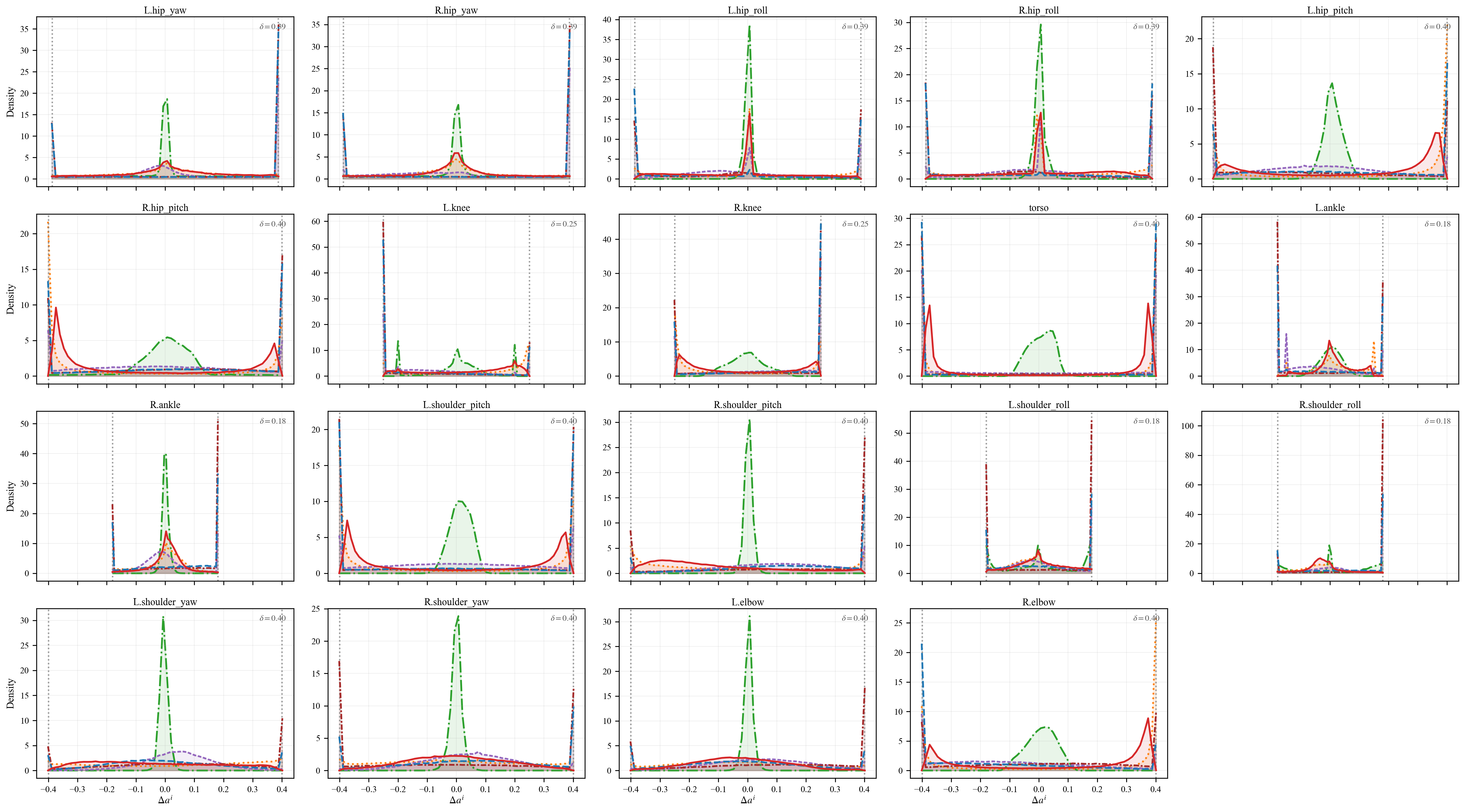}
			\caption{H1 per-dimension action distribution}
		\end{subfigure}
		
		\vspace{1em}
		
		\begin{subfigure}{0.99\linewidth}
			\includegraphics[width=\linewidth]{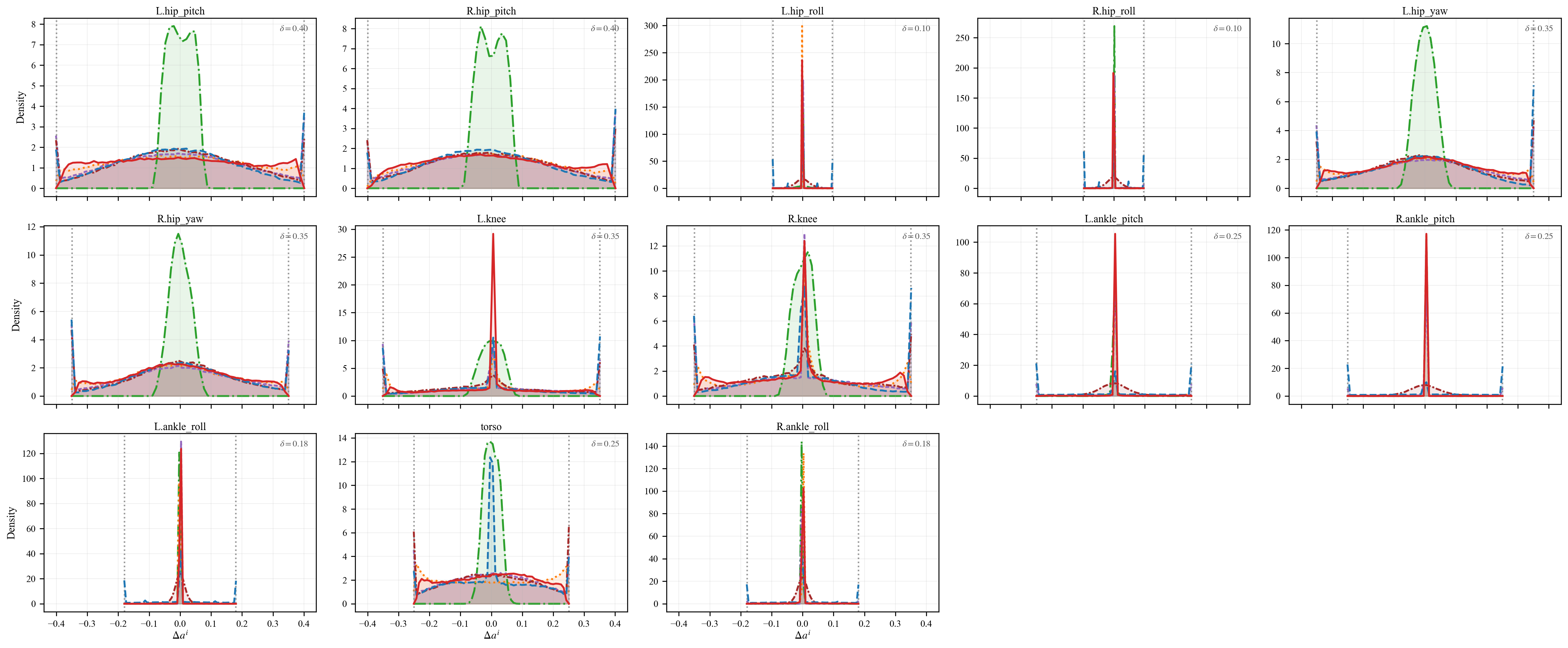}
			\caption{G1 per-dimension action distribution}
		\end{subfigure}
		
		\vspace{0.0cm}
		\centering
		\begin{tabular}{@{}l@{\hspace{1.5em}}l@{\hspace{1.5em}}l@{\hspace{1.5em}}l@{\hspace{1.5em}}l@{\hspace{1.5em}}l@{}}
			\colorbox{BoxPre}{\rule{0pt}{1pt}\rule{8pt}{0pt}} \raisebox{-2.0pt}{\scriptsize BoxPre+} &
			\colorbox{QP}{\rule{0pt}{1pt}\rule{8pt}{0pt}} \raisebox{-2.0pt}{\scriptsize QP} &
			\colorbox{SRadQP}{\rule{0pt}{1pt}\rule{8pt}{0pt}} \raisebox{-2.0pt}{\scriptsize SRad-QP} &
			\colorbox{SRadStrict}{\rule{0pt}{1pt}\rule{8pt}{0pt}} \raisebox{-2.0pt}{\scriptsize SRad-Strict} &
			\colorbox{dtanh}{\rule{0pt}{1pt}\rule{8pt}{0pt}} \raisebox{-2.0pt}{\scriptsize D-Tanh} &
			\colorbox{DDSRad}{\rule{0pt}{1pt}\rule{8pt}{0pt}} \raisebox{-2.0pt}{\scriptsize DD-SRad}
		\end{tabular}
		
		\caption{H1 and G1 per-dimension action distribution}
		\label{fig:sim_distribution}
	\end{figure}
	
	Figure~\ref{fig:sim_distribution} presents per-dimension probability density $\hat{p}(\Delta a^i)$ for each method on H1 and G1, forming a three-layer complementary verification alongside the mean utilization metric of the radar charts and the geometric morphology of the scatter plots. Density values far exceeding 1 indicate that action increments are highly concentrated in a narrow sub-interval of the constraint range, which is the probability density manifestation of the policy actively choosing a conservative solution in a specific dimension.
	
	\paragraph{H1 (Rough Terrain, 10 Dimensions).}
	DD-SRad exhibits a broad unilateral distribution on the hip\_pitch group ($\delta=0.40$) with probability mass clearly shifted toward the positive constraint boundary, reflecting the demand of rough terrain gait propulsion for sustained forward hip joint rate output---the policy fully exploits the rate budget of the large-$\delta$ dimensions to resist dynamic perturbations and maintain forward momentum. D-Tanh (orange) shows nearly identical distribution morphology in the same dimensions, consistent with the shared $R_{\text{eff}}^i$ geometry; the systematic return gap ($34.56$ vs $37.14$) arises from the difference in squashing function Jacobian decay rate, not from reachable set geometry. SRad-Strict (green) exhibits a clear bimodal structure in the hip\_pitch dimension with peaks far from the constraint boundary, directly corresponding to the mechanism by which the global $\ell_2$ radius $R=\min_i\delta^i$ uniformly compresses the hip joint large-$\delta$ step size to the ankle joint magnitude, blocking effective gait propulsion; its Fall$=100\%$ is the inevitable consequence of this geometric mismatch on rough terrain. In the ankle dimensions ($\delta=0.18$), DD-SRad shows a moderately spread distribution, indicating that the ankle joint rate budget receives substantive utilization under rough terrain's dynamic balance demands; SRad-Strict ankle dimensions also show extreme concentration, because global radius compression prevents the policy from producing effective recovery actions. BoxPre+ (blue) shows relatively flat distributions across all dimensions, reflecting the uniform projection property of $\ell_\infty$ clip within the feasible set, but this flat distribution corresponds to Return$=23.11$ significantly lower than DD-SRad, validating that uniform activation is not equivalent to task efficiency (§\ref{sec:sim_results}). In the arm and shoulder joint dimensions (shoulder/elbow, $\delta=0.18$--$0.40$), SRad-Strict similarly shows bimodal or concentrated distributions in large-$\delta$ dimensions, while DD-SRad and D-Tanh maintain broad distributions matched to task demands, with the overall pattern consistent with the leg joints.
	
	\paragraph{G1 (Flat Terrain, 13 Dimensions).}
	The G1 distribution plots reveal the task-adaptive structural differences across methods under flat terrain's low-perturbation conditions, forming a multi-perspective verification that closes layer by layer with the scatter plots and radar charts.
	
	In the hip\_pitch dimension ($\delta=0.40$, the main gait propulsion joint), DD-SRad (red) exhibits a broad distribution covering most of the constraint interval with probability mass well spread out, directly corresponding to the high utilization of Util-P$=0.40$. D-Tanh (orange) shows nearly identical distribution morphology to DD-SRad, further confirming that this broad utilization structure is jointly determined by the $R_{\text{eff}}^i$ mechanism and task rewards, not a specific product of the squashing function. SRad-Strict (green) exhibits a bimodal distribution in the hip\_pitch dimension with peaks concentrated near $\pm\min_i\delta^i=\pm0.10$, directly manifesting the mechanism by which the global $\ell_2$ radius uniformly compresses the rate of all dimensions to the tightest hip\_roll constraint magnitude, causing the actual reachable step size of the gait propulsion joint to be only 25\% of DD-SRad's.
	
	In distal joint dimensions such as ankle\_pitch ($\delta=0.25$) and ankle\_roll ($\delta=0.18$), both DD-SRad and D-Tanh exhibit highly concentrated zero-mean spikes ($\hat{p}\gg1$), i.e., probability mass highly concentrated in a narrow band around $\Delta a^i\approx0$. This is the optimal solution to which the policy actively converges through task gradient optimization under flat terrain's low-perturbation conditions: flat ground imposes extremely low demands on ankle joint lateral stabilization and pitch adjustment, so the policy compresses distal joint actions to near zero to reduce ineffective consumption, focusing on hip\_pitch and other large-$\delta$ propulsion joints for precise velocity tracking, ultimately achieving the lowest velocity tracking error of all methods at VTE$=0.138$. The $R_{\text{eff}}^i$ mechanism allows each dimension to independently respond to this task structure---the adaptive tightening of the effective radius near position boundaries for distal joints does not affect the full exploration of the hip\_pitch dimension, with the two completely decoupled at the parameterization level. BoxPre+ also shows some zero-vicinity concentration in ankle dimensions, but with broader spread than DD-SRad, reflecting that $\ell_\infty$ clip projection within the feasible set still forces retention of non-zero activation components in each dimension; the cost is quantified as VTE$=0.224$. In the hip\_roll dimension ($\delta=0.10$, G1's tightest constraint), all methods exhibit extremely high zero-mean spikes, reflecting both the strict constraint limitation and the fact that flat terrain imposes almost no demand on hip lateral roll---consistent with the task characteristics. SRad-Strict shows bimodal distributions in intermediate-$\delta$ dimensions such as torso, consistent with the expectation that the global $\ell_2$ radius produces varying degrees of mismatch across dimensions with different constraint magnitudes; the severe degradation of Return$=1312$ is the cumulative result of geometric mismatch across all dimensions.
	
	\section{Limitation}
	\label{sec:limitation}
	
	DD-SRad's diagonally positive definite Jacobian enables exact gradient propagation through off-policy backbones (SAC, TD3), but is less effective with on-policy algorithms such as PPO: the scalar advantage $A_t$ has substantially higher estimation variance under high-$\kappa$ constraints than the per-dimension $\nabla_a Q$ signal, requiring an action-conditioned advantage network with replay for effective on-policy extension. Additionally, the current design targets independent per-dimension box constraints; cross-dimensional coupled constraints require extensions such as a state-conditioned unified radius or neural $R_{\text{eff}}^i$.
	

{\small
		\begin{thebibliography}{99}
			
			\bibitem{wensing2017proprioceptive}
			P. M. Wensing, A. Wang, S. Seok, D. Otten, J. Lang, and S. Kim, ``Proprioceptive actuator design in the MIT Cheetah: Impact mitigation and high-bandwidth physical interaction for dynamic legged robots,'' \emph{IEEE Transactions on Robotics}, vol. 33, no. 3, pp. 509--522, June 2017.
			
			\bibitem{grandia2023perceptive}
			R. Grandia, F. Jenelten, S. Yang, F. Farshidian, and M. Hutter, ``Perceptive locomotion through nonlinear model-predictive control,'' \emph{IEEE Transactions on Robotics}, vol. 39, no. 5, pp. 3402--3421, Oct. 2023.
			
			\bibitem{bledt2018cheetah}
			G. Bledt, M. J. Powell, B. Katz, J. Di Carlo, P. M. Wensing, and S. Kim, ``MIT Cheetah 3: Design and control of a robust, dynamic quadruped robot,'' in \emph{2018 IEEE/RSJ International Conference on Intelligent Robots and Systems (IROS)}, Madrid, Spain, 2018, pp. 2245--2252.
			
			\bibitem{kim2019highly}
			D. Kim et al., ``Highly dynamic quadruped locomotion via whole-body impulse control and model predictive control,'' \emph{arXiv preprint arXiv:1909.06586}, 2019.
			
			\bibitem{pandala2022robust}
			A. Pandala, R. T. Fawcett, and K. A. Hamed, ``Robust predictive control for quadrupedal locomotion: Learning to close the gap between reduced and full-order models,'' \emph{IEEE Robotics and Automation Letters}, vol. 7, no. 3, pp. 6622--6629, 2022.
			
			\bibitem{williams2017information}
			G. Williams et al., ``Information theoretic MPC for model-based reinforcement learning,'' in \emph{2017 IEEE International Conference on Robotics and Automation (ICRA)}, Singapore, 2017, pp. 1714--1721.
			
			\bibitem{nagabandi2018neural}
			A. Nagabandi, G. Kahn, R. S. Fearing, and S. Levine, ``Neural network dynamics for model-based deep reinforcement learning with model-free fine-tuning,'' in \emph{2018 IEEE International Conference on Robotics and Automation (ICRA)}, Brisbane, Australia, 2018, pp. 7559--7566.
			
			\bibitem{achiam2017constrained}
			J. Achiam, D. Held, A. Tamar, and P. Abbeel, ``Constrained policy optimization,'' in \emph{Proceedings of the 34th International Conference on Machine Learning (ICML)}, 2017, pp. 22--31.
			
			\bibitem{tessler2018reward}
			C. Tessler, D. J. Mankowitz, and S. Mannor, ``Reward constrained policy optimization,'' in \emph{International Conference on Learning Representations (ICLR)}, 2019.
			
			\bibitem{zhang2020first}
			Y. Zhang, Q. Vuong, and K. W. Ross, ``First order constrained optimization in policy space,'' in \emph{Advances in Neural Information Processing Systems (NeurIPS)}, 2020.
			
			\bibitem{yang2022constrained}
			L. Yang et al., ``Constrained update projection approach to safe policy optimization,'' in \emph{Advances in Neural Information Processing Systems (NeurIPS)}, 2022.
			
			\bibitem{zhang2022P3O}
			L. Zhang et al., ``Penalized proximal policy optimization for safe reinforcement learning,'' in \emph{International Joint Conference on Artificial Intelligence (IJCAI)}, 2022.
			
			\bibitem{altman1999constrained}
			E. Altman, \emph{Constrained Markov Decision Processes}. CRC Press, 1999.
			
			\bibitem{kasaura2023benchmarking}
			K. Kasaura, S. Miura, T. Kozuno, R. Yonetani, K. Hoshino, and Y. Hosoe, ``Benchmarking actor-critic deep reinforcement learning algorithms for robotics control with action constraints,'' \emph{IEEE Robotics and Automation Letters}, vol. 8, no. 8, pp. 4449-4456, Aug. 2023.
			
			\bibitem{brahmanage2023flowpg}
			J. C. Brahmanage, J. Ling, and A. Kumar, ``FlowPG: Action-constrained policy gradient with normalizing flows,'' in \emph{Advances in Neural Information Processing Systems (NeurIPS)}, 2023.
			
			\bibitem{hung2025aram}
			W.-T. Hung et al., ``Efficient action-constrained reinforcement learning via acceptance-rejection method and augmented MDPs,'' in \emph{International Conference on Learning Representations (ICLR)}, 2025.
			
			\bibitem{stolz2025truncated}
			P. Stolz et al., ``Improving Stochastic Action-Constrained Reinforcement Learning via Truncated Distributions,'' \emph{arXiv preprint arXiv:2511.22406}, 2025.
			
			\bibitem{johannink2019residual}
			T. Johannink et al., ``Residual reinforcement learning for robot control,'' in \emph{IEEE International Conference on Robotics and Automation (ICRA)}, 2019, pp. 6023--6029.
			
			\bibitem{lee2020learning}
			J. Lee et al., ``Learning quadrupedal locomotion over challenging terrain,'' \emph{Science Robotics}, vol. 5, no. 47, 2020.
			
			\bibitem{li2020safe}
			J. Li, L. Sun, J. Chen, M. Tomizuka, and W. Zhan, ``A safe hierarchical planning framework for complex driving scenarios based on reinforcement learning,'' in \emph{2021 IEEE International Conference on Robotics and Automation (ICRA)}, Xi'an, China, 2021, pp. 2660--2666.
			
			\bibitem{rajeswaran2018learning}
			A. Rajeswaran et al., ``Learning complex dexterous manipulation with deep reinforcement learning and demonstrations,'' \emph{arXiv preprint arXiv:1709.10087}, 2017.
			
			\bibitem{gros2020data}
			S. Gros and M. Zanon, ``Data-driven economic NMPC using reinforcement learning,'' \emph{IEEE Transactions on Automatic Control}, vol. 65, no. 2, pp. 636--648, Feb. 2020.
			
			\bibitem{vecerik2017leveraging}
			M. Vecerík et al., ``Leveraging demonstrations for deep reinforcement learning on robotics problems with sparse rewards,'' \emph{arXiv preprint arXiv:1707.08817}, 2017.
			
			\bibitem{chebotar2019closing}
			Y. Chebotar et al., ``Closing the sim-to-real loop: Adapting simulation randomization with real world experience,'' in \emph{IEEE International Conference on Robotics and Automation (ICRA)}, 2019, pp. 8973--8979.
			
			\bibitem{ray2019benchmarking}
			A. Ray, J. Achiam, and D. Amodei, ``Benchmarking safe exploration in deep reinforcement learning,'' 2019. [Online]. Available: \url{https://cdn.openai.com/safexp-short.pdf}
			
			\bibitem{wachi2024survey}
			A. Wachi, X. Shen, and Y. Sui, ``A survey of constraint formulations in safe reinforcement learning,'' in \emph{International Joint Conference on Artificial Intelligence (IJCAI)}, 2024.
			
			\bibitem{gu2024review}
			S. Gu et al., ``A review of safe reinforcement learning: Methods, theories, and applications,'' \emph{IEEE Transactions on Pattern Analysis and Machine Intelligence}, vol. 46, no. 12, pp. 11216--11235, Dec. 2024.
			
			\bibitem{bertsekas1996stochastic}
			D. P. Bertsekas and S. E. Shreve, \emph{Stochastic Optimal Control: The Discrete Time Case}. Athena Scientific, 1996.
			
			\bibitem{lillicrap2016continuous}
			T. P. Lillicrap et al., ``Continuous control with deep reinforcement learning,'' in \emph{Proceedings of the Fourth International Conference on Learning Representations (ICLR)}, 2016.
			
			\bibitem{haarnoja2018soft}
			T. Haarnoja, A. Zhou, P. Abbeel, and S. Levine, ``Soft actor-critic: Off-policy maximum entropy deep reinforcement learning with a stochastic actor,'' in \emph{International Conference on Machine Learning (ICML)}, 2018, pp. 1861--1870.
			
			\bibitem{schulman2017proximal}
			J. Schulman, F. Wolski, P. Dhariwal, A. Radford, and O. Klimov, ``Proximal policy optimization algorithms,'' \emph{arXiv preprint arXiv:1707.06347}, 2017.
			
			\bibitem{fujimoto2018td3}
			S. Fujimoto, H. van Hoof, and D. Meger, ``Addressing function approximation error in actor-critic methods,'' in \emph{International Conference on Machine Learning (ICML)}, 2018, pp. 1587--1596.
			
			\bibitem{ames2017control}
			A. D. Ames, X. Xu, J. W. Grizzle, and P. Tabuada, ``Control barrier function based quadratic programs for safety critical systems,'' \emph{IEEE Transactions on Automatic Control}, vol. 62, no. 8, pp. 3861--3876, Aug. 2017.
			
			\bibitem{pham2018optlayer}
			T.-H. Pham, G. De Magistris, and R. Tachibana, ``OptLayer -- Practical constrained optimization for deep reinforcement learning in the real world,'' in \emph{2018 IEEE International Conference on Robotics and Automation (ICRA)}, Brisbane, Australia, 2018, pp. 6236--6243.
			
			\bibitem{dalal2018safe}
			G. Dalal, K. Dvijotham, M. Vecerik, T. Hester, C. Paduraru, and Y. Tassa, ``Safe exploration in continuous action spaces,'' \emph{arXiv preprint arXiv:1801.08757}, 2018.
			
			\bibitem{singletary2021safety}
			A. Singletary, S. Kolathaya, and A. D. Ames, ``Safety-critical kinematic control of robotic systems,'' \emph{IEEE Control Systems Letters}, vol. 6, pp. 139--144, 2022.
			
			\bibitem{cvflows2025}
			J. C. Brahmanage, J. Ling, and A. Kumar, ``Leveraging constraint violation signals for action-constrained reinforcement learning,'' in \emph{Proceedings of the AAAI Conference on Artificial Intelligence}, 2025.
			
			\bibitem{seo2025fastsac}
			Y. Seo, C. Sferrazza, J. Chen, G. Shi, R. Duan, and P. Abbeel, ``Learning sim-to-real humanoid locomotion in 15 minutes,'' \emph{arXiv preprint arXiv:2512.01996}, 2025.
			
		\end{thebibliography}
	}
\end{document}